\newtheorem{theorem}{Theorem}
\newtheorem{definition}{Definition}
\newcommand\shrink[1]{}
\def\n(#1){\bar{#1}}
\def\pr{{\it Pr}}
\def\E{{\bf E}}
\def\e{{\bf e}}
\def\U{{\bf U}}
\def\u{{\bf u}}
\def\X{{\bf X}}
\def\x{{\bf x}}
\def\Y{{\bf Y}}
\def\y{{\bf y}}
\def\Z{{\bf Z}}
\def\eql(#1,#2){{#1\!\!=\!#2}}
\def\eql(#1,#2){{#1\!=\!#2}}
\def\eproof{{\hfill$\Box$}}
\newenvironment{proof}[1][Proof]
			   {\begin{trivlist}\item[\hskip \labelsep {\bfseries #1}]}
			   {~\eproof \end{trivlist}}
\def\pr{{P}}
\def\pstar{P^\star}
\def\Pstar{{\cal P}^\star}
\def\sumnode(#1,#2){{+(#1,#2)}}
\def\prodnode(#1,#2){{*(#1,#2)}}
\def\testnode(#1,#2,#3,#4){{[#1 \geq  #2 \: ?\:  #3 : #4]}}
\begin{document}

\begin{frontmatter}

\title{On the Relative Expressiveness of Bayesian and Neural Networks$^\star$\footnote{${}^\star$ This article is an extended and revised version of \cite{ChoiDarwiche18}, with more theoretical and experimental results.}}

\author[uclaaddress]{Arthur Choi}
\ead{aychoi@cs.ucla.edu}
\author[zjuaddress]{Ruocheng Wang}
\ead{rchwang@outlook.com}
\author[uclaaddress]{Adnan Darwiche}
\ead{darwiche@cs.ucla.edu}

\address[uclaaddress]{Computer Science Department, University of California, Los Angeles, USA}
\address[zjuaddress]{College of Computer Science and Technology, Zhejiang University, Hangzhou, Zhejiang, China}

\begin{abstract}
A neural network computes a function.  A central property of neural networks is that they are ``universal approximators:'' for a given continuous function, there exists a neural network that can approximate it arbitrarily well, given enough neurons (and some additional assumptions).  In contrast, a Bayesian network is a model, but each of its queries can be viewed as computing a function.  In this paper, we identify some key distinctions between the functions computed by neural networks and those by marginal Bayesian network queries, showing that the former are more expressive than the latter.  Moreover, we propose a simple augmentation to Bayesian networks (a testing operator), which enables their marginal queries to become ``universal approximators.''
\end{abstract}

\begin{keyword}
Bayesian networks, neural networks, arithmetic circuits, function approximation
\end{keyword}

\end{frontmatter}

\setcounter{footnote}{0}

\section{Introduction}

The field of artificial intelligence (AI) has seen two major milestones throughout its history. Shortly after the field was born in the 1950s, the focus turned to {\em symbolic, model-based} approaches, which were premised on the need to represent and reason with domain knowledge, and exemplified by the use of logic to represent such knowledge~\citep{McCarthy59}.  In the 1980s, the focus turned to {\em probabilistic, model-based} approaches, as exemplified by Bayesian networks and probabilistic graphical models more generally (first major milestone)~\citep{Pearl88b}.  Starting in the 1990s, and as data became abundant, these probabilistic models provided the foundation for much of the research in machine learning, where models were learned either generatively or discriminatively from data.  Recently, the field shifted its focus to {\em numeric, function-based} approaches, as exemplified by neural networks, which are trained discriminatively using labeled data (deep learning, second major milestone)~\citep{Goodfellow-et-al-2016,HintonOT06,BengioLPL06,RanzatoPCL06,rosenblatt1958perceptron,mcculloch1943logical}.  Perhaps the biggest surprise with the second milestone is the extent to which certain tasks, associated with perception or limited forms of cognition, can be approximated using functions (i.e., neural networks) that are learned purely from labeled data, without the need for modeling or reasoning~\citep{darwicheCACM18}.

While this evolution of the field has increased our abilities, the emerging techniques have been pursued by somewhat independent research communities.  The price has been a lack of enough integration and fusion of the various methods, which hinders the realization of their collective benefits.  {\em Logic} provides a rich framework for representing knowledge in the form of domain constraints and comes with profound reasoning mechanisms.  {\em Probabilistic graphical models} excel at capturing uncertainty, causal knowledge, and independence information.  Both of these frameworks provide a foundation for capturing domain knowledge of various types and for reasoning
with such knowledge. 
{\em Neural networks} are effective function approximators, allowing one to approximate specific and narrow tasks by simply fitting a complex function (i.e., deep neural network) to data in the form of input-output pairs---again,
without the need for modeling or reasoning.

Each of these frameworks has its shortcomings though.  
Symbolic models are too coarse to capture certain phenomena and, in their pure form, miss out on exploiting data and the wealth of information it may contain.  
Probabilistic graphical models, in their generative form, address this limitation especially when they integrate symbolic knowledge. However, in their discriminative form,
these models have been outperformed by neural networks as a realization of the function-based approach to AI. 
While most of the recent success in AI has been due to neural networks, we are now starting to realize their limits too: 
they are data hungry, may not generalize beyond the given data, can be quite brittle, and pose challenges for explanation and verification. 
Interestingly, it is these shortcomings that models, whether symbolic or probabilistic, can help alleviate.  

Hence, a key challenge and opportunity for AI today lies in {\em fusing} these approaches to realize their collective benefits.\footnote{We distinguish between integration and fusion. {\em Integration} may refer to an intelligent agent architecture in which the components are based on different approaches, but work together to complement each other. {\em Fusion} may refer to a model-based approach that is empowered by functions, or a function-based approach that is empowered by models. Our focus in this paper is on fusion, particularly the empowerment of function-based approaches with domain knowledge (i.e., models).}
We take some initial steps towards this goal in this paper by showing how models can empower the function-based approach to AI.
We focus on probabilistic graphical models in the form of Bayesian networks, but our interest is ultimately in models that combine probabilistic and symbolic 
knowledge; see, e.g., \citep{ShenChoiDarwiche18,KisaVCD14,Poole2003,halpern1990analysis,nilsson1986probabilistic}. 
We start with a set of observations that are known in the literature but that together lead to a dilemma. 
Our main contribution is a resolution to this dilemma, which we argue has major implications on the quest of fusing model-based and function-based approaches to AI.

Our observations are as follows.
First, a query posed to a Bayesian network model can be viewed as inducing (and evaluating) a function, which can be represented using an Arithmetic Circuit (AC) \citep{Darwiche03,ChoiDarwiche17}.  
Next, Bayesian network queries (and, hence, ACs) can be trained discriminatively using labeled data and gradient descent methods, leading to a realization of the function-based approach currently 
practiced using neural networks---except that a neural network represents only one function, while a Bayesian network represents many functions (one for each query). 
Finally, the functions induced by Bayesian network queries can integrate background knowledge of various forms, suggesting a more principled, if not more sophisticated, function-based approach compared to neural networks.
Now the dilemma: recent developments in AI indicate that neural networks outperform Bayesian networks, and probabilistic graphical models more generally, as a realization of the function-based approach,
despite the ability of the latter to integrate background knowledge into the learned functions.

Our first contribution towards resolving this dilemma is highlighting the following observation:
Bayesian and neural networks induce classes of functions that differ in expressiveness.  It is known that neural networks are {\em universal approximators,} which means they can approximate any function to an arbitrary small error. However, the functions induced by marginal Bayesian network queries are less expressive as they correspond to multi-linear functions or quotients of such functions (this is also known but was never discussed in this context). Our second contribution: We address this expressiveness gap by proposing a simple extension to Bayesian networks and showing that this extension can also induce functions that are universal approximators. When the newly induced functions are represented by circuits and trained using labeled data, we obtain a function-based approach that is as expressive as neural networks, but that can also 
integrate the knowledge embodied in a Bayesian network. 
In other words, we can now synthesize function structures from models and then learn their parameters from labeled data. 
The resulting function-based approach can therefore benefit from what models have to offer: 
less dependence on data, improved generality and robustness, better prospects for explainability and the potential for more direct formal guarantees on behavior. 

This paper is structured as follows. We review in Section~\ref{sec:functions} the class of functions induced by neural and Bayesian networks, while identifying the corresponding gap in expressiveness. We then propose a new class of Bayesian networks in Section~\ref{sec:tbn}, {\em Testing Bayesian Networks~(TBNs),} whose queries can be computed using {\em Testing Arithmetic Circuits~(TACs)} that we discuss in
Section~\ref{sec:tac}. We show in Section~\ref{sec:uat} that these functions are universal approximators and reveal their specific functional form in Section~\ref{sec:f-form}.  
We then show some experimental results in Section~\ref{sec:exp} and conclude in Section~\ref{sec:conclusion}.
The appendices include the method we used to train TACs from labeled data, the full proof of the universal approximation theorem, and an algorithm for compiling TBN queries into TACs. 

\section{Technical Background}
\label{sec:functions}

We next review the class of functions represented by neural networks. We also highlight previous results, allowing us to view Bayesian network queries as inducing and evaluating functions. 
The goal is to pinpoint an expressiveness gap between the two classes of functions, which we address in Section~\ref{sec:tbn}.

\subsection{Neural Networks as Functions}

\begin{figure}[tb]
 \centering
 \subfigure[A neural network structure]{\label{fig:structure}
   \includegraphics[width=0.3\linewidth]{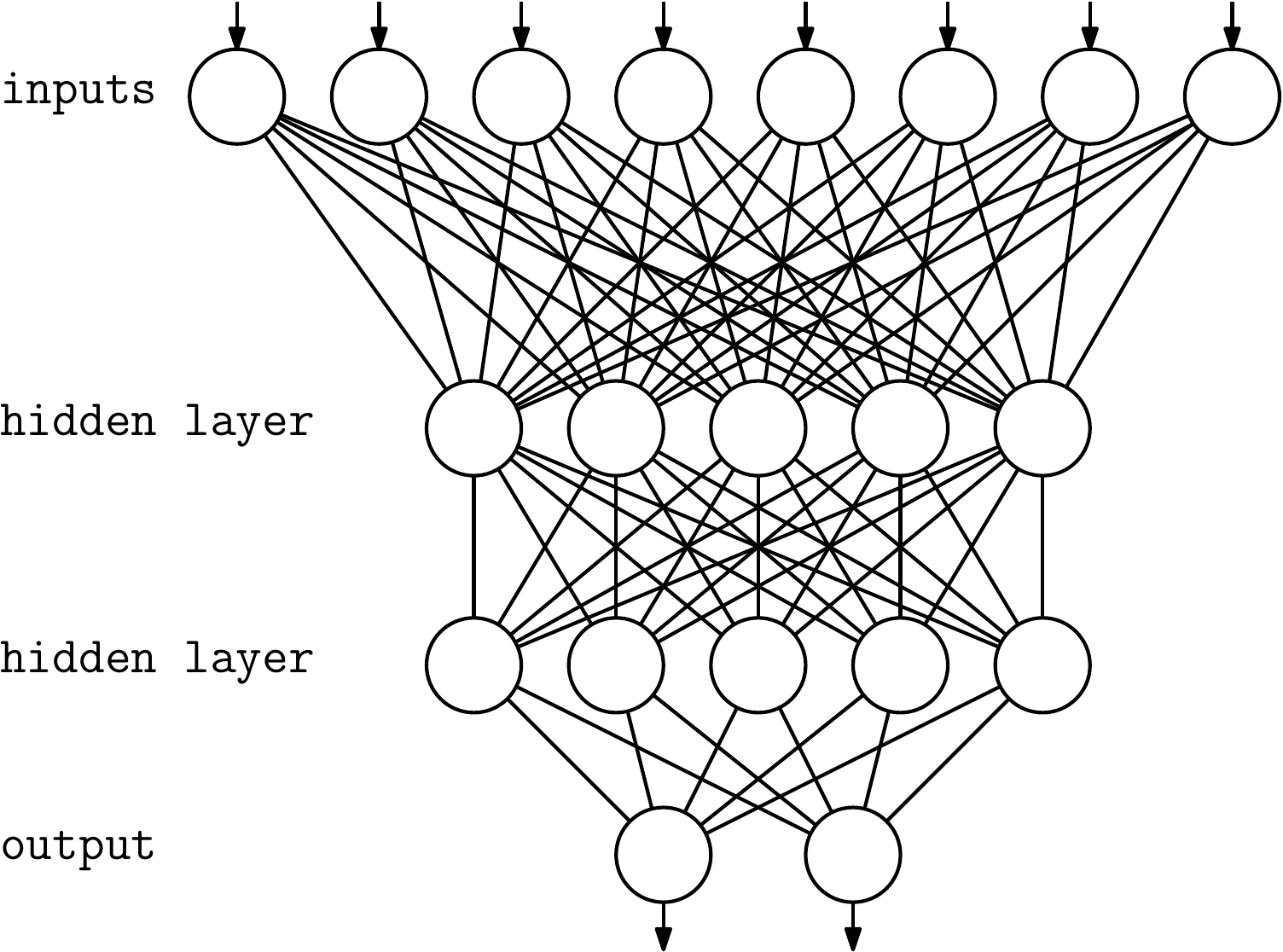}}
   \qquad
 \subfigure[A mathematical model of a neuron]{\label{fig:neuron}
   \raisebox{.5\height}{
     \includegraphics[width=0.3\linewidth]{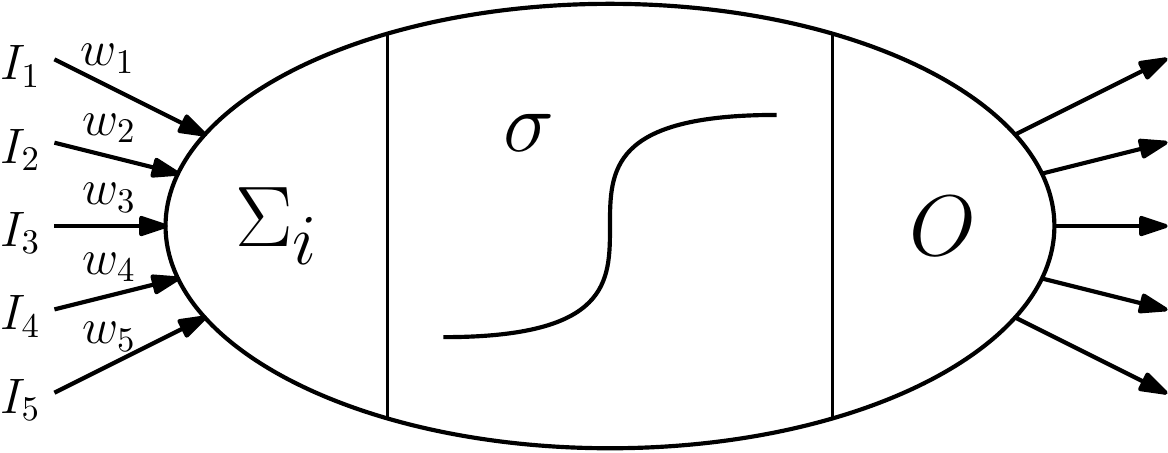}}}
\caption{A neural network and a neuron. \label{fig:neural-network}}
\end{figure}

\begin{figure}[tb]
  \centering
 \includegraphics[width=0.25\linewidth]{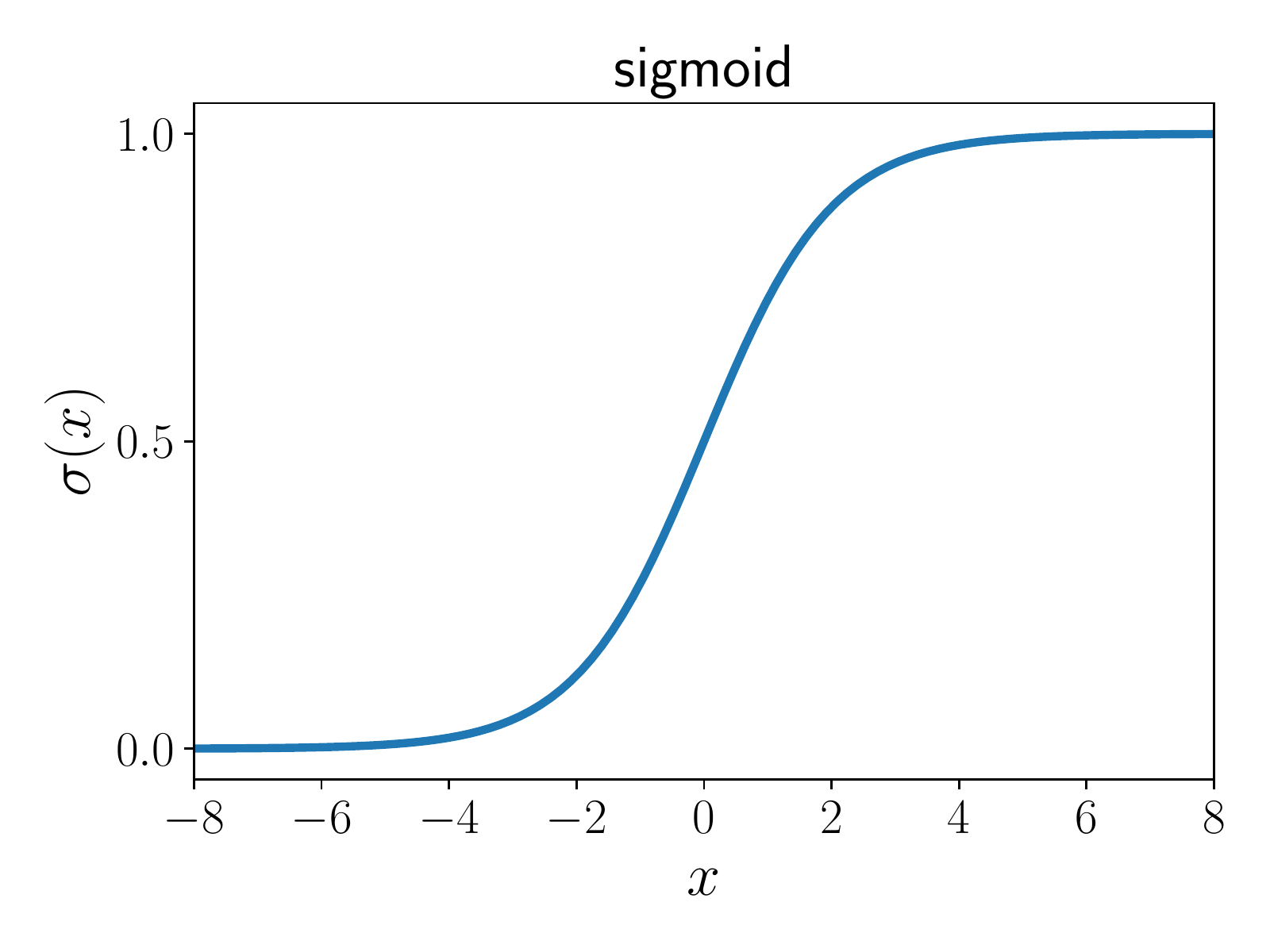}
 \qquad \qquad
 \includegraphics[width=0.25\linewidth]{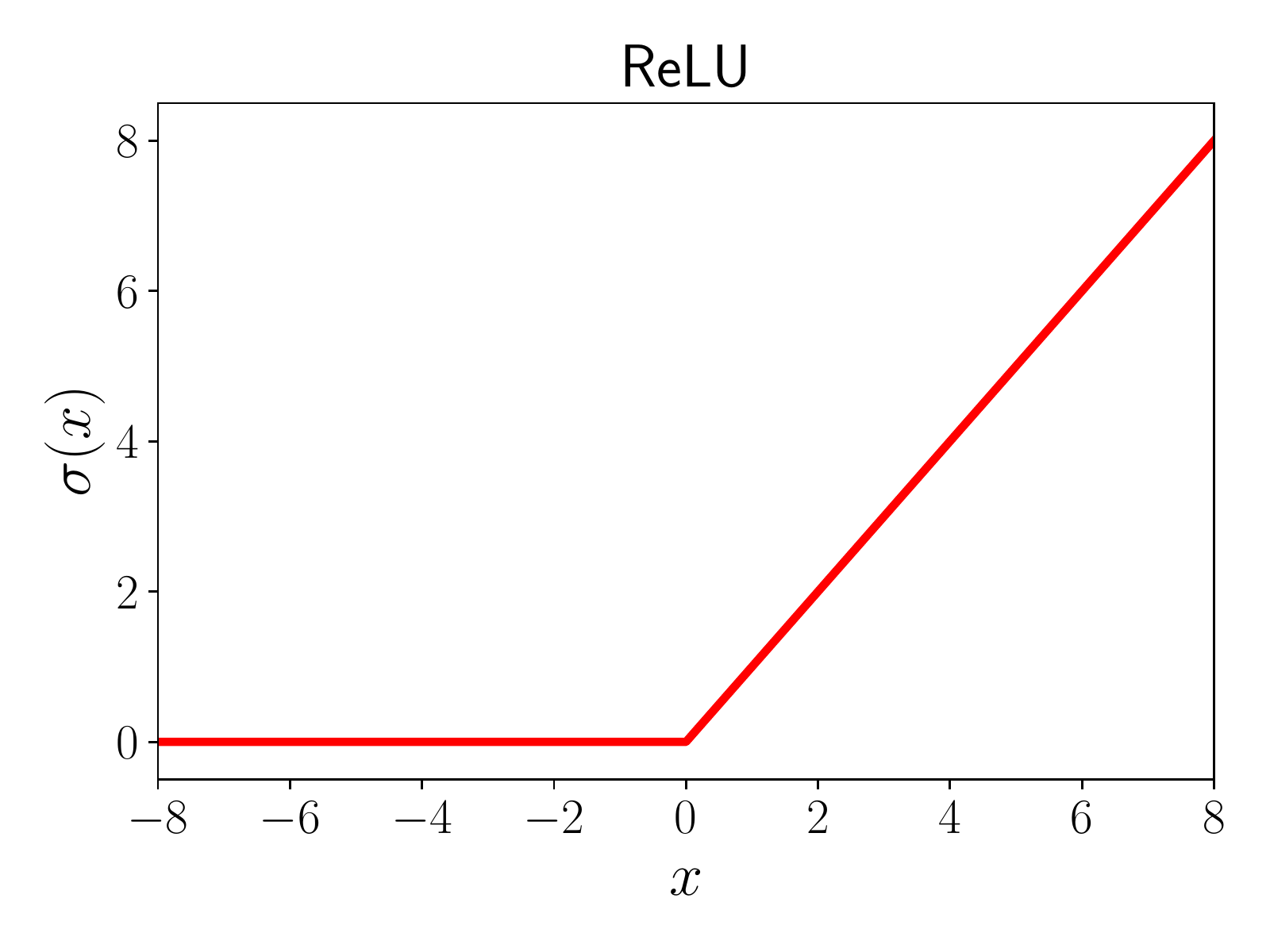}
\caption{Two activation functions.  A sigmoid \(\sigma(x) = \frac{1}{1+\exp\{-x\}}\) acts as a soft threshold which tends to 0 as \(x\) goes to \(-\infty\) and tends to 1 as \(x\) goes to \(\infty\).  A ReLU \(\sigma(x) = \max(0,x)\) is equal to 0 if \(x < 0\) and is equal to \(x\) otherwise. \label{fig:activation}}
\end{figure}

A (feedforward) neural network is a directed acyclic graph (DAG); see Figure~\ref{fig:structure}. The roots of the DAG are the neural network inputs, call them \(X_1, \ldots, X_n\).  The leaves of the DAG are the neural network outputs, call them \(Y_1, \ldots, Y_m\).  Each node in the DAG is called a {\em neuron} and contains an {\em activation function} \(\sigma\); see Figure~\ref{fig:neuron}.  Each edge \(I\) in the DAG has a {\em weight} \(w\) attached to it. The weights of a neural network are its {\em parameters,} which are learned from data. Consider a neuron with activation function \(\sigma\), inputs \(I_{i}\) and  corresponding weights \(w_{i}\). The output of this neuron is simply \(\sigma(\sum_{i} w_{i} \cdot I_{i})\). Thus, one can compute the output \(Y_j\) of a neural network by simply evaluating neurons, parents before children, which can be done in time linear in the neural network size.

To simplify the discussion, we will assume that a neural network has a single output \(Y\). Hence, a neural network represents a function \(Y = f(X_1, \ldots, X_n)\).  A key question here relates to the class of functions that can be represented, or approximated well, by neural networks that use a certain class of activation functions. 
One example is the {\em sigmoid} activation function; see Figure~\ref{fig:activation}.  A neural network with only sigmoid activation functions can approximate any continuous function to within an arbitrary error.\footnote{A {\em shallow} neural network with a single hidden layer is sufficient for universal approximation, but may require exponentially many neurons. A {\em deep} neural network may be more succinct for this purpose though.} 
Such neural networks are called {\em universal approximators} of continuous functions  \citep{HornikSW89,Cybenko89,LeshnoLPS93}.
Most of the recent neural networks are based on the {\em ReLU} activation function, which is simpler than the sigmoid; see Figure~\ref{fig:activation}. Neural networks with ReLUs are also universal approximators of continuous functions~\citep{LeshnoLPS93}.

\subsection{Bayesian Network Queries as Functions}

\begin{figure}[tb]
 \centering
 \subfigure[A Bayesian network with logical constraints (\(0/1\) parameters)]{\label{fig:p-model}
 \includegraphics[width=.3\linewidth]{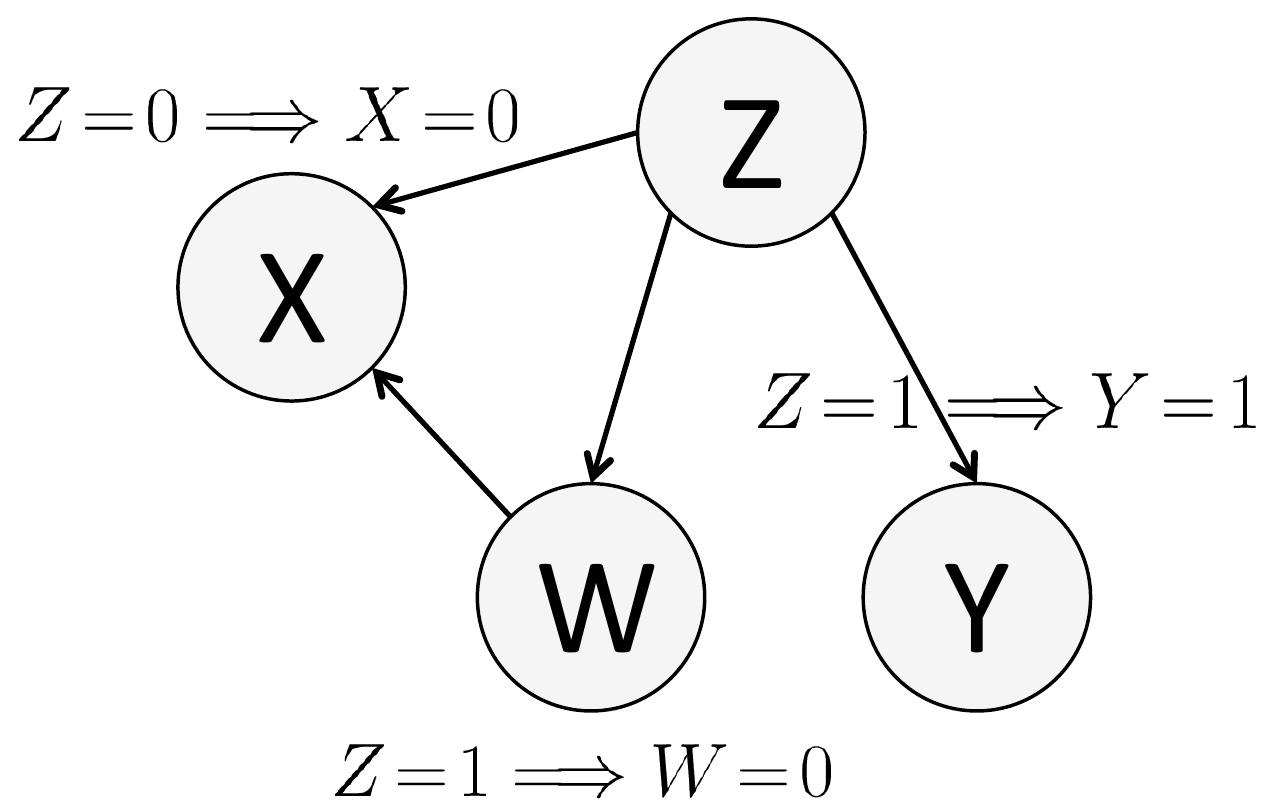}}
\qquad \qquad
 \subfigure[A function that computes the query \(O=\pr(y|x,w)\).  Here, \(\alpha = \pr(y,x,w)\) and \(\beta = \pr(\bar{y},x,w)\)]{\label{fig:s-function}
 \includegraphics[width=.5\linewidth]{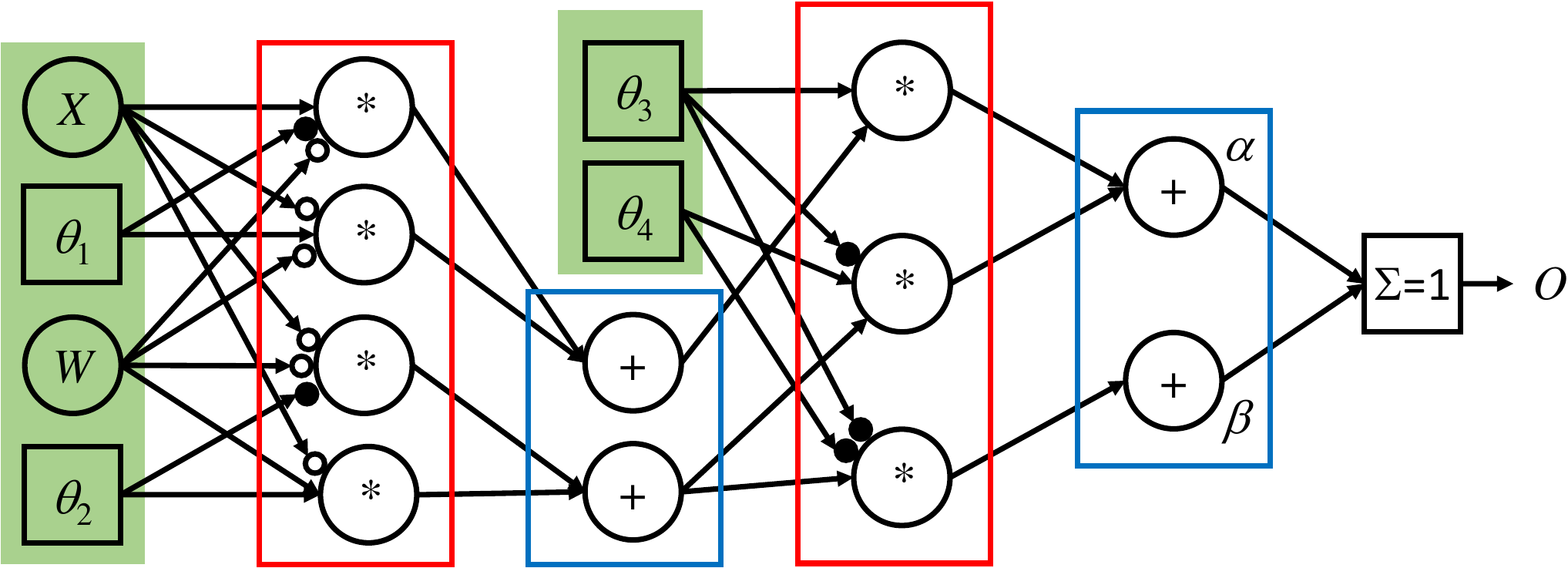}} 
\caption{The function uses adders (\(+\)), multipliers (\(*\)), inverters (\(\circ\)), \(1-\theta\) units (\(\bullet\)), and normalizing units (\(\Sigma = 1\)).  Excluding the \(\Sigma\) unit (division), and emulating \(\circ\) and \(\bullet\) units using adders, we obtain an {Arithmetic Circuit (AC)}~\citep{Darwiche03,ChoiDarwiche17}. The function implicitly integrates the Bayesian network's \(0/1\) parameters. Moreover, \(\theta_1,\ldots,\theta_4\) are Bayesian network parameters that have specific interpretations. \label{fig:function}}
\end{figure}

A Bayesian network is a directed acyclic graph (DAG), where each node is associated with a conditional probability table (CPT). The CPT for node \(X\) with parents \(\U\) contains a parameter 
\(\theta_{x|\u} \in [0,1]\) for each state \(x\) of node \(X\) and state \(\u\) of parents \(\U\), such that \(\sum_x \theta_{x|\u} = 1\). A Bayesian network specifies a unique probability distribution over its nodes~\citep{Pearl88b}.

Consider now a Bayesian network, some evidence \(\e\) on variables \(\E\) (e.g., symptoms), and let \(Y\) be a query variable (e.g., a disease). The probability \(\pr(y,\e)\) can be viewed as the output of a function \(f(\E)\) that maps evidence \(\e\) into a number in \([0,1]\). The function inputs are discrete values of variables \(\E\), but can be continuous values in \([0,1]\) if one uses soft evidence \citep{chanAI05} (universal approximation results for neural networks assume that inputs/outputs are in \([0,1]\)).

The function \(f(\E)\) can be represented by an {\em Arithmetic Circuit (AC)} containing only multipliers and adders~\citep{Darwiche03,ChoiDarwiche17}; see Figure~\ref{fig:function}. Classical inference algorithms for Bayesian networks implicitly construct and evaluate this circuit on the fly~\cite[Chapter~12]{Darwiche09}, while other approaches explicitly construct this circuit through a compilation process~\citep{Darwiche03,Darwiche09}. Consider now the function corresponding to a Bayesian network query \(\pr(\alpha)\).  It is known that  this function is  {\em multi-linear} (more on this later),
regardless of the query \(\alpha\) and the underlying Bayesian network~\citep{Darwiche03,Darwiche09}.\footnote{The conditional probability \(\pr(\alpha|\e)\) is the quotient of two multi-linear functions.}
Hence, the functions induced by Bayesian network queries (and ACs) are less expressive than the ones represented by neural networks. This explains why a Bayesian network that is trained discriminatively using labeled data may not outperform a neural network trained on the same data.
This is particularly so when the data is generated by a function that is not multi-linear.

Here is our major insight for addressing this expressiveness gap, which is based on a simple but consequential observation. 
It is known that if each activation function of a neural network is a polynomial, then the neural network can only represent polynomials~\citep{LeshnoLPS93}. 
Moreover, neural networks with linear activation functions can only represent linear functions.
Consider now the ReLU activation function \(\sigma(x) = \max(0,x)\), which leads to a universal approximator. This function equals 0 if \(x < 0\) and equals \(x\) otherwise; see Figure~\ref{fig:activation}.  Hence, it is two linear functions augmented with a simple {\em test,} \(x < 0,\) for {\em selecting} one of them. What this tells us is that we can turn ACs into universal approximators by only augmenting them with {\em testing units.} 
We show this later, leading to {\em Testing Arithmetic Circuits (TACs)}.  
In fact, the notion of testing can be integrated directly into Bayesian networks, leading to {\em Testing Bayesian Networks (TBNs).}  
A TAC computes a TBN query, just like an AC computes a Bayesian network query.

\section{Testing Bayesian Networks}
\label{sec:tbn}

The concept of a Testing Bayesian Network (TBN) is relatively simple. In a nutshell: it is a Bayesian network whose CPTs are selected dynamically based on the given evidence.

Consider a Bayesian network that contains a binary node \(X\) having a single binary parent \(U\).
The CPT for node \(X\) contains  {\em one} distribution on \(X\) for each state \(u\) of its parent:
\begin{center}
\footnotesize
\[
\begin{array}{cc|c}
U & X & \\ \hline
u & x & \theta_{x|u} \\
u & \n(x) & \theta_{\n(x)|u} \\ \hline
\n(u) & x & \theta_{x|\n(u)} \\
\n(u) & \n(x) & \theta_{\n(x)|\n(u)} \\
\end{array}
\]
\end{center}
In a TBN, node \(X\) may be {\em testing.} In this case, we need {\em two} distributions on \(X\) for each state \(u\) of its parent. Moreover, we need a {\em threshold}
for each state \(u\), which is used to select one of these distributions:
\begin{center}
\footnotesize
\[
\begin{array}{cc|c|cc}
U & X          &  &  &  \\ \hline
u & x           & T_{u} & \theta^+_{x|u}  & \theta^-_{x|u} \\
u & \n(x)      & & \theta^+_{\n(x)|u}  & \theta^-_{\n(x)|u} \\ \hline
\n(u) & x      & T_{\n(u)} & \theta^+_{x|\n(u)}  & \theta^-_{x|\n(u)} \\
\n(u) & \n(x) & & \theta^+_{\n(x)|\n(u)} & \theta^-_{\n(x)|\n(u)} \\
\end{array}
\]
\end{center}
The selection of distributions utilizes the posterior on parent \(U\) given evidence on \(X\)'s {\em ancestors.}
For parent state \(u\), the selected distribution on \(X\) is \((\theta^+_{x|u},\theta^+_{\n(x)|u})\) if the posterior on \(u\) is \(\geq T_{u}\); otherwise, it is \((\theta^-_{x|u},\theta^-_{\n(x)|u})\).  
For parent state \(\n(u)\), the distribution is \((\theta^+_{x|\n(u)},\theta^+_{\n(x)|\n(u)})\) if the posterior on \(\n(u)\) is \(\geq T_{\n(u)}\); otherwise, it is \((\theta^-_{x|\n(u)},\theta^-_{\n(x)|\n(u)})\).   
Thus, the CPT for node \(X\) is determined {\em dynamically} based on the two thresholds and the posterior over  parent \(U\), leading to one of the following four CPTs:\footnote{In general,
testing can take other forms such as \(> T\), \(\le T\) or \(< T\).}
\begin{center}
\footnotesize
\[
\begin{array}{cc|c}
U & X & \\ \hline
u & x & \theta^+_{x|u} \\
u & \n(x) & \theta^+_{\n(x)|u} \\ \hline
\n(u) & x & \theta^+_{x|\n(u)} \\
\n(u) & \n(x) & \theta^+_{\n(x)|\n(u)} \\
\end{array}
\qquad
\begin{array}{cc|c}
U & X & \\ \hline
u & x & \theta^+_{x|u} \\
u & \n(x) & \theta^+_{\n(x)|u} \\ \hline
\n(u) & x & \theta^-_{x|\n(u)} \\
\n(u) & \n(x) & \theta^-_{\n(x)|\n(u)} \\
\end{array}
\qquad
\begin{array}{cc|c}
U & X & \\ \hline
u & x & \theta^-_{x|u} \\
u & \n(x) & \theta^-_{\n(x)|u} \\ \hline
\n(u) & x & \theta^+_{x|\n(u)} \\
\n(u) & \n(x) & \theta^+_{\n(x)|\n(u)} \\
\end{array}
\qquad
\begin{array}{cc|c}
U & X & \\ \hline
u & x & \theta^-_{x|u} \\
u & \n(x) & \theta^-_{\n(x)|u} \\ \hline
\n(u) & x & \theta^-_{x|\n(u)} \\
\n(u) & \n(x) & \theta^-_{\n(x)|\n(u)} \\
\end{array}
\]
\end{center}
In general, if the parents of testing node \(X\) have \(n\) states, the selection process may yield \(2^n\) distinct CPTs.
We will now give two illustrative examples of TBNs before we define their syntax and semantics formally.

\begin{figure}[t]
\centering
\subfigure[]{\label{fig:noisy-or}
  \includegraphics[width=0.17\linewidth]{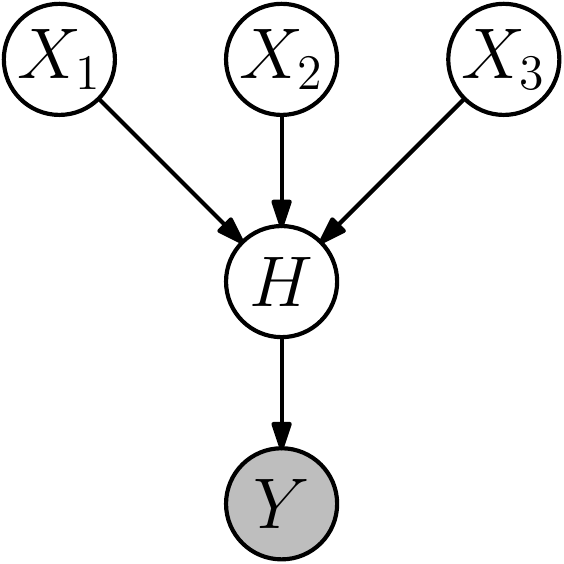}}
  \qquad \qquad
\subfigure[]{\label{fig:diamond}
  \includegraphics[width=0.17\linewidth]{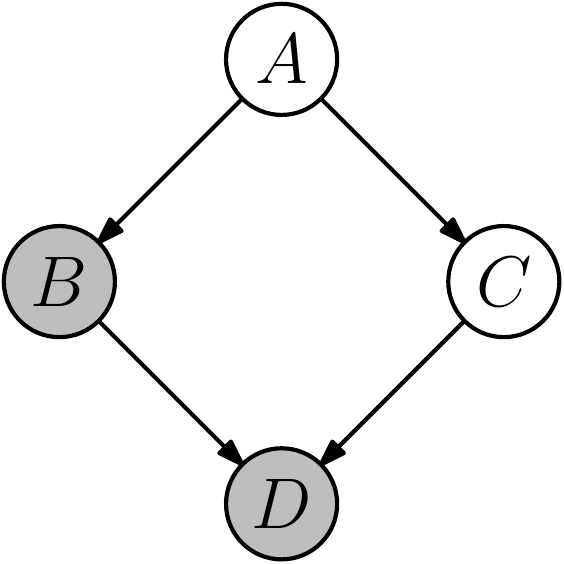}}
\caption{Two testing Bayesian Networks (TBNs). All nodes are binary. Testing nodes are shaded.} \label{fig:networks}
\end{figure}

Consider Figure~\ref{fig:noisy-or}, where all nodes are binary. Node \(Y\) is testing and has the following {\em testing CPT:}
\begin{center}
\small
\[
\begin{array}{cc|rl|rlrl}
H & Y          &  &  &  \\ \hline
h & y           & T_{h}= & t & \theta^+_{y|h}=  & 1 & \theta^-_{y|h}= & 0 \\
h & \n(y)      & & & \theta^+_{\n(y)|h}=  & 0 & \theta^-_{\n(y)|h}= & 1 \\ \hline
\n(h) & y      & T_{\n(h)}= & 1-t & \theta^+_{y|\n(h)}= & 0 & \theta^-_{y|\n(h)}= & 1 \\
\n(h) & \n(y) &  &  & \theta^+_{\n(y)|\n(h)}= & 1 & \theta^-_{\n(y)|\n(h)}= & 0\\
\end{array}
\]
\end{center}
Suppose now we have evidence \(\e = x_1,x_2,x_3\), which pertains to the ancestors of testing node \(Y\). In this example, the selected CPT for node \(Y\) will be based 
on the tests \(\pr(h \mid \e) \ge T_{h}\) and \(\pr(\n(h) \mid \e) > T_{\n(h)}\).
For parent state \(h\), the selected distribution on \((y,\n(y))\) is \((1,0)\) if \(\pr(h \mid \e) \ge T_{h}=t\); otherwise, it is \((0,1)\).
For parent state \(\n(h)\), the selected distribution on \((y,\n(y))\) is \((0,1)\) if \(\pr(\n(h) \mid \e) > T_{\n(h)}=1-t\); otherwise, it is \((1,0)\).

Nodes \(X_1, X_2, X_3\) and \(H\) can be viewed as the basis of a noisy-or classifier as in~\cite{Vomlel06}. That is, we classify an instance \(x_1,x_2,x_3\) positively iff \(\pr(h | x_1,x_2,x_3) \geq t\), where \(t\) is the classification threshold. The TBN in Figure~\ref{fig:noisy-or} can then be viewed as implementing this classifier since  
\(\pr(y | x_1,x_2,x_3) = 1\) if instance \(x_1,x_2,x_3\) is positive, and \(\pr(y | x_1,x_2,x_3) = 0\) if the instance is negative. 

More generally, we may have multiple testing nodes in a TBN. Figure~\ref{fig:diamond} depicts a TBN with two testing nodes, \(B\) and \(D\), where all variables are binary.  In a classical Bayesian network, we need \(18\) parameters to fully specify the network: \(2\) for \(A\), \(4\) for each of \(B,C\) and \(8\) for \(D\). For the TBN, we need \(30\) parameters: \(4\) additional parameters for \(B\) and \(8\) additional parameters for \(D\). We also need \(2\) thresholds for \(B\) and \(4\) thresholds for \(D\).

From now on, we will use BN to denote a classical Bayesian network and TBN for a testing one.

\subsection{TBN Syntax}

A TBN is a directed acyclic graph (DAG) with two types of nodes: {\em regular} and {\em testing,} each having a conditional probability table (CPT). 
Root nodes are always regular. Consider a node \(X\) with parents \(\U\).
\begin{itemize}
\item[--] If \(X\) is a regular node, its CPT is said to be {\em regular} and has a parameter \(\theta_{x|\u} \in [0,1]\) for each state \(x\) of node \(X\) and state \(\u\) of its parents \(\U\),
such that \(\sum_x \theta_{x|\u} = 1\) (these are the CPTs used in BNs).
\item[--] If \(X\) is a testing node, its CPT is said to be {\em testing} and has a threshold \(T_{X|\u} \in [0,1]\) for each state \(\u\) of parents \(\U\).
It also has two parameters \(\theta^+_{x|\u} \in [0,1]\) and \(\theta^-_{x|\u} \in [0,1]\) for each state \(x\) of node \(X\) and state \(\u\) of its parents \(\U\),
such that \(\sum_x \theta^+_{x|\u} = 1\) and \(\sum_x \theta^-_{x|\u} = 1\).
\end{itemize}
The parameters of a regular CPT are said to be {\em static} and the ones for a testing CPT are said to be {\em dynamic.}

Consider a node that has \(m\) states and its parents have \(n\) states. If the node is regular, its CPT will have \(m \cdot n\) static parameters.
If it is a testing node, its CPT will have \(n\) thresholds and \(2\cdot m \cdot n\) dynamic parameters.
As we shall discuss later, the thresholds and parameters of a TBN can be learned discriminatively from labeled data (as in deep learning).

\subsection{TBN Semantics}

A testing CPT corresponds to a set of regular CPTs, one of which is selected based on the given evidence.
Once a regular CPT is selected from each testing CPT, the TBN transforms into a BN.
In other words, a TBN over DAG \(G\) represents a set of BNs over DAG \(G\), one of which is selected based on the given evidence. 
It is this selection process that determines the semantics of TBNs. We define this process next based on soft evidence, which includes hard evidence as 
a special case.\footnote{\cite{ChoiDarwiche18} used soft evidence on root nodes only, which is sufficient for the universal approximation theorem.}  

Soft evidence on a variable \(X\) with states \(x_1, \ldots, x_k\) is specified using {\em likelihood ratios} \(\lambda_1, \ldots, \lambda_k\)~\citep{Pearl88b}.  
Without loss of generality, we require \(\lambda_{1} + \ldots + \lambda_{k} = 1\) so \(\lambda_{i}=1\) corresponds to hard evidence \(\eql(X,x_i)\).
Moreover, when node \(X\) is binary, soft evidence reduces to a single number \(\lambda_x \in [0,1]\) since \(\lambda_{\n(x)} = 1-\lambda_x\).

We will emulate soft evidence by hard evidence on an auxiliary, binary child \(S\) with the following CPT: 
\begin{center}
\small
\[
\begin{array}{cc|c}
X & S & \Theta_{S|x} \\ \hline
x_1 & s & \lambda_1 \\
x_1 & \n(s) & 1-\lambda_1 \\ 
\vdots & \vdots & \vdots \\
x_k & s & \lambda_k \\
x_k & \n(s) & 1-\lambda_k \\ 
\end{array}
\]
\end{center}
We can now take \(\pr(.|s)\) as the result of asserting soft evidence on node \(X\) since
\[
\frac{\pr(x_1|s)}{\pr(x_1)} \: : \: \ldots \: : \: \frac{\pr(x_k|s)}{\pr(x_k)} = \lambda_{1} \: : \: \ldots \: : \: \lambda_{k}.
\]
Let \({\bf s}\) denote all available soft evidence.
We will use \(\Pstar(.)\) to denote the conditional distribution \(\pr(.|{\bf s})\) and \(\pstar(.)\) to denote the joint distribution \(\pr(.,{\bf s})\).

We now show how a TBN can be converted into a BN, thereby defining the semantics of TBNs. We start with an empty BN
and traverse the TBN nodes, parents before children. Suppose we are visiting TBN node \(X\) with parents \(\U\).
If node \(X\) is regular, we add it to the BN as a child of nodes \(\U\), while copying its regular CPT to the BN.
If node \(X\) is testing, we first use the partially constructed BN to compute the posterior \(\Pstar(\U)\) using soft evidence on the ancestors of node \(X\). 
Using this posterior, we then select a regular CPT for node \(X\) from its testing CPT, e.g., as follows:\footnote{The selection can be based
on other tests such as \(\Pstar(\u) > T_{X|\u}\), \(\Pstar(\u) \le T_{X|\u}\) or \(\Pstar(\u) < T_{X|\u}\).}
\[
\theta_{x|\u} = 
\left\{
\begin{array}{ll}
\theta^+_{x|\u} & \mbox{if \(\Pstar(\u) \ge T_{X|\u}\)} \\
\theta^-_{x|\u} & \mbox{otherwise.}
\end{array}
\right.
\]
We finally add node \(X\) to the BN as a child of nodes \(\U\) and copy its selected, regular CPT to the BN. 

After visiting all nodes in the TBN, the constructed BN will have the same structure as the TBN. We can now use this BN to answer queries using all available evidence, as is normally done.

The dependence of selection on only ancestral evidence is not strictly needed and can sometimes be easily relaxed, but this makes the semantics of TBNs less transparent. 
We prefer simplicity for now given that this is all we need for our main result on universal approximation.\footnote{The dependence of CPT selection on only ancestral evidence
can be limiting practically though since not all TBN queries can fully benefit from the power of CPT selection. In the extreme case of evidence laying below testing nodes in a TBN, 
CPT selection will not be impacted by the available evidence. CPT selection can be easily made to depend on more evidence, beyond ancestral, as long as it does not lead to selection 
ambiguities (i.e., the selection of one CPT impacting the selection of another). Resolving such potential ambiguities, however, requires a more thorough treatment.}

\section{Testing Arithmetic Circuits}
\label{sec:tac}

\begin{figure}[t]
 \centering
 \subfigure[TBN]{\label{fig:c-TBN}
 \includegraphics[width=0.14\linewidth]{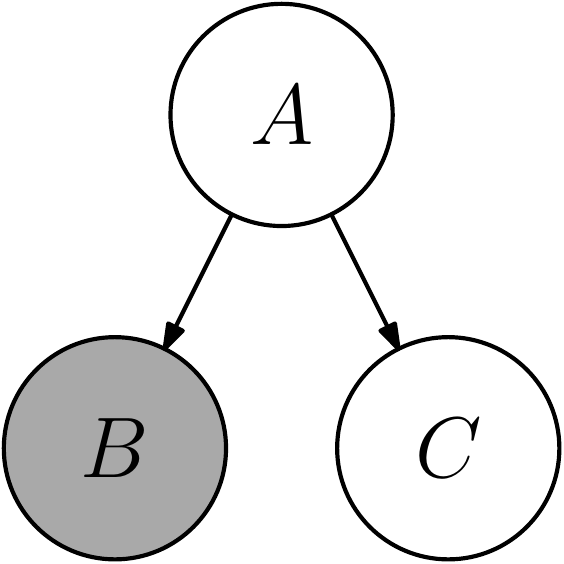}} 
\qquad
 \subfigure[TAC]{\label{fig:c-TAC}
    \includegraphics[width=0.75\linewidth]{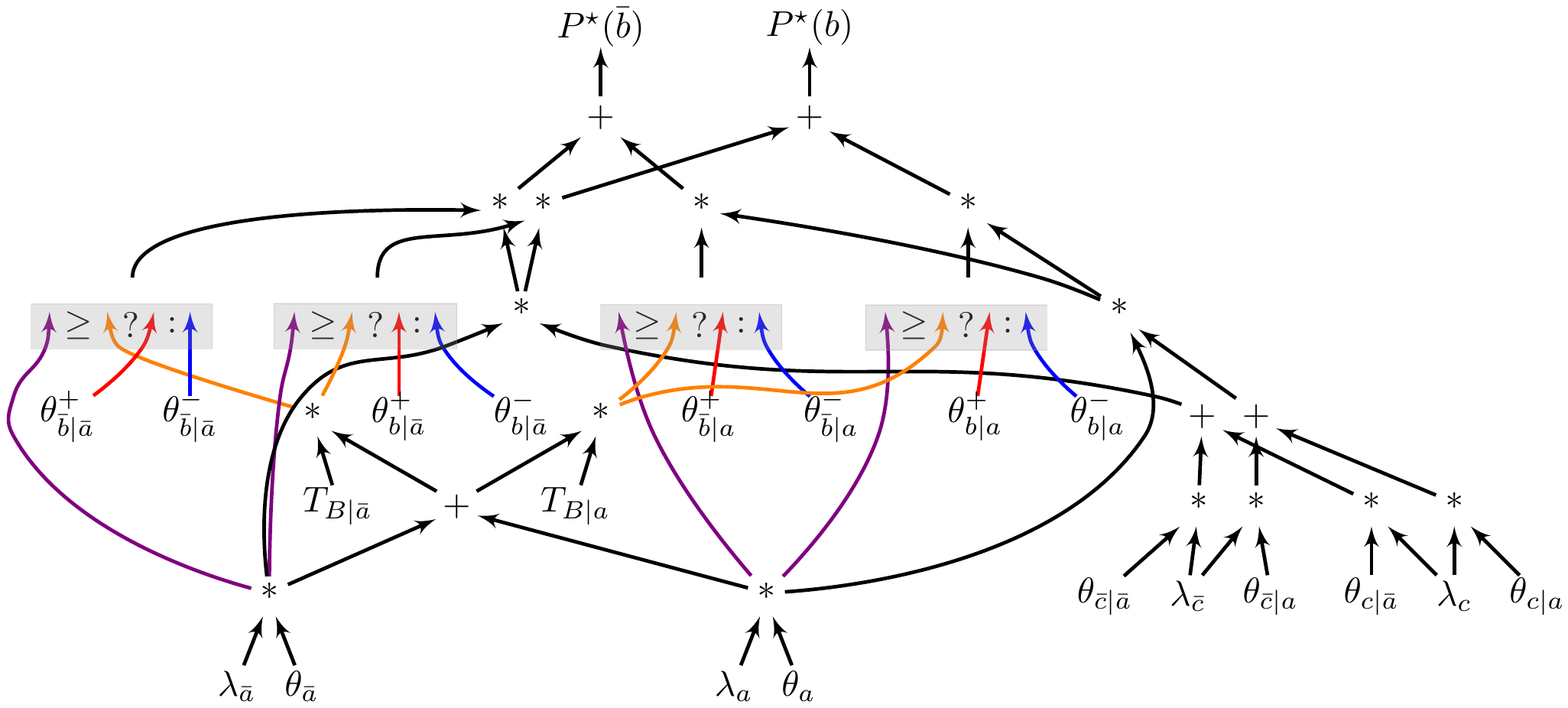}}
\caption{Nodes \(A\), \(B\) and \(C\) are binary and node \(B\) is testing. 
Nodes \fbox{\(x \ge\ T \: ? \: \theta^+ : \theta^-\)} represent testing units.
 \label{fig:compiled tac}}
\end{figure}

A Testing Arithmetic Circuit (TAC) is an Arithmetic Circuit (AC) that includes {\em testing units.}  A testing unit has two inputs, \(x\) and \(T\), and two parameters, \(\theta^+\) and \(\theta^-\).
Its output is computed as follows:\footnote{A testing unit may employ other tests such as \(x > T\), \(x \le T\) or \(x < T\).}
\[
f(x,T)
= 
\left\{
\begin{tabular}{rl}
$\theta^+$ & \mbox{if \(x \ge T\)} \\
$\theta^-$ & \mbox{otherwise.}
\end{tabular}
\right.
\]
Just like an AC computes a BN query (see Figure~\ref{fig:function}), a TAC computes a TBN query.  
\ref{sec:compiling tacs} provides an algorithm for compiling a TAC that computes a given TBN query. 

This algorithm was used to compile the TAC in Figure~\ref{fig:c-TAC}, which has four testing units and computes a query on the TBN in Figure~\ref{fig:c-TBN}. 
The TAC inputs \((\lambda_a,\lambda_{\n(a)})\)
and \((\lambda_c,\lambda_{\n(c)})\) represent soft evidence on nodes \(A\) and \(C\), respectively. Its outputs \(\pstar(b)\) and \(\pstar(\n(b))\) represent the
marginal distribution on node \(B\). All other TAC inputs correspond to TBN parameters and thresholds: \(2\) static parameters for node \(A\),
\(4\) static parameters for node \(C\), in addition to \(8\) dynamic parameters and \(2\) thresholds for node \(B\).

As discussed earlier, TBNs are motivated by the recent success of neural networks in learning functions from labeled data. 
When using a neural network in this context, the function structure is usually handcrafted while its parameters are learned using gradient descent methods. 
While neural networks have been very successful in this context, they have also been subject to scrutiny due to their opaqueness and inability to integrate domain knowledge 
in a principled manner. Opaqueness makes neural networks hard to explain and verify. 
The inability to integrate domain knowledge diminishes their robustness and implies that we may need a massive amount of data to train them successfully. 

The structure of a function computed by a TBN query is {\em compiled} from the TBN. This structure takes the form of a TAC, which 
parameters and thresholds can be learned from labeled data using gradient descent methods.
The advantage of compiling a function structure from a model, in contrast to handcrafting it, is that we can integrate some forms of background knowledge into the 
function. For example, all the independence assumptions encoded by the TBN will be respected by the compiled TAC, regardless of how we
train it. Moreover, some TBN parameters may already be known and these will be carried into the compiled TAC, without the need to train them from data,
therefore reducing the dependence on data.
This particularly includes \(0/1\) parameters, which correspond to logical domain constraints.
Finally, since a TAC is compiled from a TBN, one stands a better chance at explaining and verifying the TAC behavior.
One must note though that the expressiveness of a TAC is mandated by the underlying TBN query so it cannot be arbitrarily controlled, as in handcrafted neural
networks. We discuss this issue in the next two sections.

\section{Expressiveness of TBN Queries and TACs}
\label{sec:uat}

A central concept relating to expressiveness is that of universal approximation. Consider the class of continuous functions \(f(x_1, \ldots,x_n)\) from 
 \([0,1]^n\) to \([0,1]\). A representation is said to be a universal approximator if it can approximate any function in this class to an arbitrary small error.
 As mentioned earlier, neural networks with appropriate activation functions are universal approximators. 
 
 TBN queries and, hence, TACs are also universal approximators. 
 The following theorem shows this for continuous and monotonic functions \(f(x)\). The general (and slightly more involved) case of multivariate, non-monotonic functions is delegated to \ref{sec:uat2}. 
 
 \begin{theorem}\label{theo:uat}
Given a continuous, monotonic function \(f(x)\) from \([0,1]\) to \([0,1]\) and error \(\varepsilon\), there exists a TBN that contains \(O(\lceil \frac{1}{\varepsilon} \rceil)\) nodes and
that satisfies the following. The TBN contains two binary nodes, \(Z\) and \(Y\), such that 
if \(x\) is the soft evidence on node \(Z\), then \(|\Pstar(y)-f(x) \leq \varepsilon|\).
 \end{theorem}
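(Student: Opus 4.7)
The plan is to reduce the theorem to two steps. First, uniformly approximate $f$ by a step function $g$ with $O(1/\varepsilon)$ pieces; second, build a TBN whose query on $Y$ realizes $g$ (up to boundary effects that stay within the $\varepsilon$ budget). For the first step, set $N = \lceil 1/\varepsilon \rceil$. Since $f$ is continuous and monotonic on the compact interval $[0,1]$, I may assume without loss of generality that $f(0)=0$ and $f(1)=1$ (else an affine correction is absorbed into the CPT of $Y$), and use the intermediate value theorem to pick $0 \le t_1 < \ldots < t_N \le 1$ with $f(t_i) = i/N$. Defining $g(x) = K(x)/N$ where $K(x) = |\{i : x \ge t_i\}|$, monotonicity of $f$ immediately gives $\sup_{x \in [0,1]} |f(x) - g(x)| \le 1/N \le \varepsilon$.

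For the second step, the TBN uses $N+3$ nodes. A binary root $Z$ with $\pr(z)=1/2$ carries the input, since soft evidence $\lambda_z = x$ gives $\Pstar(z) = x$. For each $i \in \{1, \ldots, N\}$, I add a binary testing node $H_i$ with parent $Z$, configured so that its two parent states jointly cancel out the spurious factor of $x$: for parent state $z=1$, use threshold $t_i$ with $(\theta^+_{h_i|z=1}, \theta^-_{h_i|z=1}) = (1,0)$, and for parent state $z=0$, use threshold $1-t_i$ with the flipped values $(\theta^+_{h_i|z=0}, \theta^-_{h_i|z=0}) = (0,1)$. This yields $\pr(h_i \mid z, \text{selected}) = \mathbb{1}[x \ge t_i]$ for both parent states of $Z$, hence $\Pstar(h_i) = \mathbb{1}[x \ge t_i]$ away from the boundary $x = t_i$. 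An auxiliary mixture root $M$ with $N$ uniformly distributed states and no evidence provides additive aggregation. A binary regular node $Y$ with parents $\{M, H_1, \ldots, H_N\}$ and CPT $\pr(y \mid m = i, h_1, \ldots, h_N) = h_i$ then yields, by the independence of $M$ from $\{H_i\}$ in the posterior,
\[
\Pstar(y) \;=\; \sum_{i=1}^N \Pstar(m=i)\, \Pstar(h_i) \;=\; \frac{1}{N}\sum_{i=1}^N \mathbb{1}[x \ge t_i] \;=\; g(x).
\]

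Combining with Step 1 gives $|\Pstar(y) - f(x)| \le \varepsilon$, and the node count $N+3$ matches the stated bound of $O(\lceil 1/\varepsilon \rceil)$. The hard part is the configuration of the testing nodes $H_i$: a naive setup that tests only on parent state $z=1$ would leave an unwanted multiplicative factor of $\Pstar(z=1)=x$ in $\Pstar(h_i)$, giving $x\cdot\mathbb{1}[x \ge t_i]$ rather than a clean indicator. The dual-threshold trick above---placing a separate test on $\Pstar(z=0) = 1-x$ with threshold $1-t_i$ and the $\theta^\pm$ entries swapped---is what produces identical indicator contributions from both parent states of $Z$ and cancels this factor. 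A boundary mismatch of order $1/N$ arises when $x$ equals some $t_i$ exactly, but it remains within the $\varepsilon$ budget. Finally, $M$ has $N$ states rather than being binary, but the theorem only requires $Z$ and $Y$ to be binary, which is satisfied.
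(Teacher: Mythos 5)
Your construction is correct and is essentially identical to the paper's own proof: your nodes $Z$, $M$, $H_1,\ldots,H_N$, $Y$ correspond exactly to the paper's $Z$, $I$, $Y_1,\ldots,Y_N$, $Y$, with the same uniform selector, the same copy CPT for $Y$, the same thresholds $t_i = f^{-1}(i/N)$, and the same dual-threshold trick on the two parent states of $Z$ (which the paper resolves at the boundary by testing $\Pstar(z)\ge T_z$ against $\Pstar(\n(z))> T_{\n(z)}$ so the two selections always agree). The only cosmetic differences are that you invoke the intermediate value theorem to define the $t_i$ rather than writing $f^{-1}$ directly, and you phrase the output as a step function $g$ rather than as $\frac{1}{N}\lfloor N f(x)\rfloor$.
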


\begin{figure}[t]
 \centering
 \subfigure[TBN]{\label{fig:TBN}
   \includegraphics[width=0.25\linewidth]{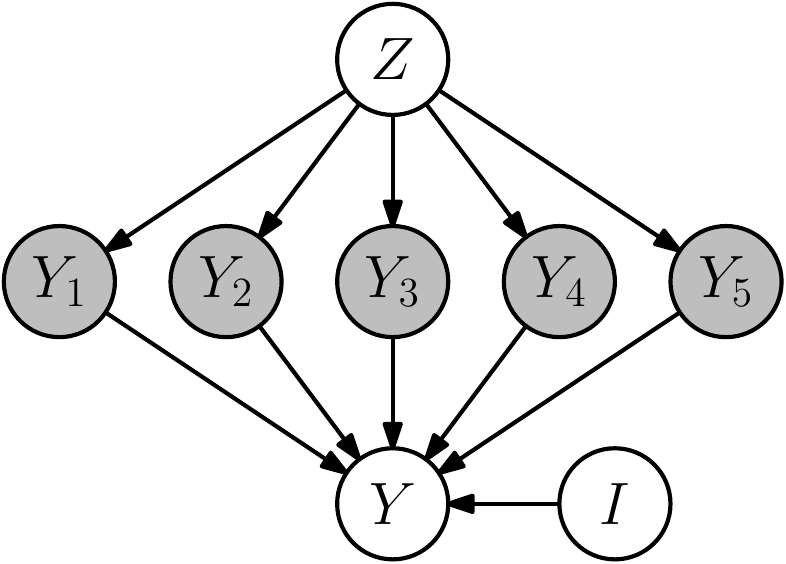}}
   \qquad
 \subfigure[TAC for query \(\Pstar(y)\)]{\label{fig:TAC}
   \includegraphics[width=0.25\linewidth]{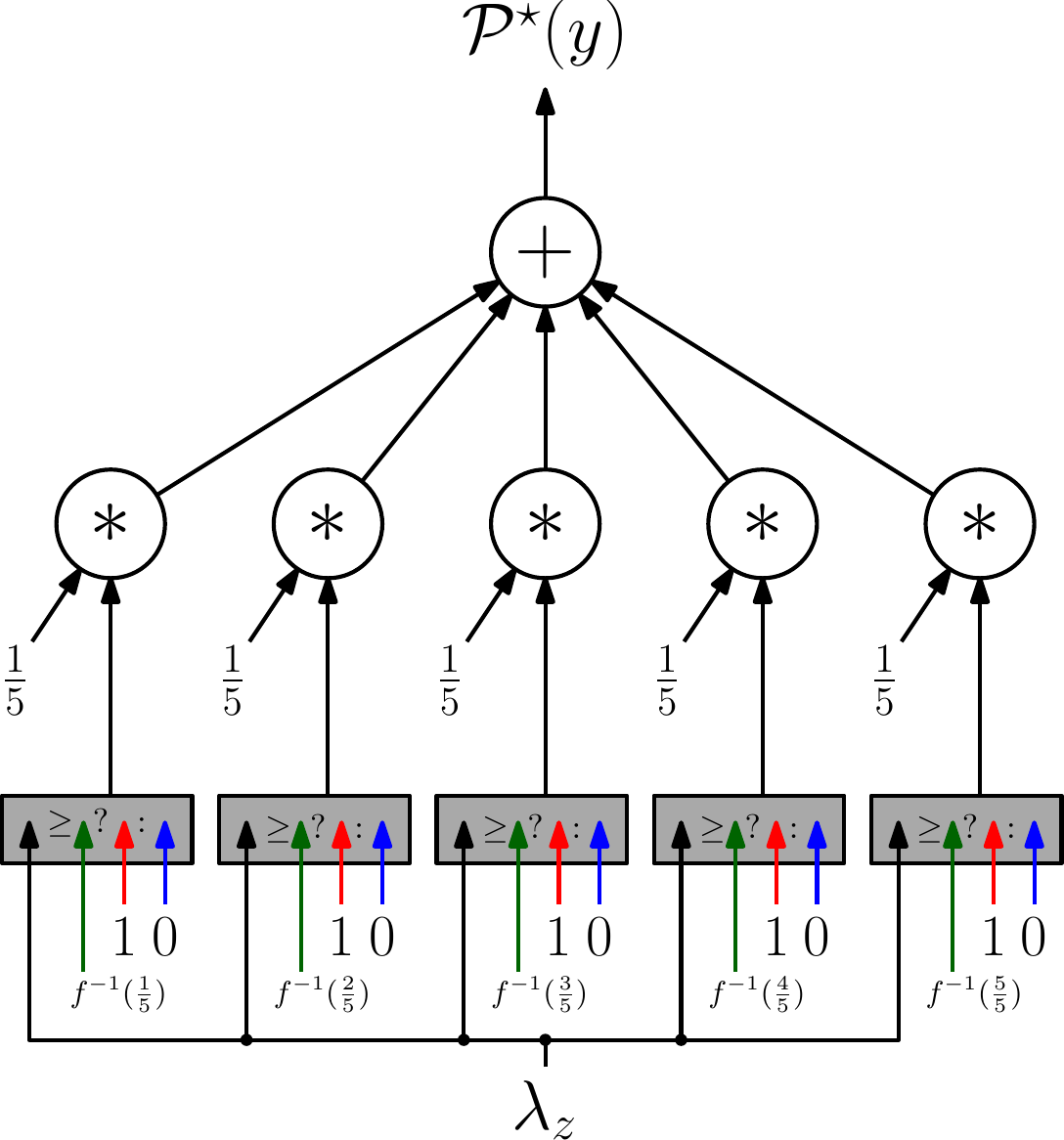}}
   \qquad
 \subfigure[BN after CPT selection]{\label{fig:instance}
   \includegraphics[width=0.25\linewidth]{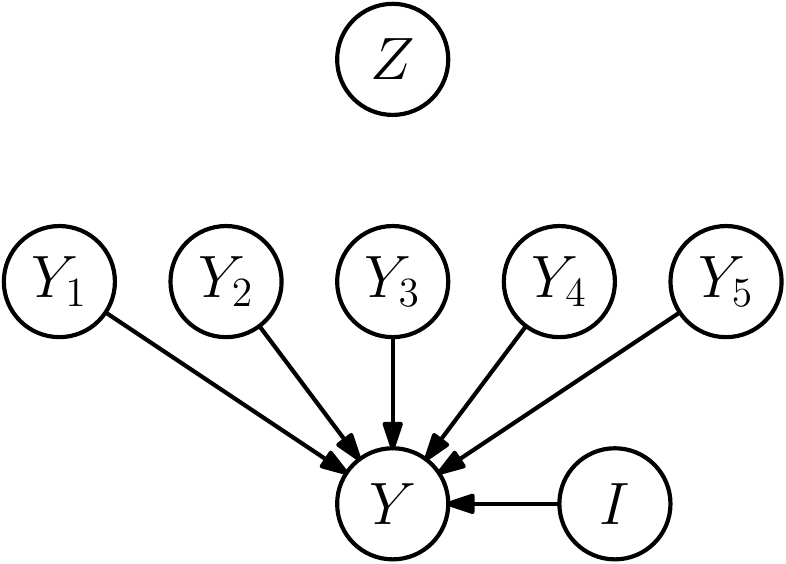}}
 \caption{Approximating a continuous, monotonic function \(f(x)\) using a TBN query for \(N=5\). \label{fig:tbn-tac}
 }
\end{figure}

\begin{proof}
Our constructive proof is based on~\cite{Jones90} and uses the TBN in Figure~\ref{fig:TBN}.  
The TBN has regular nodes \(Z, Y\) and \(I\), which are discrete. Nodes \(Z\) and \(Y\) are binary with states \(z,\n(z)\) and \(y,\n(y)\). 
Node \(I\) has values \(1, \ldots, N\) and controls the quality of approximation (better approximation for larger \(N\)). 
The TBN has testing nodes \(\Y = Y_1, \ldots, Y_N\), which are binary with states \(y_i,\n(y)_i\).  
The TBN query is to compute the probability of \(\eql(Y,y)\) given soft evidence on node \(Z\).
The TAC in~Figure~\ref{fig:TAC} computes this query.

Nodes \(Z\) and \(I\) have uniform priors. Node \(Y\) has the following CPT:
\[
\theta_{y\mid \y,i} 
=
\left\{
\begin{tabular}{rl}
1 & \mbox{if \(\y\) sets variable \(Y_i\) to value \(y\)} \\
0 & \mbox{otherwise}
\end{tabular}
\right.
\]
That is, node \(Y\) is equivalent to node \(Y_i\) when \(I=i\). The testing CPT for node \(Y_i\) is as follows:\footnote{We assume, without loss of generality, that \(f(0)=0\) and \(f(1)=1\).}
\begin{center}
\small
\[
\begin{array}{cc|rl|rlrl}
Z & Y_i          &  &  &  \\ \hline
z & y_i           & T_{z}= & f^{-1}(\frac{i}{N}) & \theta^+_{y_i|z}=  & 1 & \theta^-_{y_i|z}= & 0 \\
z & \n(y)_i      & & & \theta^+_{\n(y)_i|z}=  & 0 & \theta^-_{\n(y)_i|z}= & 1 \\ \hline
\n(z) & y_i      & T_{\n(z)}= & 1-f^{-1}(\frac{i}{N}) & \theta^+_{y_i|\n(z)}= & 0 & \theta^-_{y_i|\n(z)}= & 1 \\
\n(z) & \n(y)_i &  &  & \theta^+_{\n(y)_i|\n(z)}= & 1 & \theta^-_{\n(y)_i|\n(z)}= & 0\\
\end{array}
\]
\end{center}
Given soft evidence \(\lambda_z=x\) on node \(Z\), the posterior marginal \(\Pstar(z)\) is then \(x\). We can now select a regular CPT for each testing
node \(Y_i\), which must be one of\footnote{The used tests are \(\Pstar(z) \ge T_{z}\) and \(\Pstar(\n(z)) > T_{\n(z)}\). We have
\(\Pstar(z) \ge T_{z}\) iff \(1-\Pstar(z) \le 1-T_{z}\) iff \(\Pstar(\n(z)) \le T_{\n(z)}\).}

\begin{center}
\small
\[
\begin{array}{cc|cc}
Z & Y_i          &  &  \\ \hline
z & y_i           &  \theta_{y_i|z}=  & 1  \\
z & \n(y)_i      & \theta_{\n(y)_i|z}=  & 0 \\ \hline
\n(z) & y_i      & \theta_{y_i|\n(z)}= & 1 \\
\n(z) & \n(y)_i & \theta_{\n(y)_i|\n(z)}= & 0 \\
\end{array}
\qquad \qquad
\begin{array}{cc|cc}
Z & Y_i          &  &  \\ \hline
z & y_i           &  \theta_{y_i|z}=  & 0  \\
z & \n(y)_i      & \theta_{\n(y)_i|z}=  & 1 \\ \hline
\n(z) & y_i      & \theta_{y_i|\n(z)}= & 0 \\
\n(z) & \n(y)_i & \theta_{\n(y)_i|\n(z)}= & 1 \\
\end{array}
\]
\end{center}
In either case, \(Y_i\) will no longer depend on \(Z\), leading to the BN structure in Figure~\ref{fig:instance}.
Using \(\theta_{y_i}\) to denote \(\theta_{y_i|z} = \theta_{y_i|\n(z)}\), and noting that \(\Pstar(y)=P(y)\) in the selected BN, we get: 
\begin{eqnarray*}
\Pstar(y) 
& = & \sum_{\y} \sum_{i=1}^N \Pstar(y \mid \eql(I,i), \y) \Pstar(\eql(I,i),\y) \\
& = & \sum_{\y} \sum_{i=1}^N \Pstar(y \mid \eql(I,i), \y) \cdot \theta_{\eql(I,i)} \cdot \prod_{i=1}^N \theta_{y_i} \\
&  =&  \sum_{i=1}^N \theta_{\eql(I,i)} \cdot \theta_{y_i} =  \frac{1}{N} \sum_{i=1}^N \theta_{y_i}.
\end{eqnarray*}
Intuitively, each node \(Y_i\) is either \emph{activated} (\(\theta_{y_i}=1\)) or \emph{de-activated} (\(\theta_{y_i}=0\)), where \(\Pstar(y)\) is the proportion of activated nodes.  
Moreover, by choice of thresholds \(T_z\) and \(T_{\n(z)}\), as \(x\) increases, more nodes \(Y_i\) get activated.  
In fact, exactly \(\lfloor N \cdot f(x) \rfloor\) are activated, leading to \(\Pstar(y)  = \frac{1}{N} \lfloor N \cdot f(x) \rfloor\) and \(|f(x) - \Pstar(y)| \le \varepsilon\).
\end{proof}

\section{The Functional Form of TACs}
\label{sec:f-form}

Consider a TBN that contains some binary nodes, \(E_1, \ldots, E_n, Q\). Soft evidence on nodes \(E_i\) corresponds to a vector \(\lambda_1, \ldots, \lambda_n\) 
in \([0,1]^n\). Hence, a query that asks for the probability of \(\eql(Q,q)\) given soft evidence will then correspond to a function that maps \([0,1]^n\) into \([0,1]\). 
In Section~\ref{sec:uat} and~\ref{sec:uat2}, we provided a universal approximation theorem, showing how each continuous function from \([0,1]^n\) into \([0,1]\) 
can be approximated to an arbitrary error by a TBN query. 
The construction used a TBN and query that are based on the given function and error. 
In practice though, the TBN and query are mandated by modeling and task considerations, which may restrict the class of functions that can be learned from data. 
We next show, however, that regardless of the TBN and query, the induced function must be {\em piecewise multi-linear.}

\begin{definition}
We say that function \(f(\lambda_1,\ldots,\lambda_n)\) is \underline{multi-linear} if it has the form \(\sum_{I \subseteq \{1,\ldots,n\}} C_I \prod_{i \in I} \lambda_i\) for 
some constants \(C_I\). We say it is \underline{linear} if it has the form \(C_0 + \sum_{i=1}^n C_i \lambda_i\) for some constants \(C_i\). 
\end{definition}
For example, \(f(\lambda_1,\lambda_2)=a \lambda_1 \lambda_2 + b \lambda_1 + c \lambda_2 + d\) is multi-linear and \(f(\lambda_1,\lambda_2)=a \lambda_1 + b \lambda_2 + c\) is linear.

\begin{figure}[t]
  \centering
  \begin{subfigure}[BN function]{\label{fig:plot-bn}
    \includegraphics[width=.25\linewidth]{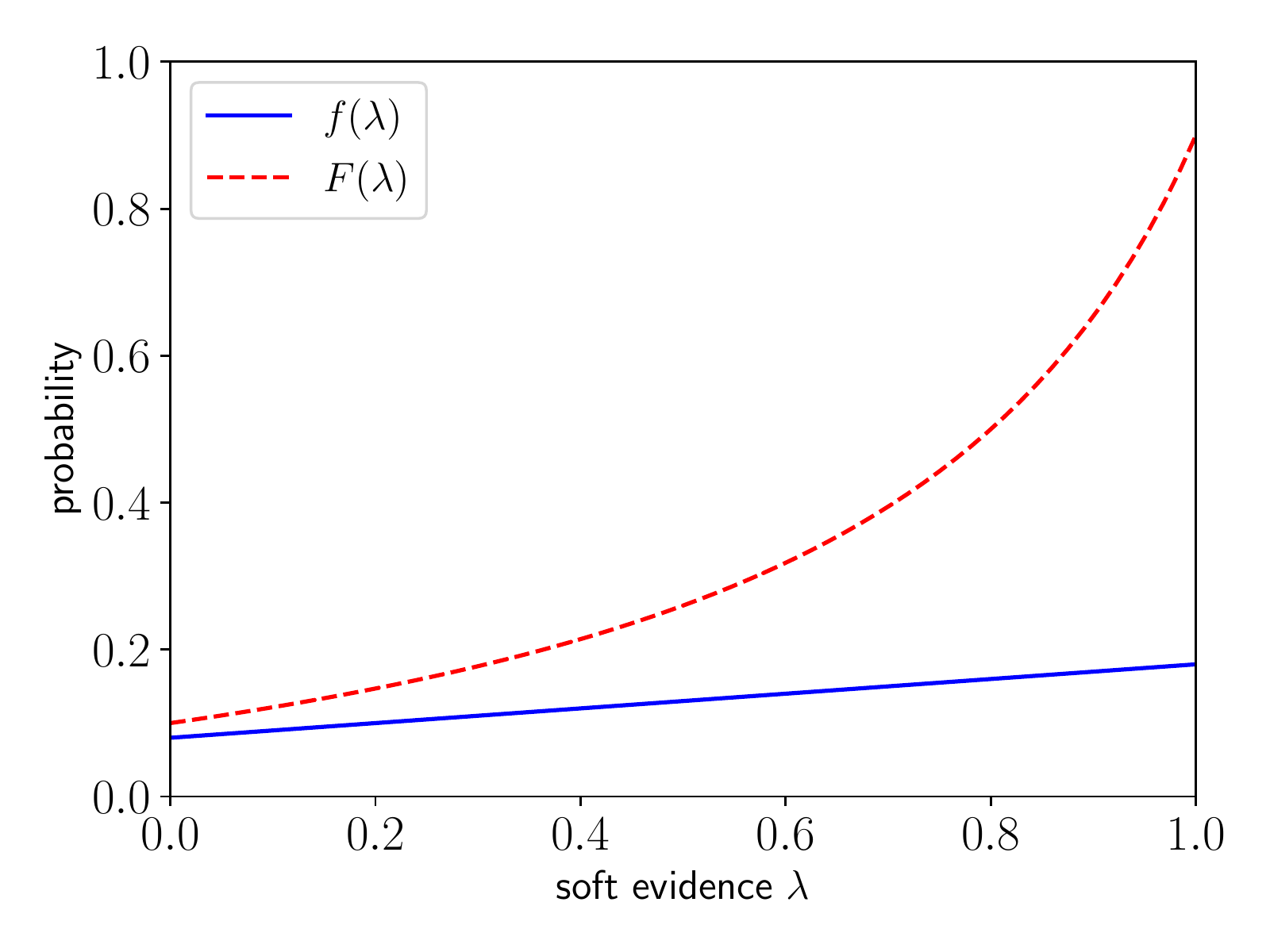}}
  \end{subfigure}
  \quad
  \begin{subfigure}[TBN function]{\label{fig:plot-tbn} 
    \includegraphics[width=.25\linewidth]{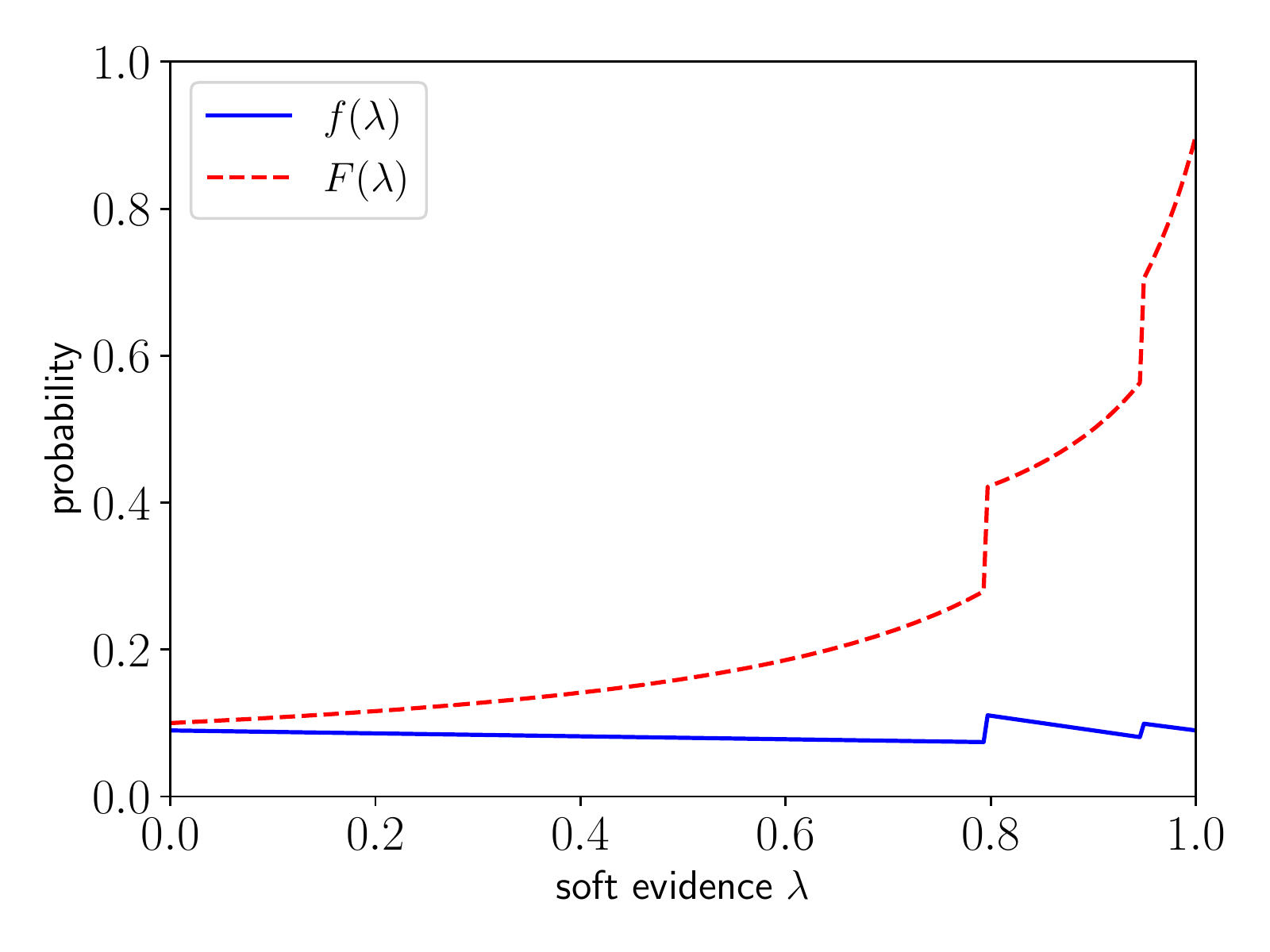}}
  \end{subfigure}
  \caption{Functions induced by BN and TBN queries on the structure \(E \rightarrow T \rightarrow Q\). Node \(T\) is testing in the TBN.}
  \label{fig:bn-tbn}
\end{figure}

We will next illustrate the functional form of BN and TBN queries using concrete examples, then follow by the formal results. We will start by
assuming only one evidence node \(E\) with soft evidence \(\lambda \in [0,1]\). Consider now the following queries:
\begin{itemize}
\item[---] \(f(\lambda)\): the joint probability of \(\eql(Q,q)\) and evidence \(\lambda\), \(\pstar(q)\).
\item[---] \(F(\lambda)\): the conditional probability of \(\eql(Q,q)\) given evidence \(\lambda\), \(\Pstar(q)\).
\end{itemize}
Figure~\ref{fig:plot-bn} depicts these functions for a BN \(E \rightarrow T \rightarrow Q\) with some arbitrary parameters. 
Figure~\ref{fig:plot-tbn} depicts these functions for a TBN with the same structure, but where node \(T\) is testing (again, the parameters and thresholds are arbitrary). 
For the BN, \(f(\lambda)\) is a linear function and \(F(\lambda)\) is a quotient of two linear functions. 
For the TBN, \(f(\lambda)\) is a piecewise linear function and \(F(\lambda)\) is a piecewise quotient of two linear functions.

More precisely, for BNs with one evidence node, these functions have the following form \citep{CastilloGH96,Jensen99,KjaerulffG00,Darwiche00}:
\[
f(\lambda)  = a \lambda + b
\quad\qquad F(\lambda) = \frac{a \lambda + b}{c \lambda + d}
\]
where the constants \(a, b, c, d\) depend only on the BN parameters. For TBNs with one evidence node, the input space \([0,1]\) is partitioned into {\em regions,} where
functions \(f\) and \(F\) take the above form but with different constants in each region; see Figure~\ref{fig:plot-tbn}.

\begin{figure}[t]
  \centering
  \begin{subfigure}[BN function]{\label{fig:plot-2bn}
    \includegraphics[width=.23\linewidth]{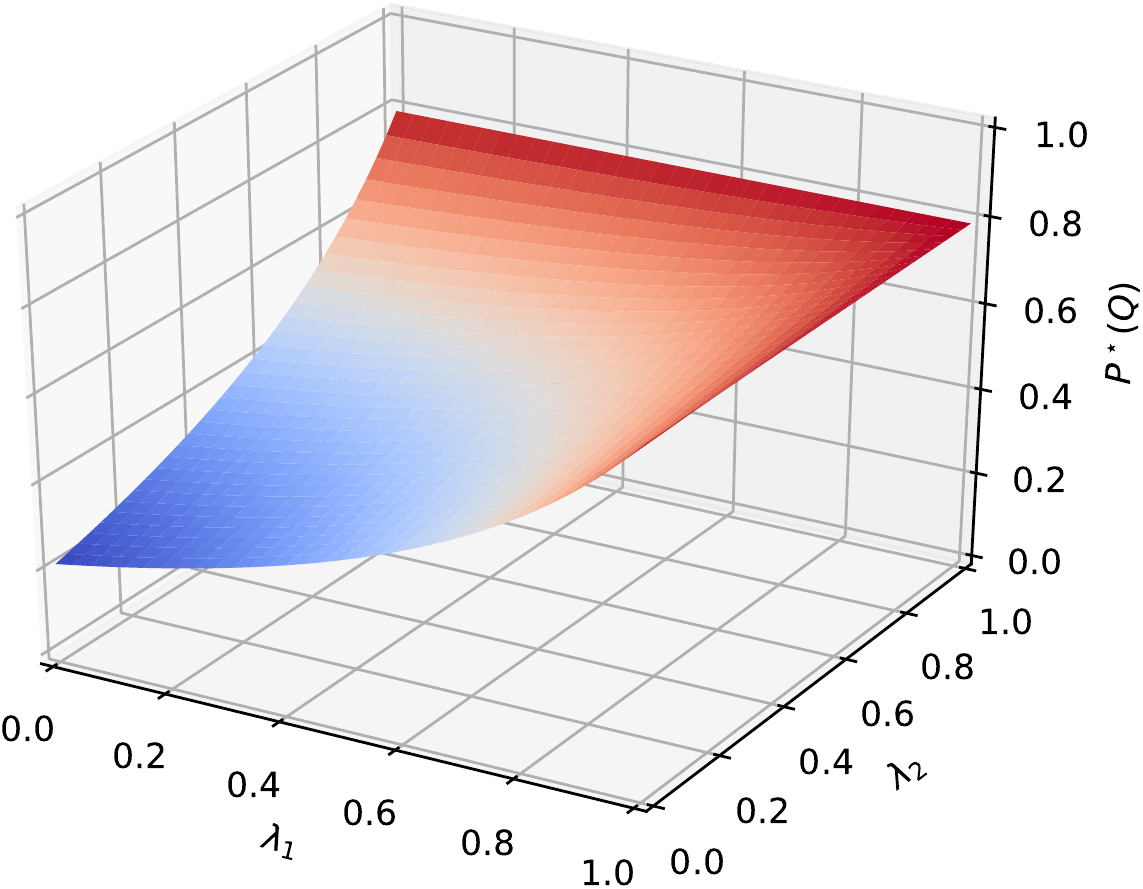}}
  \end{subfigure}
  \qquad\qquad
  \begin{subfigure}[TBN function]{\label{fig:plot-2tbn}
    \includegraphics[width=.23\linewidth]{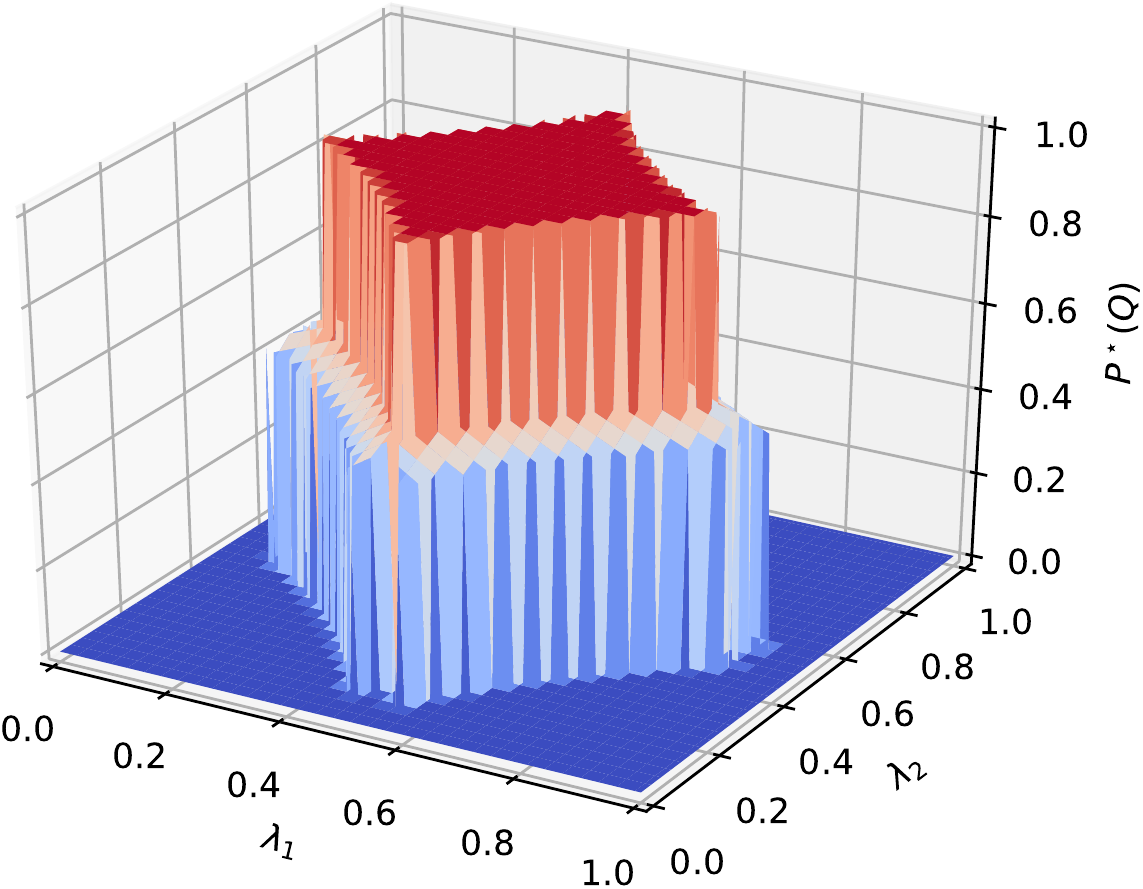}}
  \end{subfigure}
\\
\bigskip
  \begin{subfigure}[BN/TBN structure]{\label{fig:2tbn-dag} 
    \includegraphics[width=.22\linewidth]{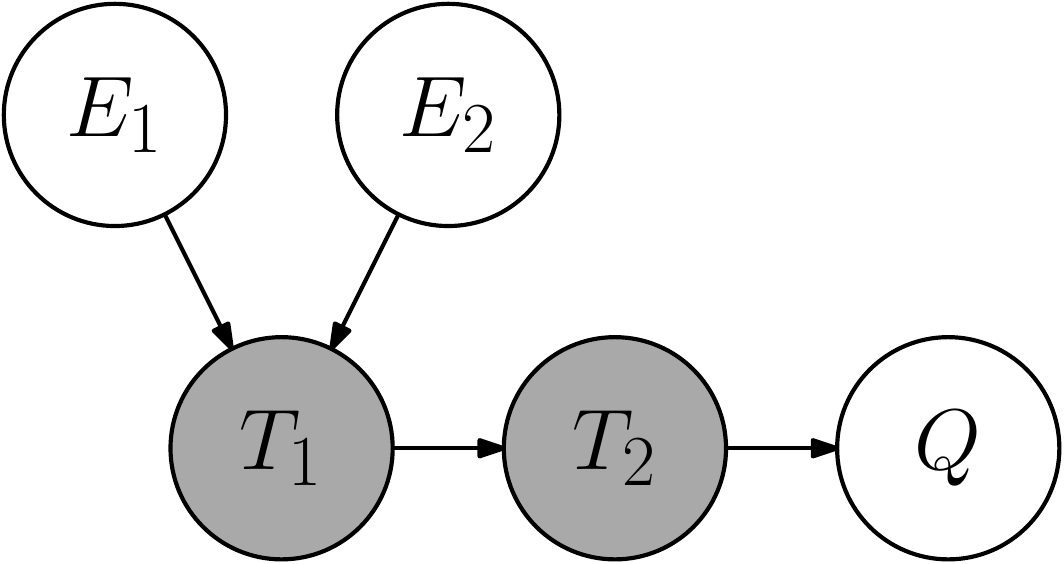}}
  \end{subfigure}
  \qquad\qquad
  \begin{subfigure}[TBN function]{\label{fig:plot-step}
    \includegraphics[width=.23\linewidth]{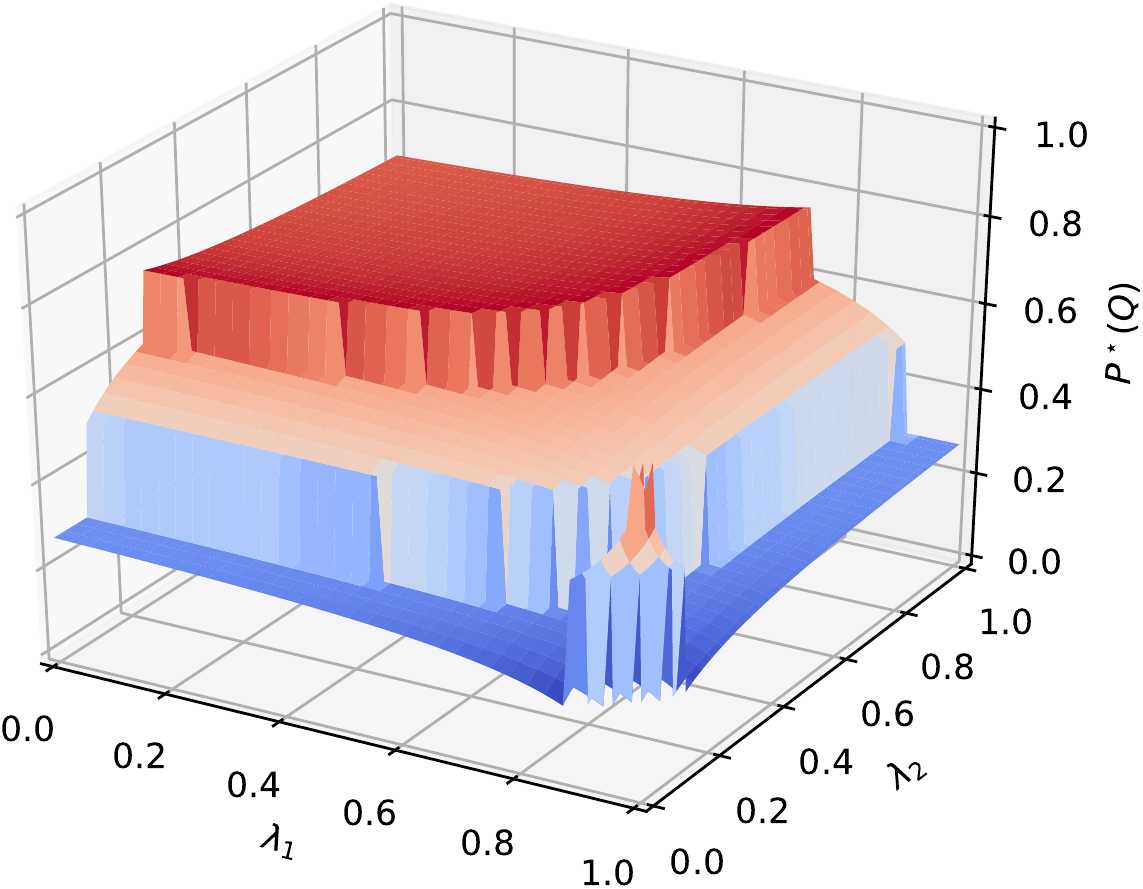}}
  \end{subfigure}

  \caption{Conditional probability functions, \(F(\lambda_1,\lambda_2)\), induced by BN and TBN queries. The TBN functions are for different parameters and thresholds,
  where nodes \(T_1\) and \(T_2\) are testing.
  \label{fig:2tbn}}
\end{figure}

Suppose we now have two evidence nodes \(E_1\) and \(E_2\) with soft evidence \(\lambda_1\) and \(\lambda_2\). 
For TBNs, the input space \([0,1] \times [0,1]\) is partitioned into two-dimensional regions. The functional form in each region is:
\[
f(\lambda_1,\lambda_2)  = a \lambda_1\lambda_2 + b\lambda_1 + c\lambda_2 + d
\quad\qquad F(\lambda_1,\lambda_2) = \frac{a \lambda_1\lambda_2 + b\lambda_1 + c\lambda_2 + d}{e \lambda_1\lambda_2 + f\lambda_1 + g \lambda_2 + h}
\]
Figure~\ref{fig:2tbn} depicts examples of \(F(\lambda_1,\lambda_2)\) corresponding to BN and TBN queries (only one region for the BN).

We now have the following result, showing that BN queries compute multi-linear functions. 
\begin{theorem}\label{theo:ML}
Consider a BN that contains some binary nodes \(E_1, \ldots, E_n, Q\) and let \(\lambda_1, \ldots, \lambda_n\) denote a soft evidence vector on nodes \(E_1, \ldots, E_n\). 
There are constants \(C_I\) for \(I \subseteq \{1, \ldots, n\}\) such that
\[
\pstar(q) = \sum_{I} C_I \prod_{i \in I} \lambda_i.
\]
\end{theorem}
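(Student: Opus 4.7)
The plan is to reduce the theorem to the standard network polynomial representation of Bayesian networks, using the soft-evidence-by-auxiliary-child trick already introduced in the paper. First, for each binary evidence node $E_i$ I introduce the auxiliary binary child $S_i$ with CPT
\[
P(s_i \mid e_i) = \lambda_i, \quad P(s_i \mid \n(e_i)) = 1 - \lambda_i,
\]
so that asserting soft evidence on $E_i$ is equivalent to asserting the hard evidence $S_i = s_i$. Writing $\mathbf{s} = s_1,\ldots,s_n$, the target quantity becomes $\pstar(q) = P(q,\mathbf{s})$ in the augmented BN.

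Next I expand $P(q,\mathbf{s})$ by marginalizing over the evidence variables $E_1,\ldots,E_n$ and over the remaining (non-evidence, non-$Q$) variables $\mathbf{H}$:
\[
P(q,\mathbf{s}) \;=\; \sum_{\mathbf{e}}\sum_{\mathbf{h}} P(q,\mathbf{e},\mathbf{h}) \prod_{i=1}^n P(s_i \mid e_i).
\]
The key observation is that the auxiliary factor $P(s_i \mid E_i)$ is an affine function of $\lambda_i$: it equals $\lambda_i$ when $E_i = e_i$ and $1-\lambda_i$ when $E_i = \n(e_i)$. In particular, $P(s_i\mid E_i)$ has degree at most one in $\lambda_i$ and does not depend on any other $\lambda_j$. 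The factors $P(q,\mathbf{e},\mathbf{h})$ involve only the original BN parameters and are constants with respect to $\lambda_1,\ldots,\lambda_n$.

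From here the multi-linear form is immediate. A product of univariate degree-one polynomials, one in each $\lambda_i$, is multi-linear in $(\lambda_1,\ldots,\lambda_n)$; summing such products over the settings of $\mathbf{e}$ and $\mathbf{h}$, weighted by constants, preserves multi-linearity. Collecting like terms yields an expansion
\[
\pstar(q) \;=\; \sum_{I \subseteq \{1,\ldots,n\}} C_I \prod_{i \in I} \lambda_i,
\]
where each coefficient $C_I$ is a fixed polynomial in the original BN parameters (obtained by grouping those joint instantiations of $\mathbf{e}$ in which, for each $i\in I$ the factor $\lambda_i$ is picked out of $\lambda_i$ or $(1-\lambda_i)$, and for each $i\notin I$ only the constant $1$ is picked out after expansion).

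I do not expect a real obstacle here, since this is essentially a specialization of Darwiche's network polynomial result to CPT entries that happen to be the parameters $\lambda_i$ of the auxiliary children. The only point that needs care is bookkeeping: verifying that after expanding each $(1-\lambda_i)$ into $1 - \lambda_i$ and distributing, every monomial in the resulting expression is of the form $\prod_{i\in I}\lambda_i$ with no higher powers, so that the multi-linear (as opposed to merely polynomial) conclusion is justified. This follows from the fact that $\lambda_i$ appears in exactly one factor $P(s_i\mid E_i)$ of each term of the sum, giving degree at most one in each $\lambda_i$.
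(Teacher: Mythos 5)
Your proposal is correct and follows essentially the same route as the paper: the paper writes \(\pstar(q)=\sum_{\x\models q}\pstar(\x)\) with each \(\pstar(\x)\) a product of network parameters times one factor \(\lambda_i\) or \((1-\lambda_i)\) per evidence variable, which is exactly your marginalization over \(\mathbf{e},\mathbf{h}\) with the auxiliary-child likelihood factors, and then expands each \((1-\lambda_i)\) to conclude multi-linearity since each \(\lambda_i\) occurs with degree at most one per term. The only cosmetic difference is that the paper makes the expansion explicit via \(\prod_{i\in I^-}(1-\lambda_i)=\sum_{J\subseteq I^-}(-1)^{|J|}\prod_{j\in J}\lambda_j\), whereas you argue the same point by a degree count.
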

\begin{proof}
Consider an instantiation \(\x\) over all network variables. We write \(\x \models x\u\) to mean that instantiation \(\x\) sets variable \(X\) and its parents \(\U\) to \(x\) and \(\u\).
Let \(I^+\) be the indices \(i\) of evidence variables \(E_i\) set positively by \(\x\), and let \(I^-\) be the indices \(i\) of evidence variables \(E_i\) set negatively by \(\x\).  We have:
\begin{eqnarray*}
\pstar(\x) 
& = & \prod_{\x \models x\u} \theta_{x|\u} \cdot \prod_{i \in I^+} \lambda_i \cdot \prod_{i \in I^-} (1-\lambda_i) \\
& = & \prod_{\x \models x\u} \theta_{x|\u} \cdot \prod_{i \in I^+} \lambda_i \cdot \Big[ \sum_{J \subseteq I^-} (-1)^{|J|} \cdot \prod_{j \in J} \lambda_j \Big] \\
& = & \sum_{J \subseteq I^-} (-1)^{|J|} \cdot \prod_{\x \models x\u} \theta_{x|\u} \cdot \prod_{i \in I^+} \lambda_i \cdot \prod_{j \in J} \lambda_j \\
& = & \sum_{I \subseteq \{1,\ldots,n\}} D^{\x}_I \prod_{i \in I} \lambda_i
\end{eqnarray*}
where \(D^{\x}_I\) is either \(0\), \(P(\x) = \prod_{\x \models x\u} \theta_{x|\u}\) or \(-P(\x)\).  
Hence, \(\pstar(\x)\) is a multi-linear function.  Since \(\pstar(q) = \sum_{\x \models q} \pstar(\x)\), it follows that \(\pstar(q)\) is also multi-linear as it is the sum
of multi-linear functions.
\end{proof}

The following result shows that TBN queries compute piecewise multi-linear functions.
\begin{theorem}\label{theo:PML}
Consider a TBN that contains some binary nodes \(E_1, \ldots, E_n, Q\) and let \(\lambda_1, \ldots, \lambda_n\) denote a soft evidence vector on nodes \(E_1, \ldots, E_n\). 
The space \([0,1]^n\) can be partitioned into a finite set of regions \(R\) that satisfy the following. 
For each region \(r \in R\), there are constants \(C_{I,r}\) for \(I \subseteq \{1, \ldots, n\}\) such that
\[
\pstar(q) = \sum_{I} C_{I,r} \prod_{i \in I} \lambda_i, \mbox{  for vectors \(\lambda_1, \ldots, \lambda_n\) in region \(r\).}
\]
\end{theorem}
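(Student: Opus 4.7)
The plan is to construct a finite partition of $[0,1]^n$ on which the regular CPT selected at each testing node is constant; on each cell of such a partition, the TBN collapses to an ordinary BN whose parameters are numeric constants (independent of $\lambda$), and Theorem~\ref{theo:ML} immediately yields the multi-linear form. I will build the partition inductively by following the topological sweep through testing nodes used in the TBN-to-BN conversion of Section~\ref{sec:tbn}.

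Order the testing nodes topologically as $X_1, \ldots, X_k$ with parent sets $\U_1, \ldots, \U_k$, and maintain inductively a finite partition $\mathcal{R}_j$ of $[0,1]^n$ on each cell of which the selected CPTs of $X_1, \ldots, X_j$ are fixed. Fix a cell $r \in \mathcal{R}_j$ and consider $X_{j+1}$. On $r$, every node ancestral to $X_{j+1}$ already has a regular CPT with parameters independent of $\lambda$, so applying Theorem~\ref{theo:ML} to the subnetwork of those ancestors shows that $\pstar(\u_{j+1})$ is a multi-linear function of $\lambda_1, \ldots, \lambda_n$ for each $\u_{j+1}$, and hence $\Pstar(\u_{j+1}) = \pstar(\u_{j+1}) / \sum_{\u} \pstar(\u)$ is a quotient of two multi-linear functions. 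The selection test $\Pstar(\u_{j+1}) \ge T_{X_{j+1}|\u_{j+1}}$ then reduces, after clearing the nonnegative denominator, to a polynomial inequality in $\lambda$ and carves $r$ into at most two subcells. Intersecting over the finitely many parent states $\u_{j+1}$ refines $r$ into finitely many subcells, and taking the union over $r \in \mathcal{R}_j$ yields $\mathcal{R}_{j+1}$. After $k$ steps, on each region of $\mathcal{R}_k$ the TBN has been realized as a BN with constant parameters, and one final application of Theorem~\ref{theo:ML} produces coefficients $C_{I,r}$ with $\pstar(q) = \sum_I C_{I,r} \prod_{i\in I} \lambda_i$, as claimed.

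The main obstacle I anticipate is justifying that the CPT selection at $X_{j+1}$ may legitimately be read off from the cell $r$ without any hidden circular dependence on $\lambda$. This is exactly where the TBN semantic restriction that the selection at $X_{j+1}$ depend only on evidence at its \emph{ancestors} is essential: it guarantees that $\Pstar(\U_{j+1})$ is computed entirely inside the partial BN already fixed on $r$, so the threshold test is a well-defined polynomial condition on $\lambda$ and the induction closes. A minor bookkeeping point, namely that the variants $>,\le,<$ of the selection test (allowed by the footnote in Section~\ref{sec:tbn}) behave identically, follows because they too reduce to polynomial inequalities in $\lambda$ after clearing the nonnegative denominator.
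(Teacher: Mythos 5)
Your proposal is correct and follows essentially the same route as the paper: partition $[0,1]^n$ according to which BN the selection process yields, then invoke Theorem~\ref{theo:ML} on each region. The paper's own proof is a terser version of this (it simply observes that there are finitely many possible CPT selections, hence finitely many regions), whereas you additionally spell out why the partition is well-defined and finite via the inductive topological sweep and the reduction of each threshold test to a polynomial inequality — a worthwhile elaboration, but the same argument.
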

\begin{proof}
For a given soft evidence vector \(\lambda_1, \ldots, \lambda_n\), the TBN selects a unique set of CPTs, leading to the selection of a unique BN.  
There is a finite number of possible selections that a TBN can make, leading to a finite number of BNs. 
Hence, the soft evidence space of \([0,1]^n\) can be partitioned into a finite set of regions \(R\), with each region \(r \in R\) leading to a unique BN. 
By Theorem~\ref{theo:ML}, \(\pstar(q)\) takes the form in Theorem~\ref{theo:PML}.
\end{proof}

While TBN queries represent piecewise {\em multi-linear} functions, neural networks with ReLU activation functions represent piecewise {\em linear} functions \citep{Pascanu2014,MontufarPCB14}.
Moreover, there has been work on bounding the number of regions for such functions, depending on the size and depth of neural 
networks, e.g., \cite{Pascanu2014,MontufarPCB14,Raghu2017,Serra2018}.

\begin{figure}[t]
  \centering
 \includegraphics[width=0.12\linewidth]{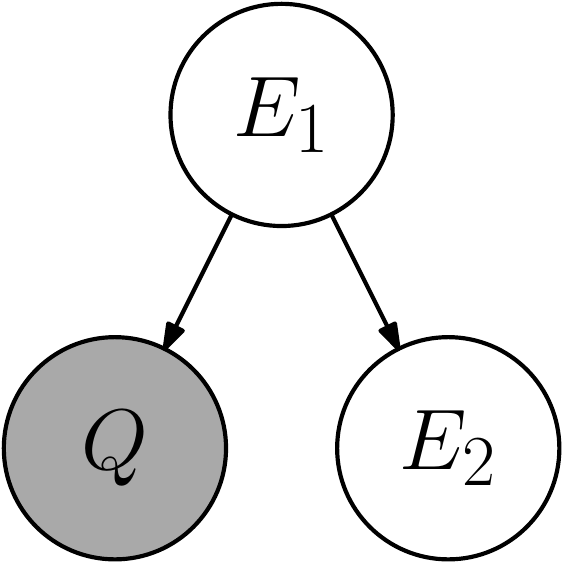}
 \qquad\qquad
 \includegraphics[width=0.23\linewidth]{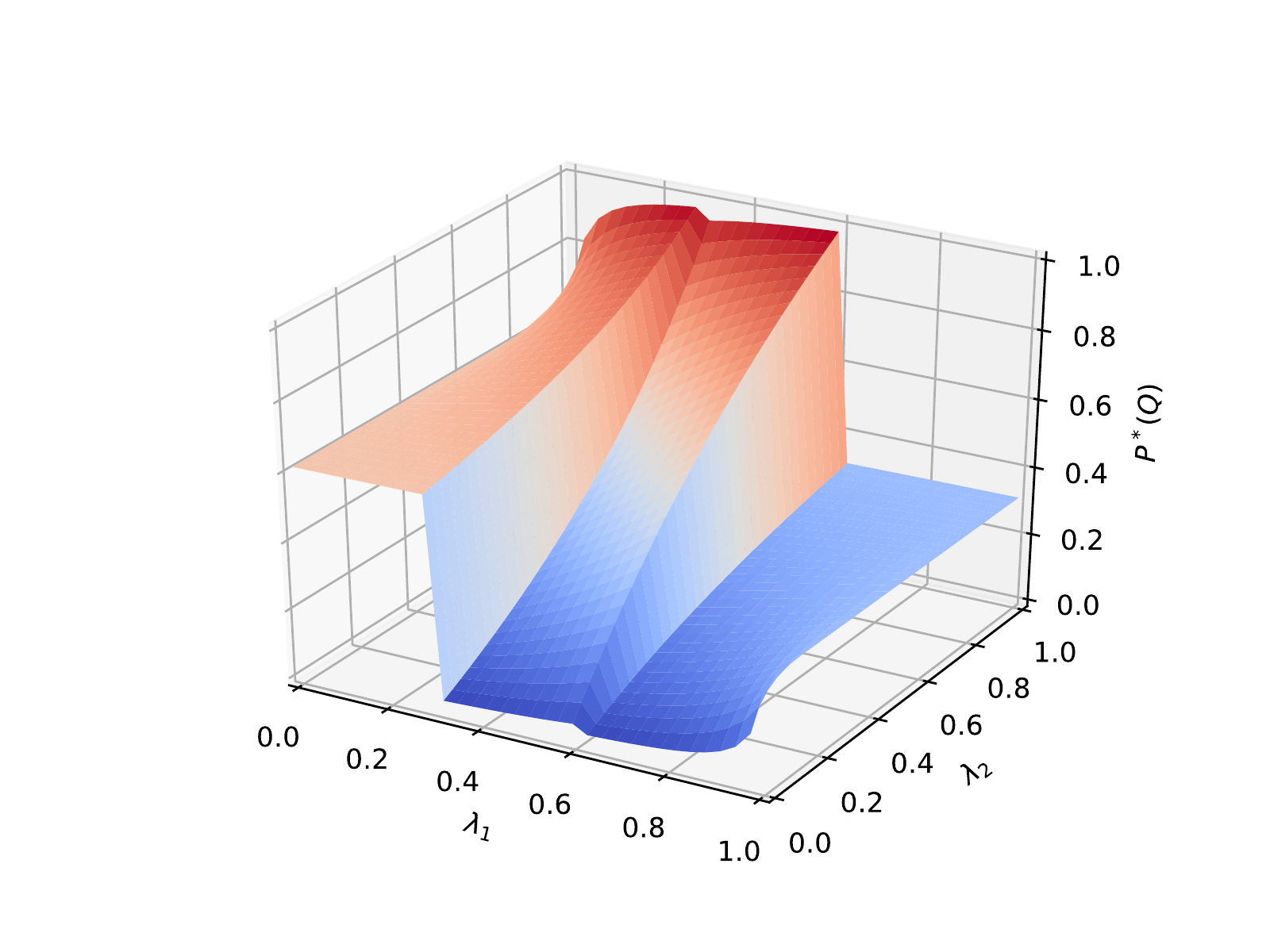}
 \caption{A conditional probability function, \(F(\lambda_1,\lambda_2)\), induced by a TBN query. Node \(Q\) is testing.
 \label{fig:tbn-impact}}
\end{figure}

A TBN query can induce different piecewise functions, depending on the TBN parameters and thresholds that are specified or learned from data (Figures~\ref{fig:plot-2tbn} and~\ref{fig:plot-step} 
depict two functions induced by the same TBN query).
In particular, these functions may have regions that differ in their nature 
and count, which impacts how well the learned function fits the training data. 
\begin{definition}
For a piecewise function, the number of regions is called the \underline{function granularity.}
For a TBN query, the \underline{query granularity} is the maximum granularity attained by any function inducible by that query.
\end{definition}
The granularity of a TBN query---and, to an extent, its expressiveness---depends on the location of evidence, query, and testing nodes.
For example, in Figure~\ref{fig:tbn-impact}, evidence \(\lambda_2\) on node \(E_2\) does not impact CPT selection at testing node \(Q\) since \(E_2\) is not an ancestor of \(Q\). 
Hence, the regions are one-dimensional, across evidence \(\lambda_1\). Contrast this with the TBN query in Figure~\ref{fig:2tbn-dag}. In this case, CPT selection is impacted by 
evidence on both \(E_1\) and \(E_2\), leading to two-dimensional regions as in Figures~\ref{fig:plot-2tbn} and~\ref{fig:plot-step}.

For completeness, we close this section by the following immediate result.
\begin{theorem}\label{theo:nn-tac}
The function computed by a neural network with ReLU and step activation functions can be computed by a TAC with size proportional to the neural network size.\footnote{The resulting TAC may have negative parameters and thresholds.}
\end{theorem}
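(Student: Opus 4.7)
The plan is to show the result by a local, gate-by-gate simulation: replace each neuron in the neural network by a constant-size TAC gadget, so that the overall TAC is linear in the original network size.

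First, I would handle the affine part. Each neuron computes $\sigma\bigl(\sum_i w_i I_i\bigr)$, where the inputs $I_i$ are outputs of parent neurons (or of the network inputs). The weighted sum can be built directly out of AC primitives: one multiplier per incoming edge (with the weight $w_i$ treated as a constant input, possibly negative), and a tree of adders combining them. This contributes a number of AC nodes proportional to the fan-in of the neuron, which sums across the whole network to something proportional to the number of edges, i.e.\ the neural network size.

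Next, I would simulate each activation function by one (or at most two) TAC units. For a step activation with threshold $T$, a single testing unit with input $x$, threshold $T$, and parameters $\theta^+ = 1$, $\theta^- = 0$ already computes the desired $0/1$ output exactly. For a ReLU, $\sigma(x) = \max(0, x)$, I would use a testing unit with input $x$, threshold $T = 0$, and parameters $\theta^+ = 1$, $\theta^- = 0$; this yields $1$ when $x \ge 0$ and $0$ otherwise, and I would then feed it together with $x$ into a single multiplier, producing $x$ when $x \ge 0$ and $0$ otherwise, which is exactly $\max(0, x)$. Each activation therefore costs $O(1)$ extra TAC nodes.

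Combining the two steps, every neuron is emulated by a subcircuit whose size is $O(1 + \text{fan-in})$, so the resulting TAC has size proportional to the neural network size, as required. I expect the proof to be essentially a bookkeeping argument; the only mildly delicate point is the one already flagged in the footnote, namely that the simulation may need negative constants (for negative weights and shifted thresholds), which is not forbidden by the TAC definition but does step outside the $[0,1]$ range natural for TBN parameters. I would mention explicitly that the theorem is stated for TACs (not TBNs), so nothing in the construction needs to correspond to a probabilistic CPT selection; this removes the only real conceptual obstacle.
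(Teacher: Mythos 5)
Your proposal is correct and matches the paper's proof essentially verbatim: the same testing unit with $\theta^+=1$, $\theta^-=0$ for the step activation, the same identity $g(x) = x \cdot f(x,0) = \max(0,x)$ for the ReLU, and an AC of multipliers and adders for the affine part. The extra bookkeeping about fan-in and the remark on negative weights are welcome but do not change the argument.
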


\begin{proof}
A testing unit \(f(x,T)\) with parameters \(\theta^+ =1\) and \(\theta^-=0\) corresponds to a step activation function with threshold \(T\).
Moreover, such a testing unit can emulate a ReLU \(g(x)\) as follows: \(g(x) = x \cdot f(x,0) = \max(0,x)\).  A neuron with a linear activation function can be simulated by an AC using multipliers and adders.  A neuron with a ReLU or step activation functions can be simulated by a TAC using the above AC and the TAC fragment of the activation function.
\end{proof}

\section{Generalized CPT Selection}
\label{sec:exp}

\begin{figure}[t]
  \centering
 \includegraphics[width=0.25\linewidth]{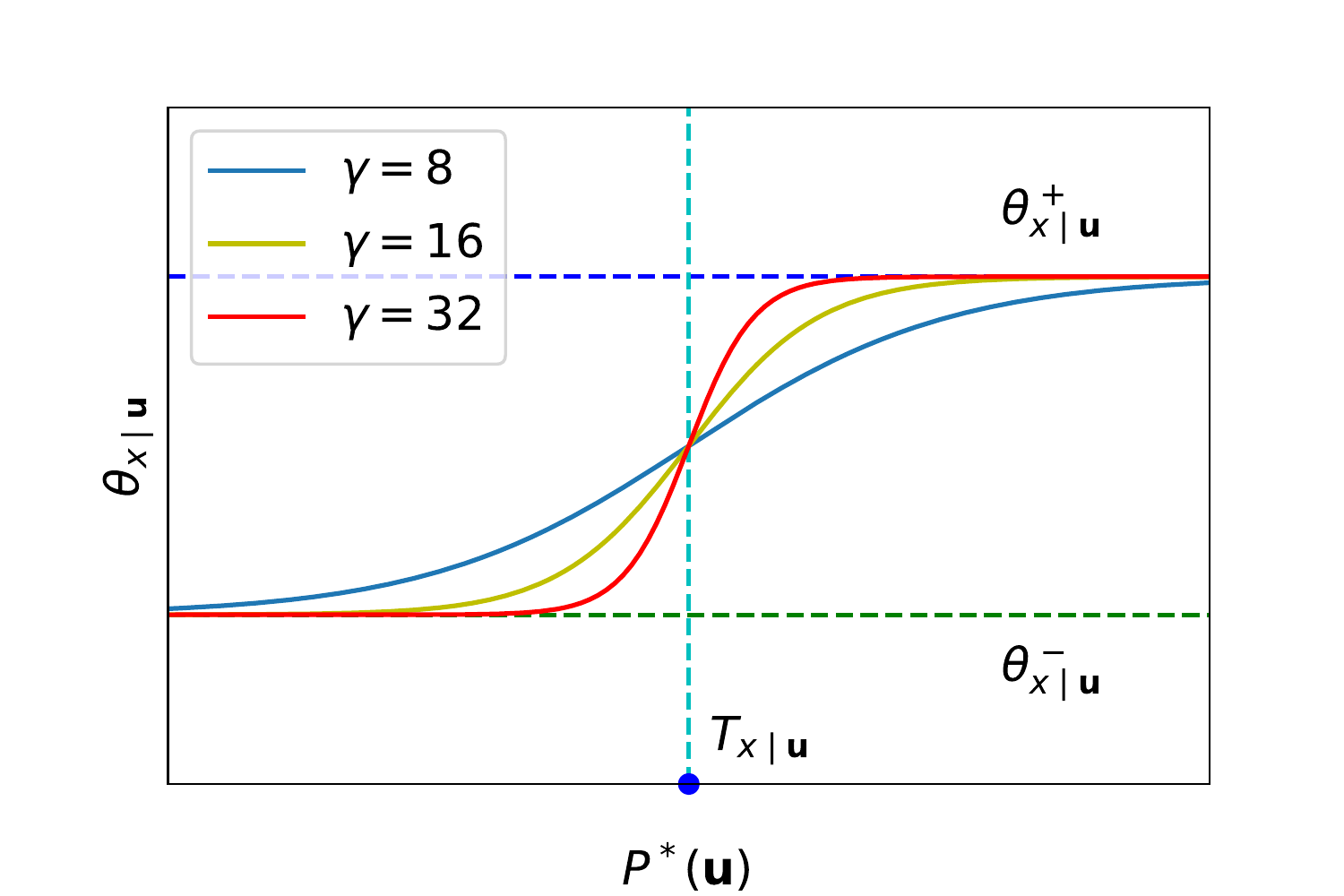}
 \caption{CPT selection using a sigmoid function.
 \label{fig:sigmoid}}
\end{figure}

TBNs get their expressiveness from the ability to select CPTs based on the available evidence. Selecting CPTs based on threshold tests, e.g., \(\Pstar(\u) \geq T_{X|\u}\),
is both simple and sufficient for universal approximation. However, one can employ more general and refined selection schemes, which can also facilitate the
learning of TAC parameters and thresholds using gradient descent methods. 
For example, one can use a sigmoid function to select CPTs based on the following equations; see Figure~\ref{fig:sigmoid}:

\begin{eqnarray}
\theta_{x|\u} 
& = & \tau_{\u} \cdot \theta^+_{x|\u} + (1-\tau_{\u}) \cdot \theta^-_{x|\u} \label{eq:sigmoid} \\
\tau_\u & = & [1+\exp\{-\gamma \cdot (\Pstar(\u) - T_{X|\u})\}]^{-1} \nonumber
\end{eqnarray}
Here, \(\gamma\) is a meta-parameter that controls the sigmoid slope and \(\tau_\u \in [0,1]\). As \(\gamma\) tends to \(\infty\), this selection scheme tends
towards implementing a threshold test. Moreover, as \(\gamma\) tends to \(0\), the selection tends towards a fixed CPT as in BNs. 
Threshold tests select a CPT for node \(X\) from a finite set of \(2^n\) different CPTs, where \(n\) is the number of states for parents \(\U\).
However, the sigmoid selects a weighted average of these CPTs, which is another CPT.\footnote{If \(P_1(X)\) and \(P_2(X)\) are distributions, 
and if \(\tau \in [0,1]\), then \(\pr(X) = \tau \cdot P_1(X) + (1-\tau) \cdot P_2(X)\) is also a distribution.}
The TAC compilation algorithm in~\ref{sec:compiling tacs} can easily accommodate this more general selection scheme, leading to TACs with sigmoid units that replace testing units.

\def\myplotwidth{0.22}

\begin{table*}[tp]
\caption{Functions \(f_1, \ldots, f_5\) and their TAC and AC approximations.
\label{table:plots}}
\vspace{3mm}
\centering
\renewcommand{\arraystretch}{1.25}
\begin{tabular}{c|c|c|c}
\(i\)  & $f_i(x,y)$ & learned TAC function & learned AC function \\ \hline
\raisebox{4.5\height}{1} &
  \includegraphics[width=\myplotwidth\linewidth]{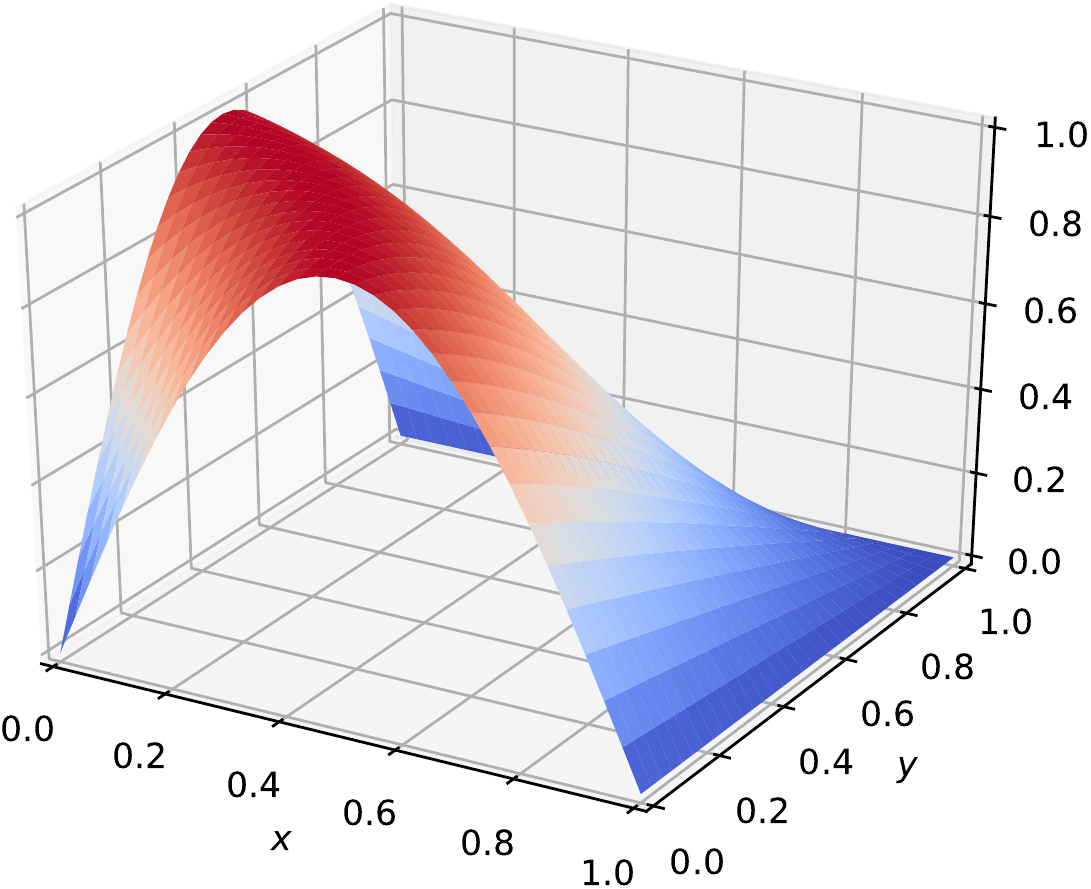} &
  \includegraphics[width=\myplotwidth\linewidth]{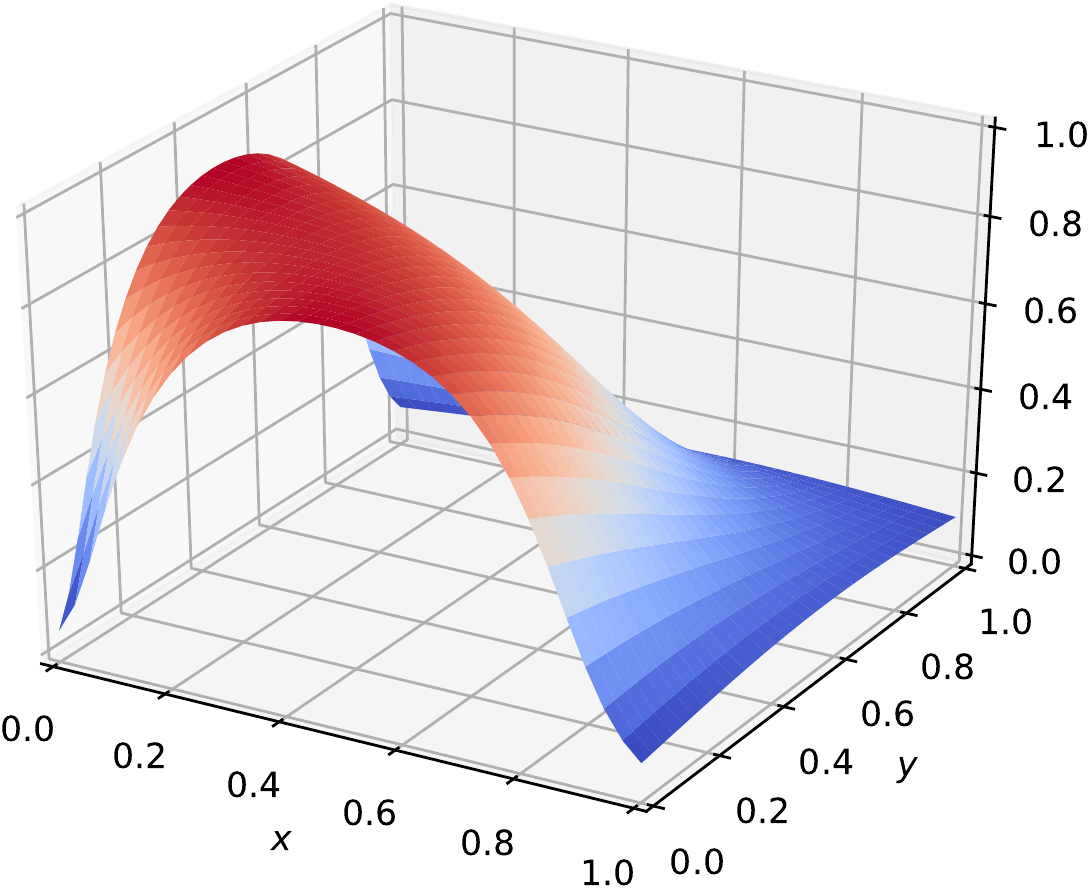} & 
  \includegraphics[width=\myplotwidth\linewidth]{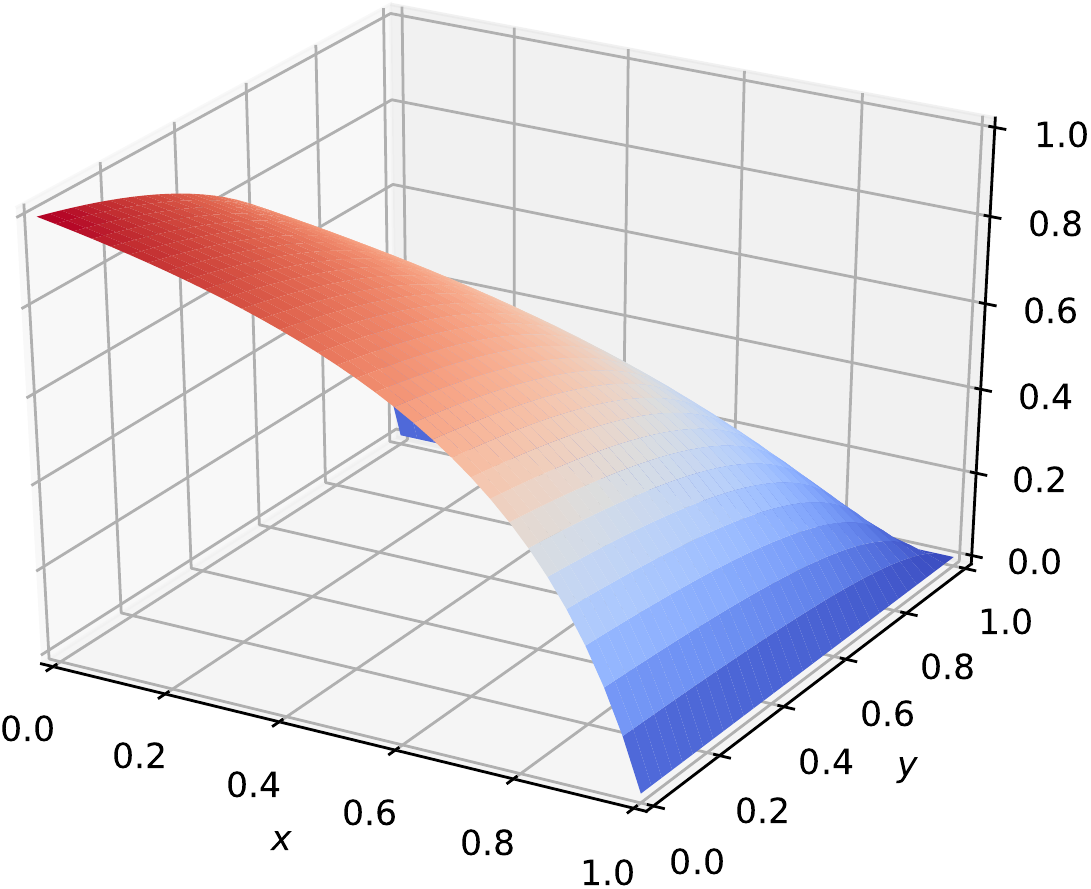} \\ \hline
\raisebox{4.5\height}{2} &
  \includegraphics[width=\myplotwidth\linewidth]{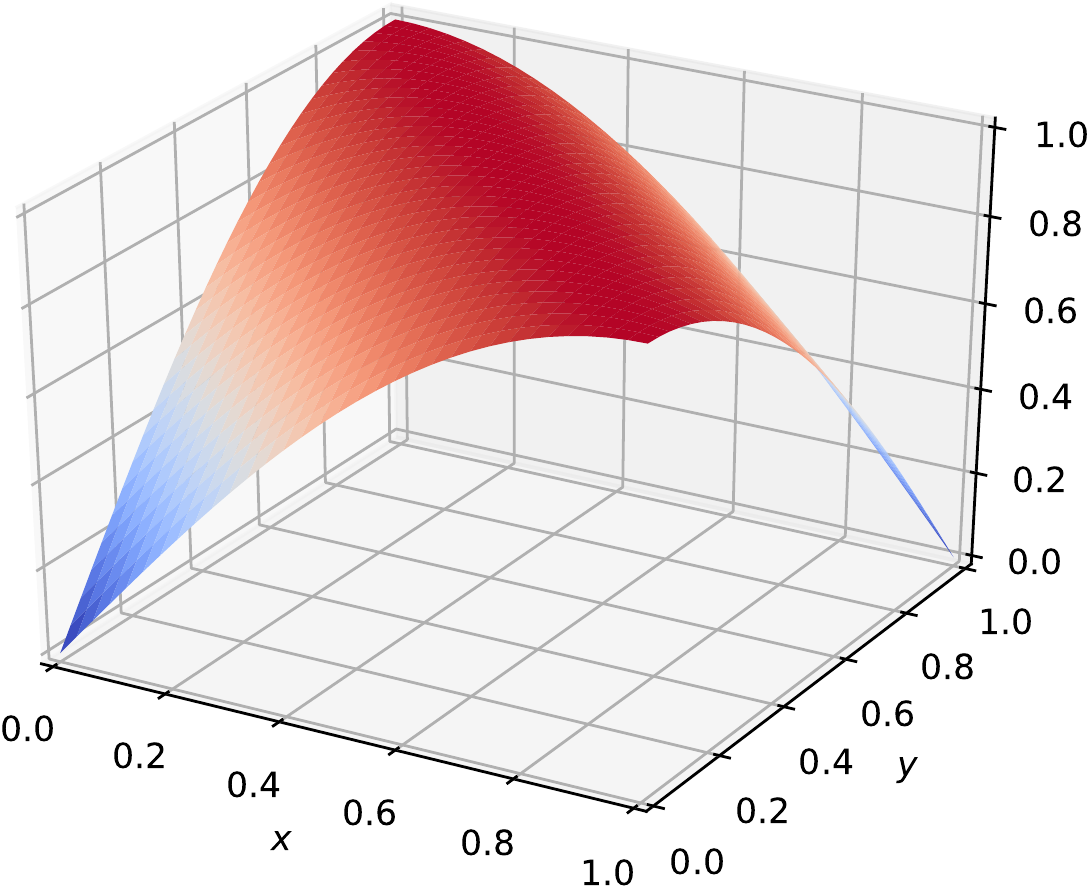} &
  \includegraphics[width=\myplotwidth\linewidth]{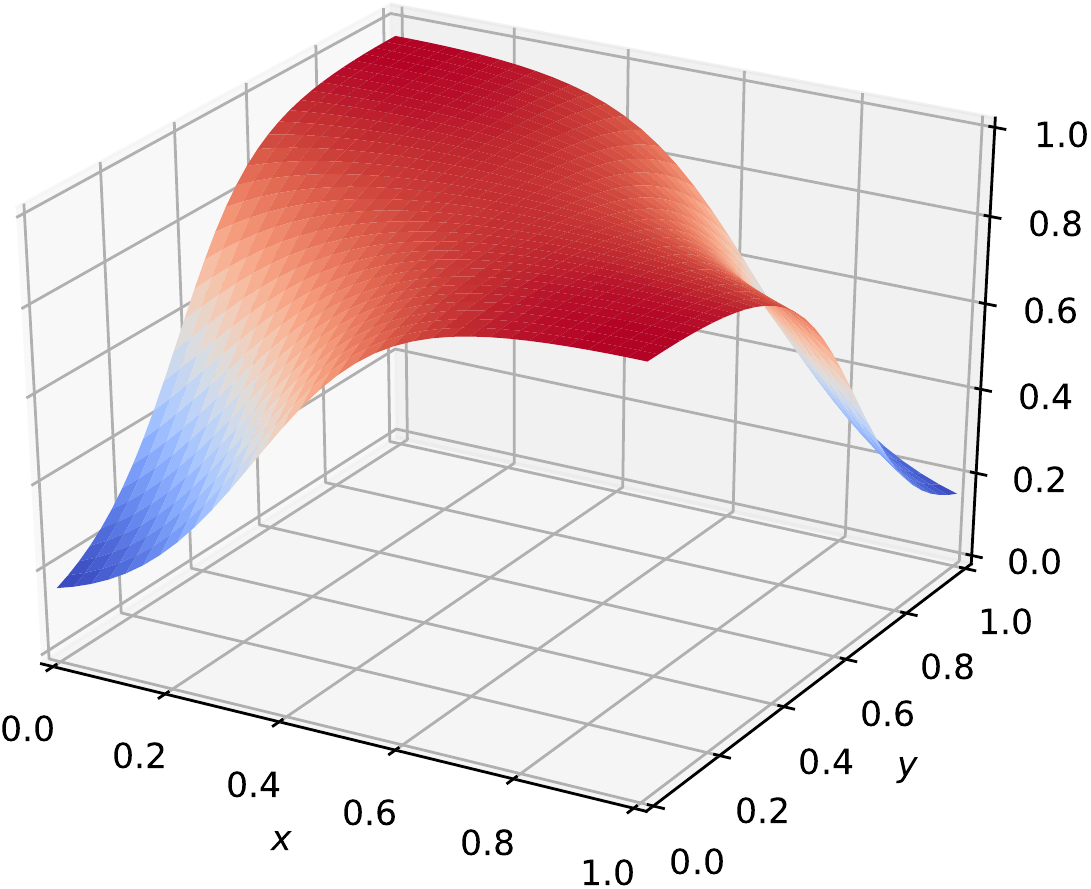} & 
  \includegraphics[width=\myplotwidth\linewidth]{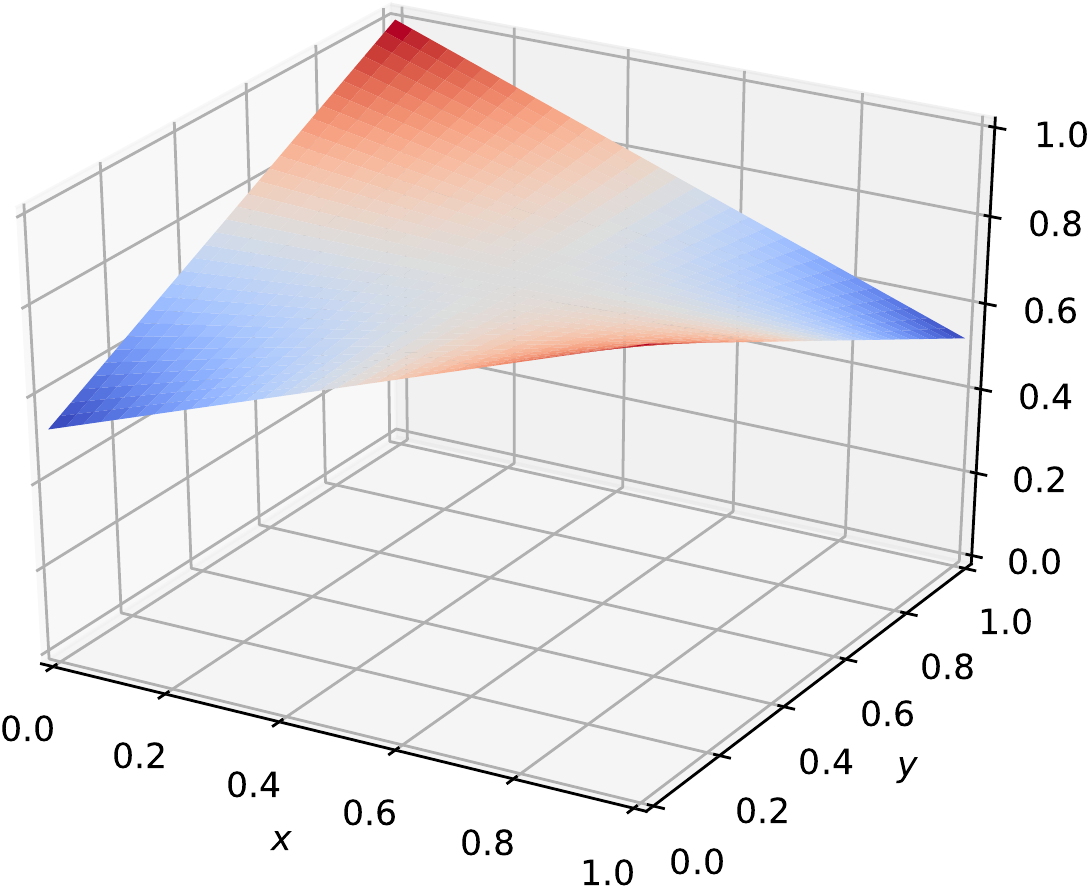} \\ \hline
\raisebox{4.5\height}{3} &
  \includegraphics[width=\myplotwidth\linewidth]{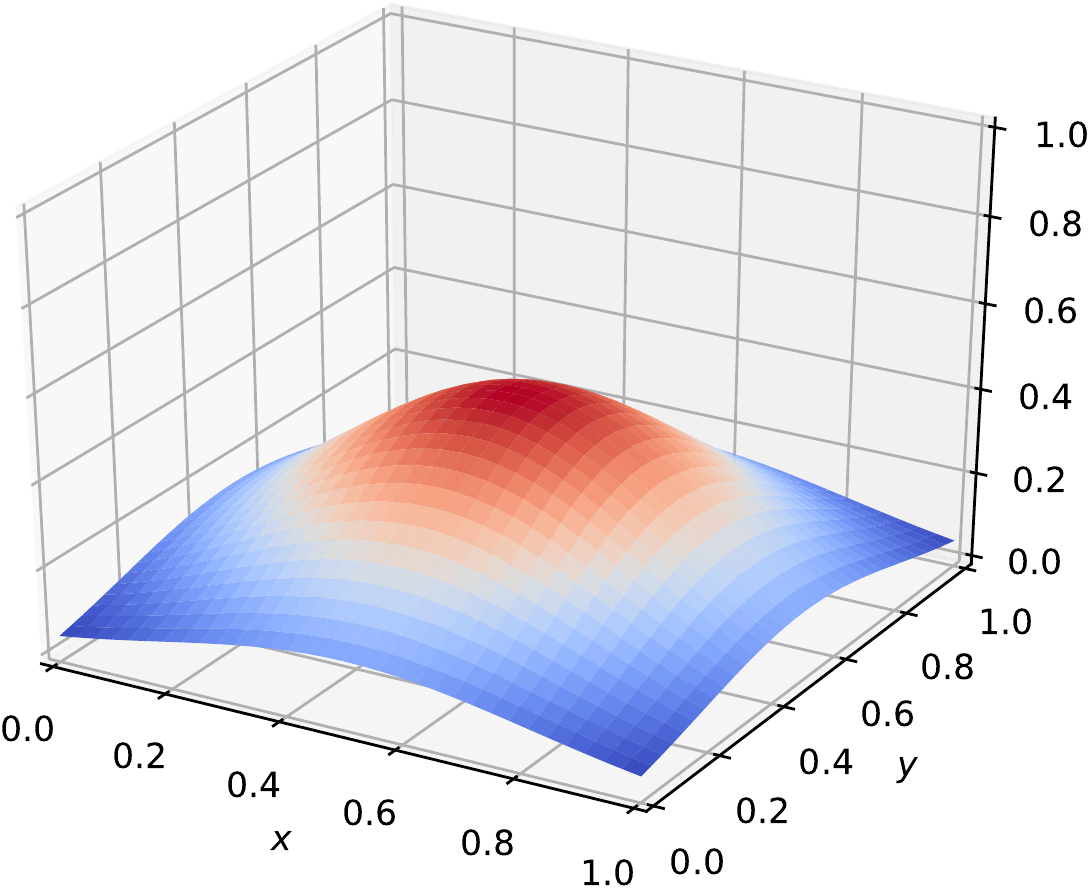} &
  \includegraphics[width=\myplotwidth\linewidth]{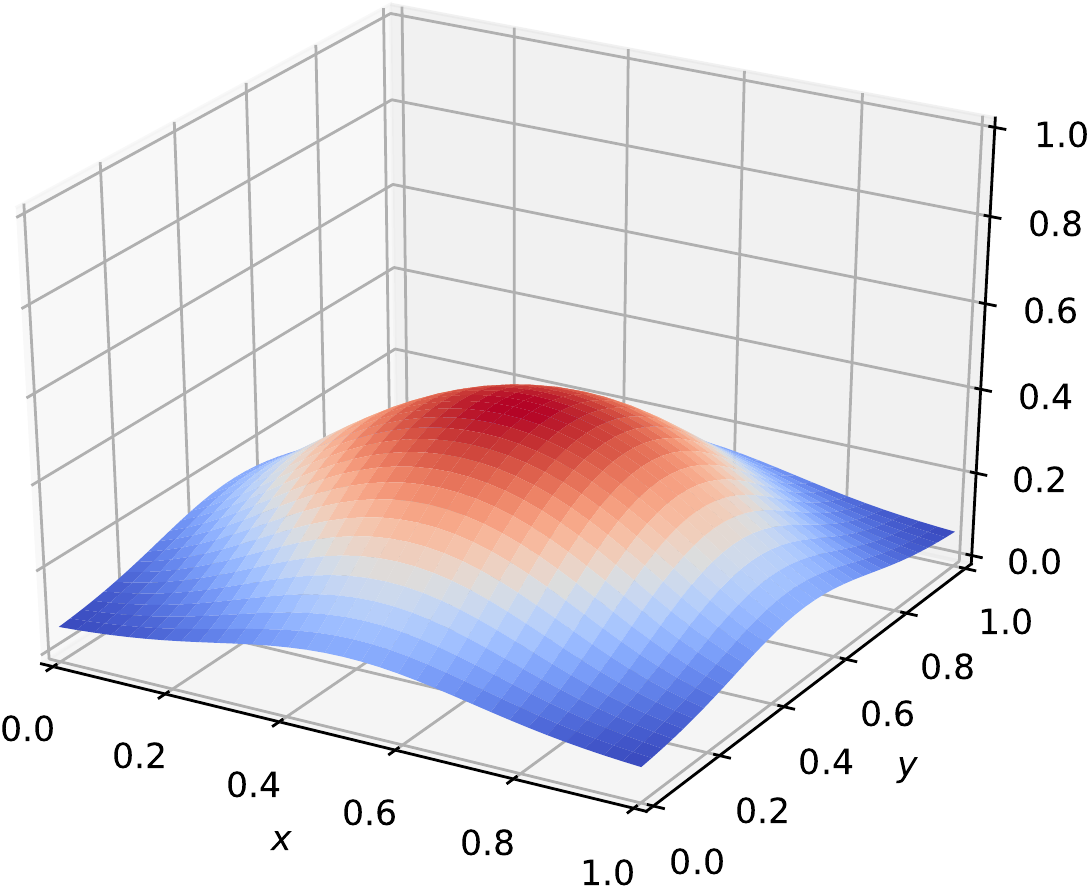} & 
  \includegraphics[width=\myplotwidth\linewidth]{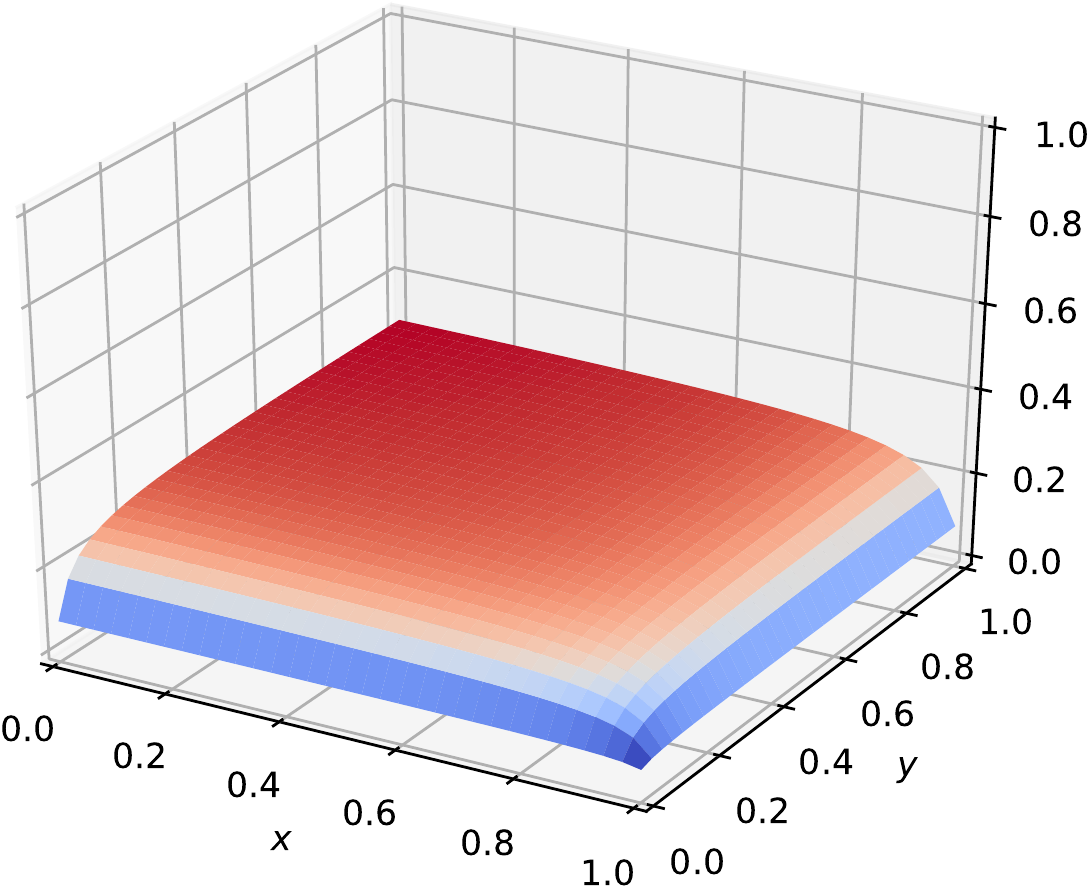} \\ \hline
\raisebox{4.5\height}{4} &
  \includegraphics[width=\myplotwidth\linewidth]{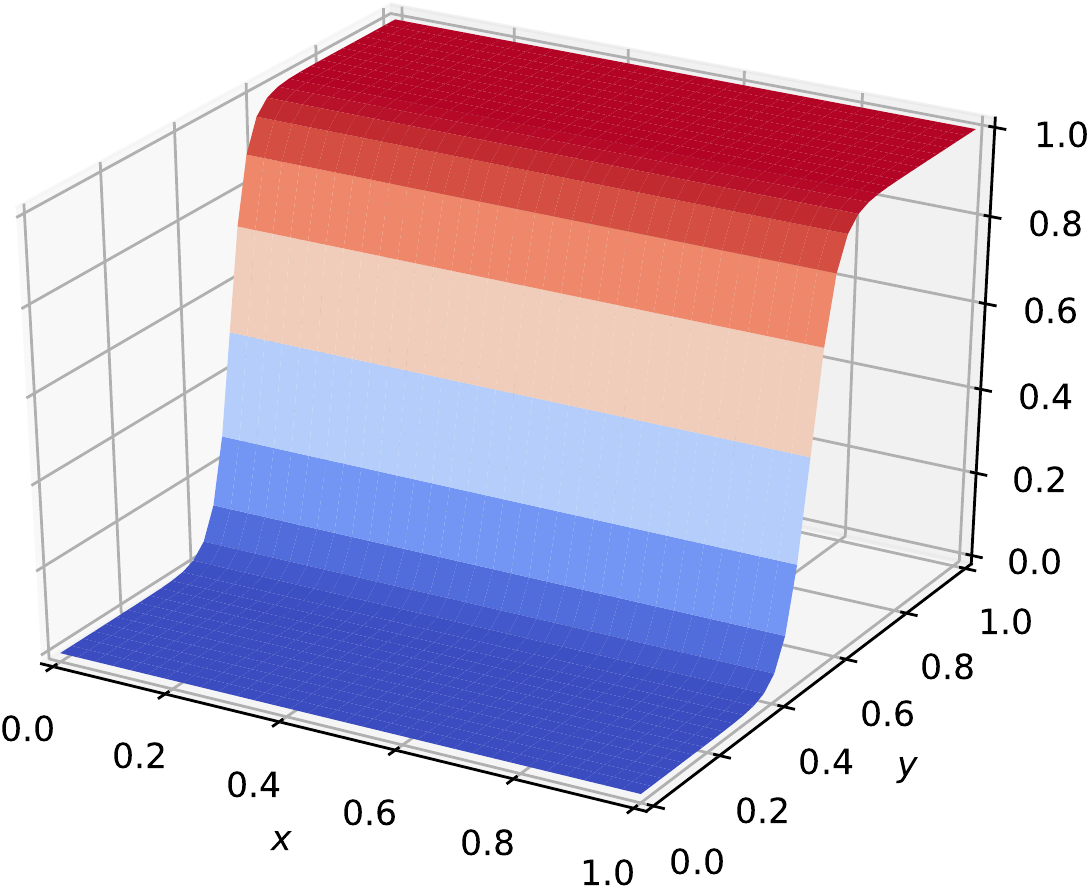} &
  \includegraphics[width=\myplotwidth\linewidth]{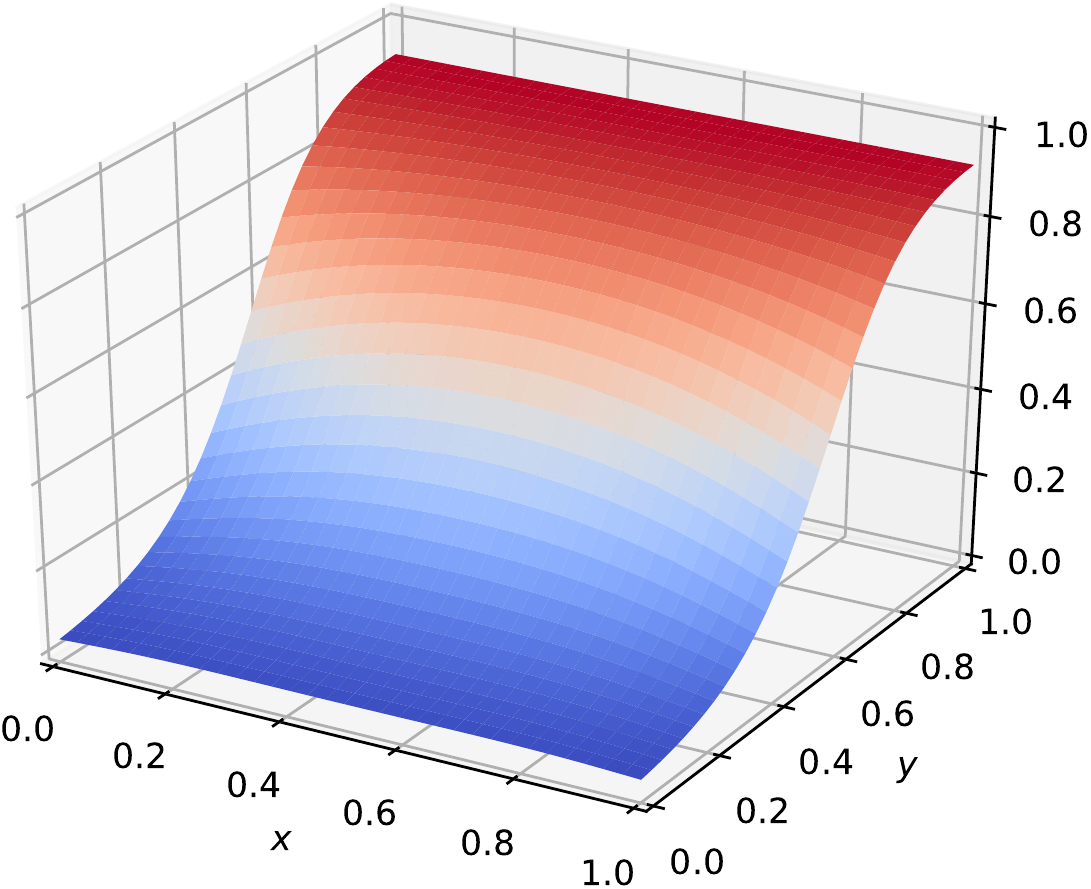} & 
  \includegraphics[width=\myplotwidth\linewidth]{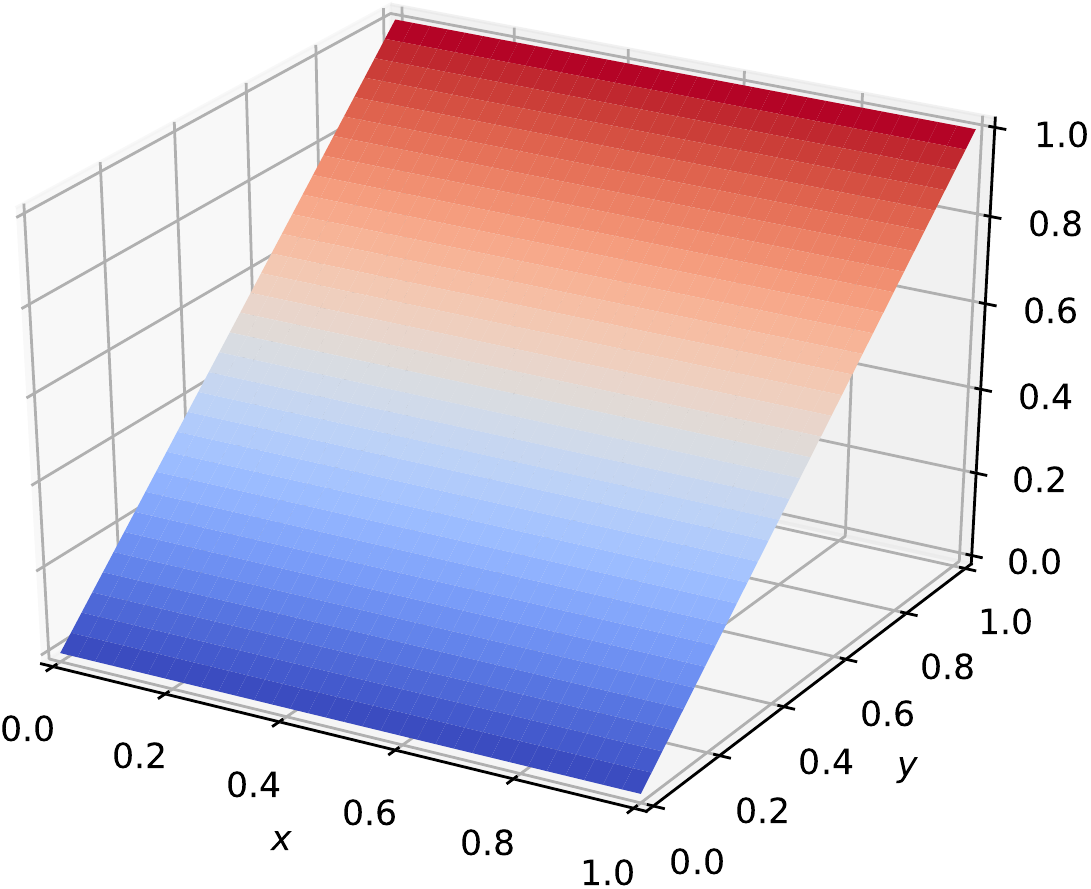} \\ \hline
\raisebox{4.5\height}{5} &
  \includegraphics[width=\myplotwidth\linewidth]{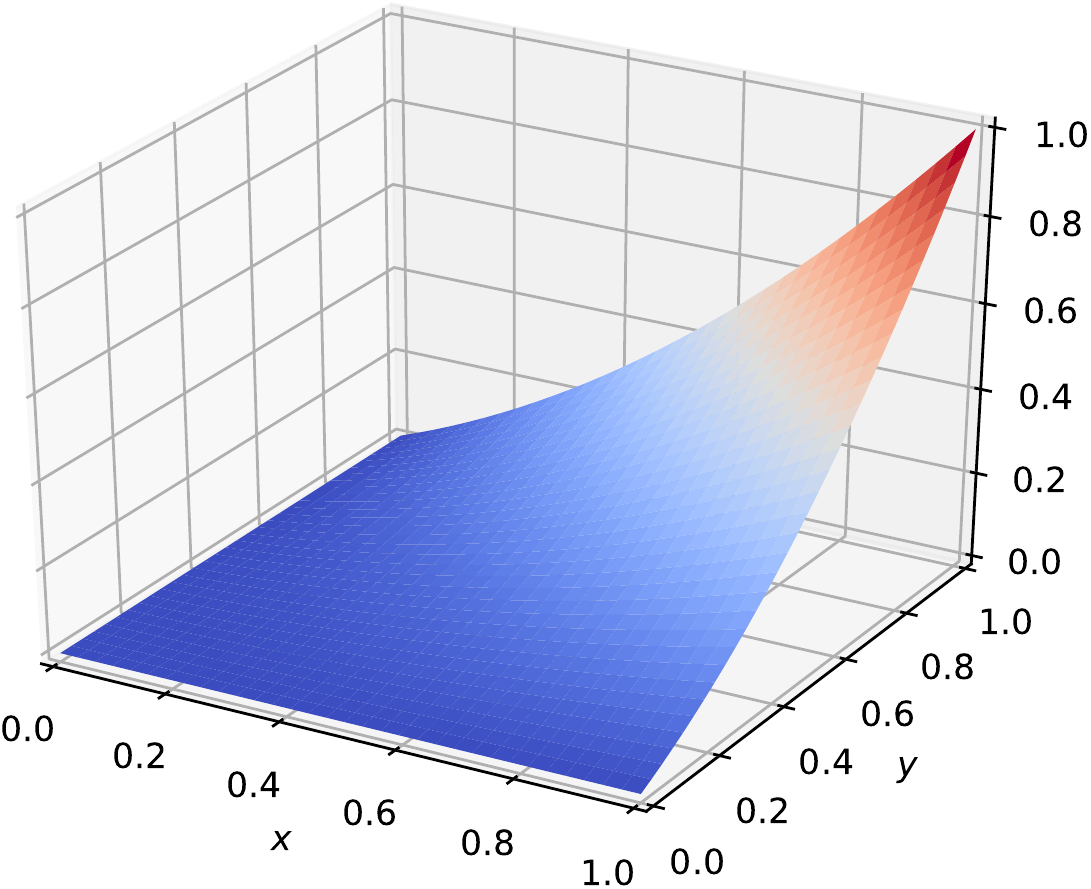} &
  \includegraphics[width=\myplotwidth\linewidth]{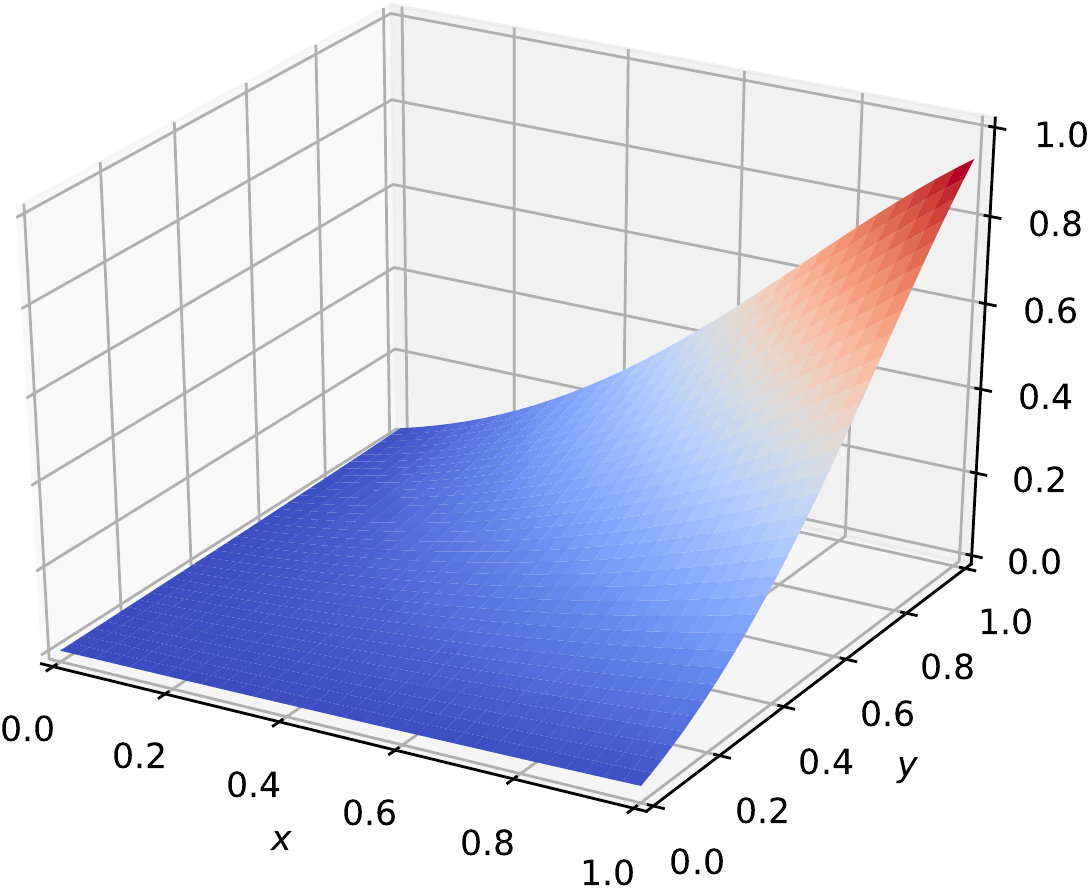} & 
  \includegraphics[width=\myplotwidth\linewidth]{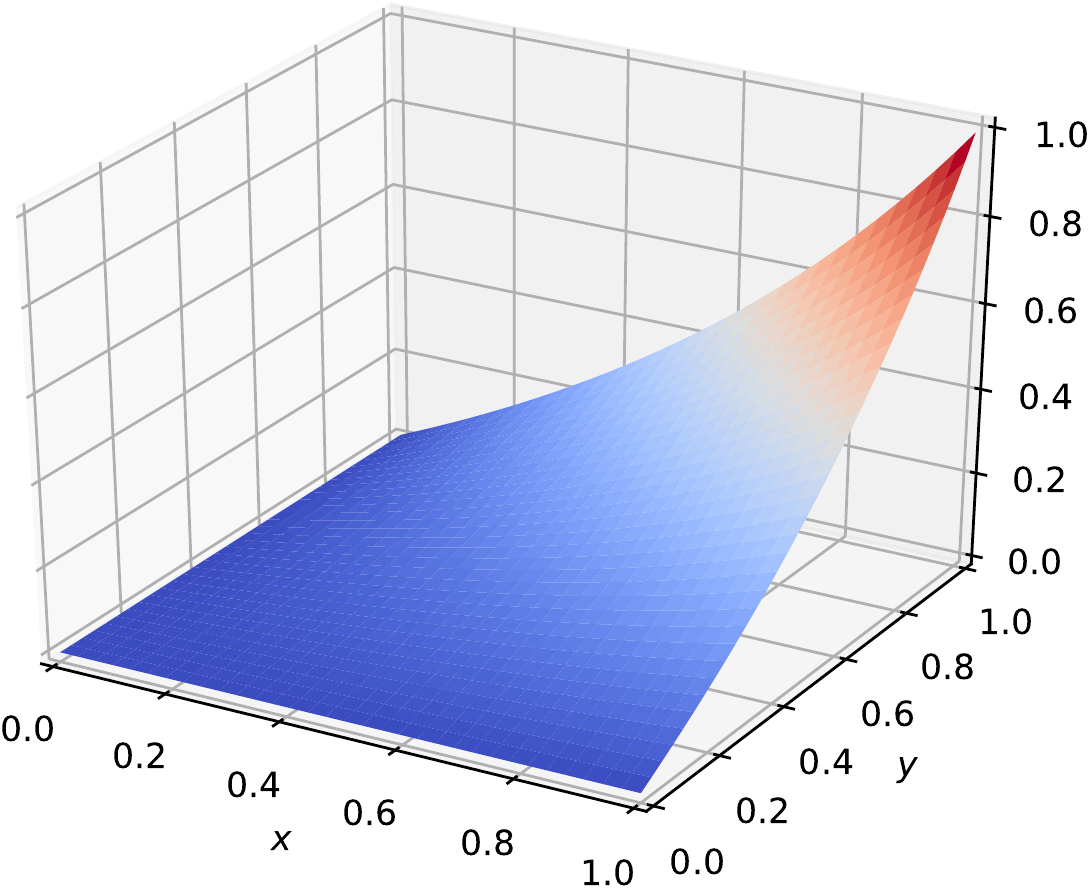} \\ \hline
\end{tabular}
\end{table*}

We next show some results on training TACs to approximate functions using labeled data and gradient descent. We considered the TBN in Figure~\ref{fig:2tbn-dag}, except that we 
used four testing nodes, \(T_1, \ldots, T_4\). We compiled a TAC assuming \(E_1\) and \(E_2\) are evidence nodes and \(Q\) is the query node.\footnote{The
TAC computes the conditional probability of \(\eql(Q,q)\) given soft evidence, \(\Pstar(q)\).}
We then generated labeled data from the following continuous functions, which are visualized in the first column of Table~\ref{table:plots}.
\begin{enumerate}
\item[] \(f_1(x,y) = \sin\left( \pi (1-x) (1-y) \right)\)
\item[] \(f_2(x,y) = \sin\left( \frac{\pi}{2} \cdot (2-x-y) \right)\)
\item[] \(f_3(x,y) = \frac{1}{2} \cdot e^{-5 \cdot (x-\frac{1}{2})^2 - 5 \cdot (y-\frac{1}{2})^2}\)
\item[] \(f_4(x,y) = \left( 1+e^{-32 (y-\frac{1}{2})} \right)^{-1}\)
\item[] \(f_5(x,y) = \frac{1}{2} \cdot xy (x+y)\) 
\end{enumerate}
For each one of these functions, each mapping values from \([0,1]^2\) to \([0,1]\), 
we selected data examples \((x,y)\) by dissecting the input space into a \(32 \times 32\) grid and then recording the resulting values of \(f(x,y)\) as labels.  
We then trained the TAC on this data using TensorFlow by minimizing mean-squared error as described in \ref{sec:train}.

The second column of Table~\ref{table:plots} depicts the TAC approximations of these functions. 
The TAC approximates well all functions except for \(f_4\), which is a sigmoid function with a steep slope.
For comparison, we also depict the AC approximations of these functions in the third column of Table~\ref{table:plots}.
Since ACs can only represent multi-linear functions or their quotients, we see that only function \(f_5\) is approximated well.
Moreover, the AC approximation of function \(f_4\) is worse than the TAC approximation (the AC effectively learns a plane). 

\def\myotherplotwidth{0.2}

\begin{table*}[t]
\caption{Functions \(g_1\) and \(g_2\) and their TAC approximations with increasing number of testing nodes.  \label{table:granularity}}
\vspace{3mm}
\centering
\setlength{\tabcolsep}{4pt}
\begin{tabular}{c|c|c|c|c}
\(i\) & \(g_i(x,y)\) & 4-test TBN & 8-test TBN & 16-test TBN \\ \hline
\raisebox{4.5\height}{1} &
\includegraphics[width=\myotherplotwidth\linewidth]{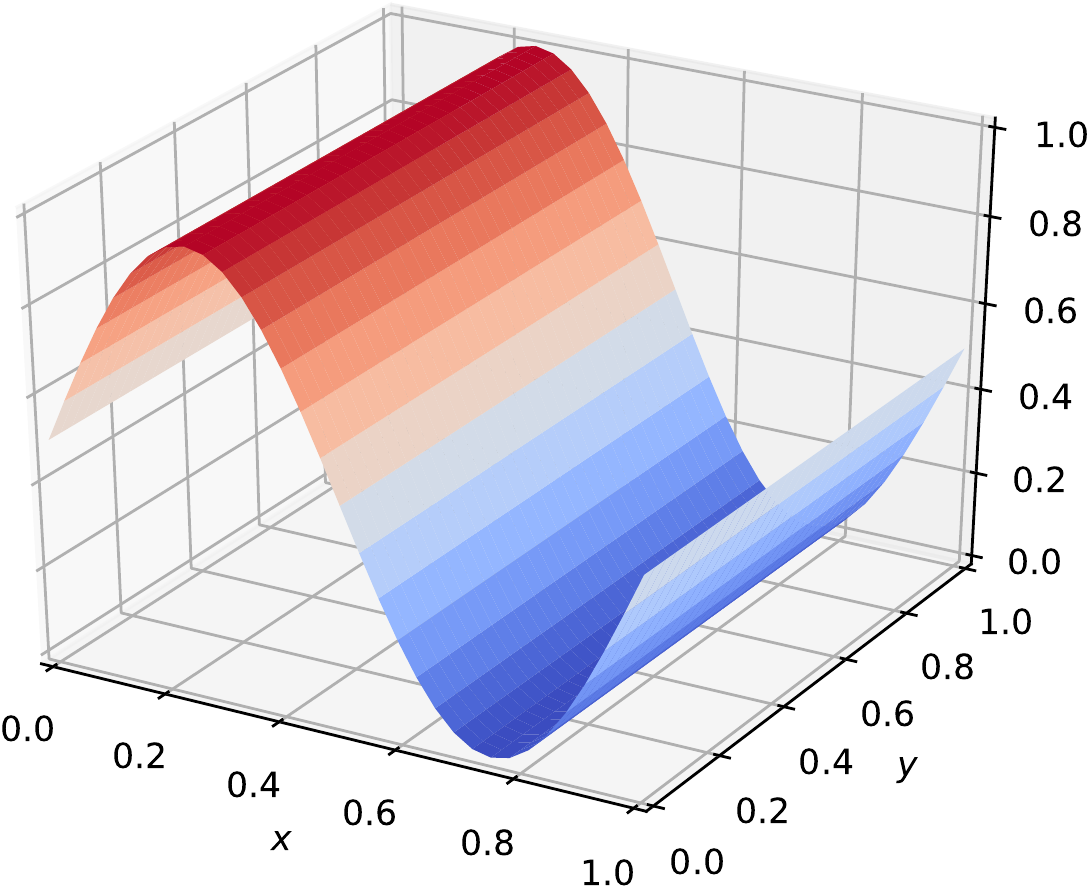} & 
\includegraphics[width=\myotherplotwidth\linewidth]{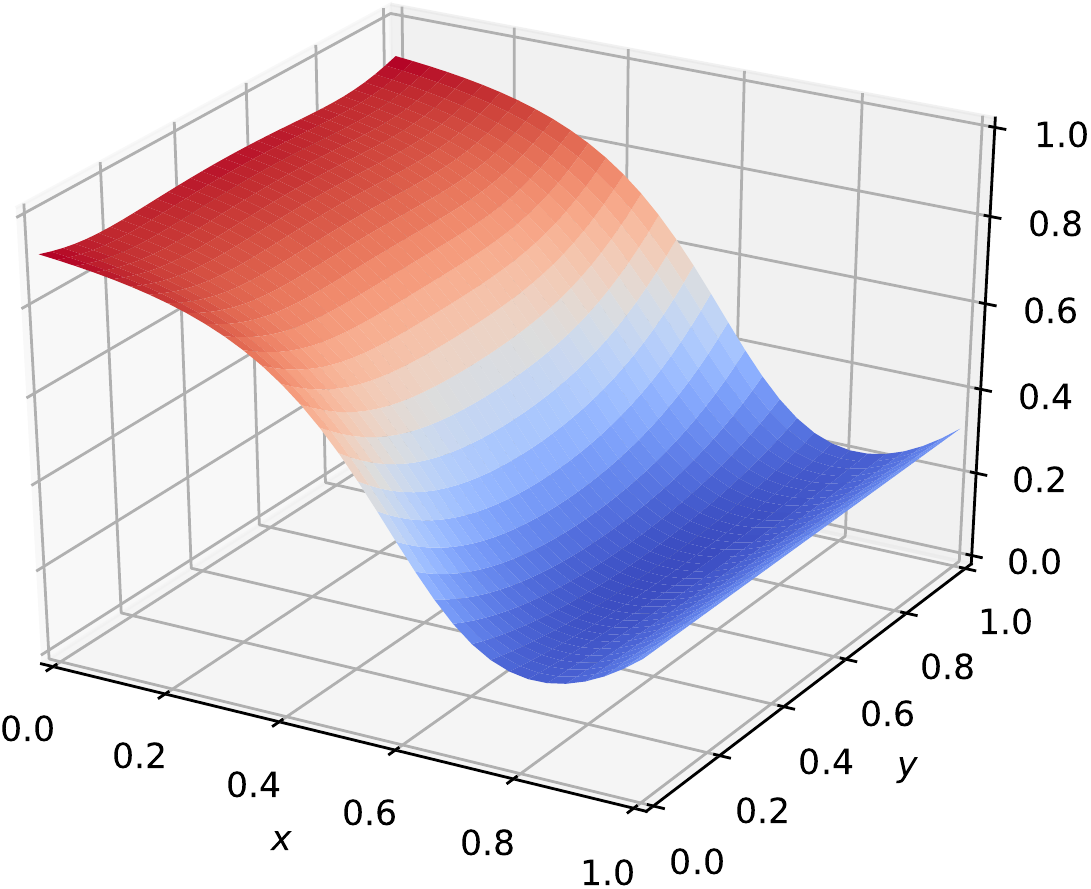} & 
\includegraphics[width=\myotherplotwidth\linewidth]{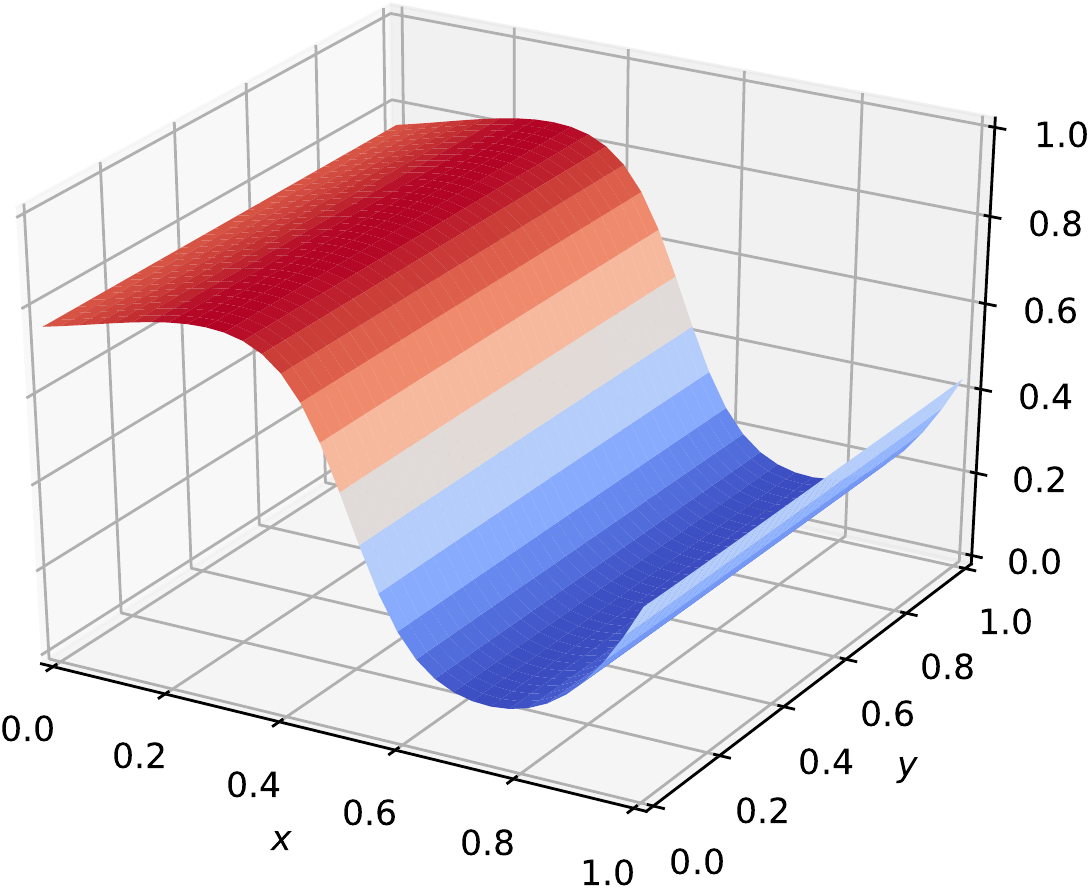} & 
\includegraphics[width=\myotherplotwidth\linewidth]{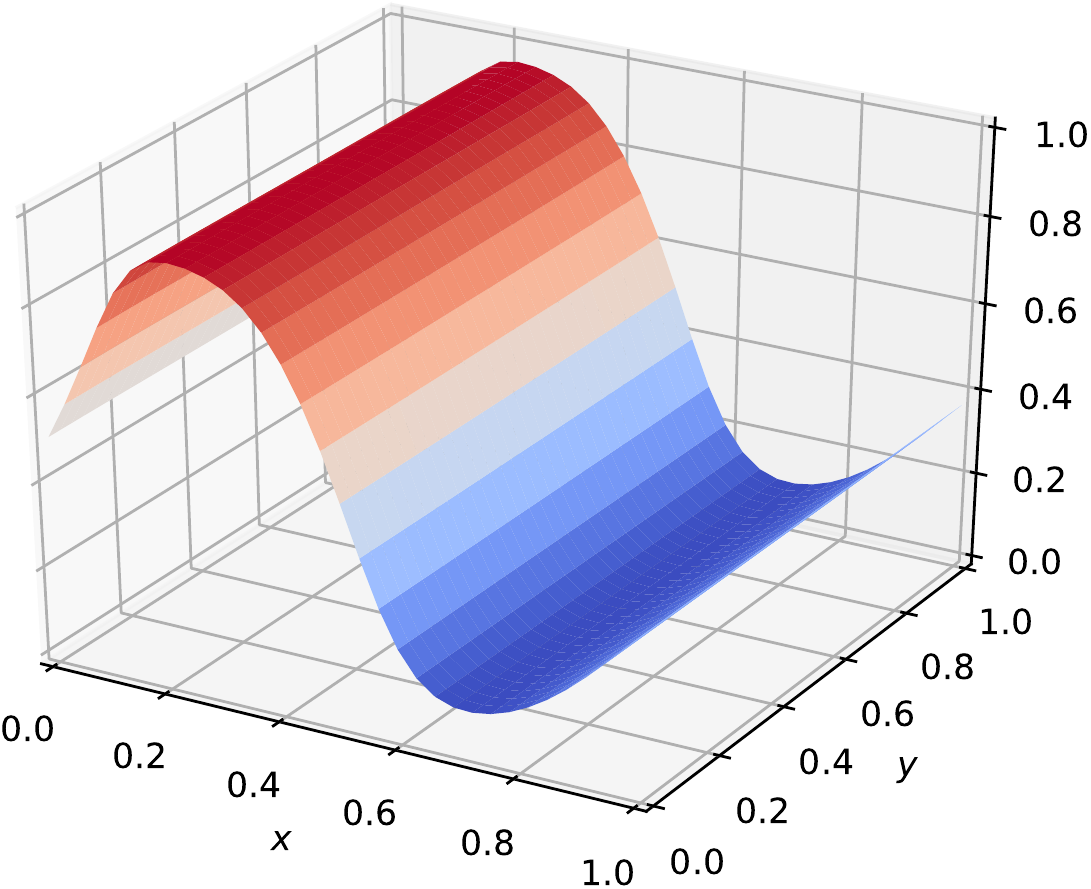} \\\hline
\raisebox{4.5\height}{2} &
\includegraphics[width=\myotherplotwidth\linewidth]{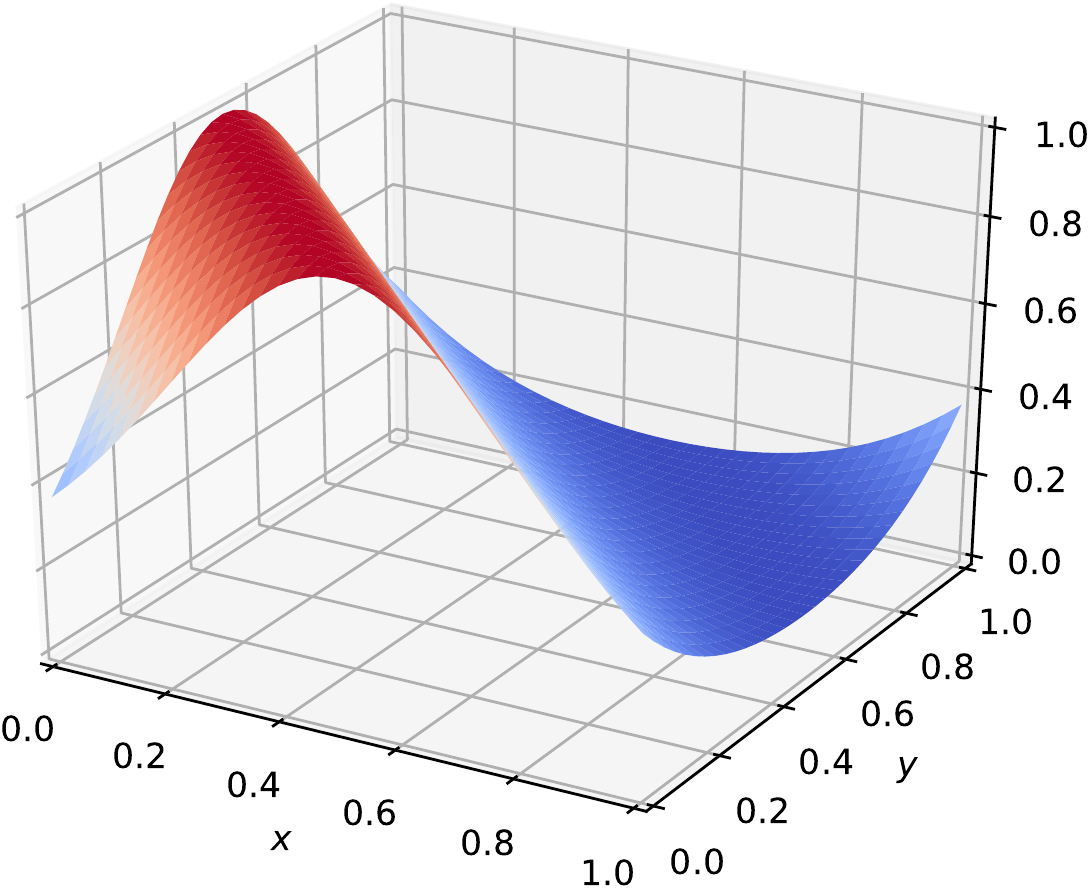} & 
\includegraphics[width=\myotherplotwidth\linewidth]{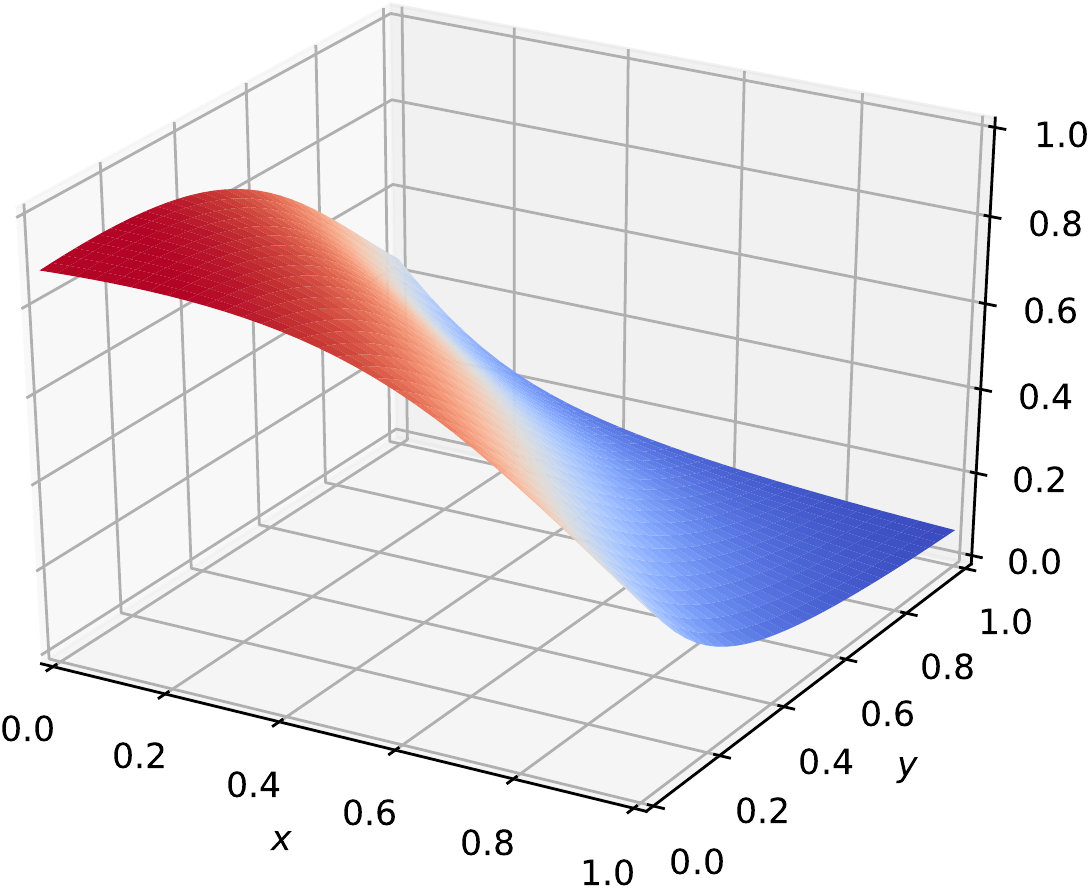} & 
\includegraphics[width=\myotherplotwidth\linewidth]{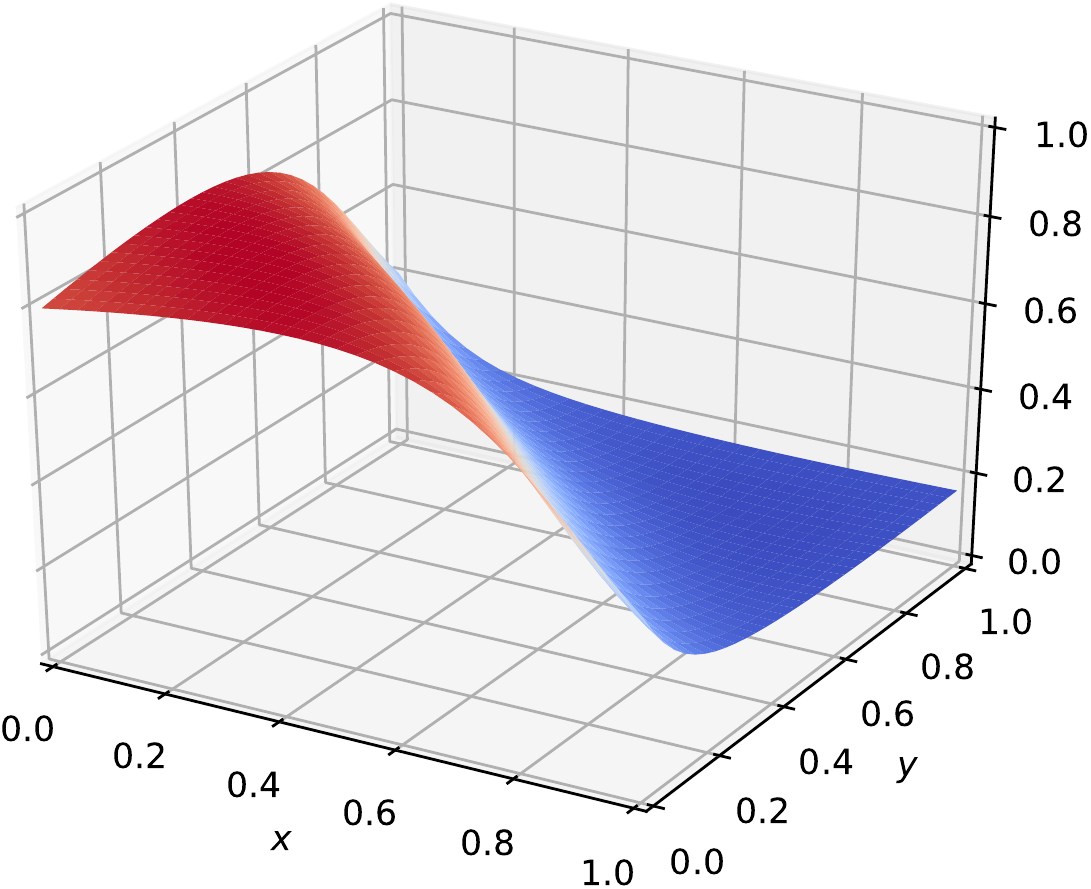} & 
\includegraphics[width=\myotherplotwidth\linewidth]{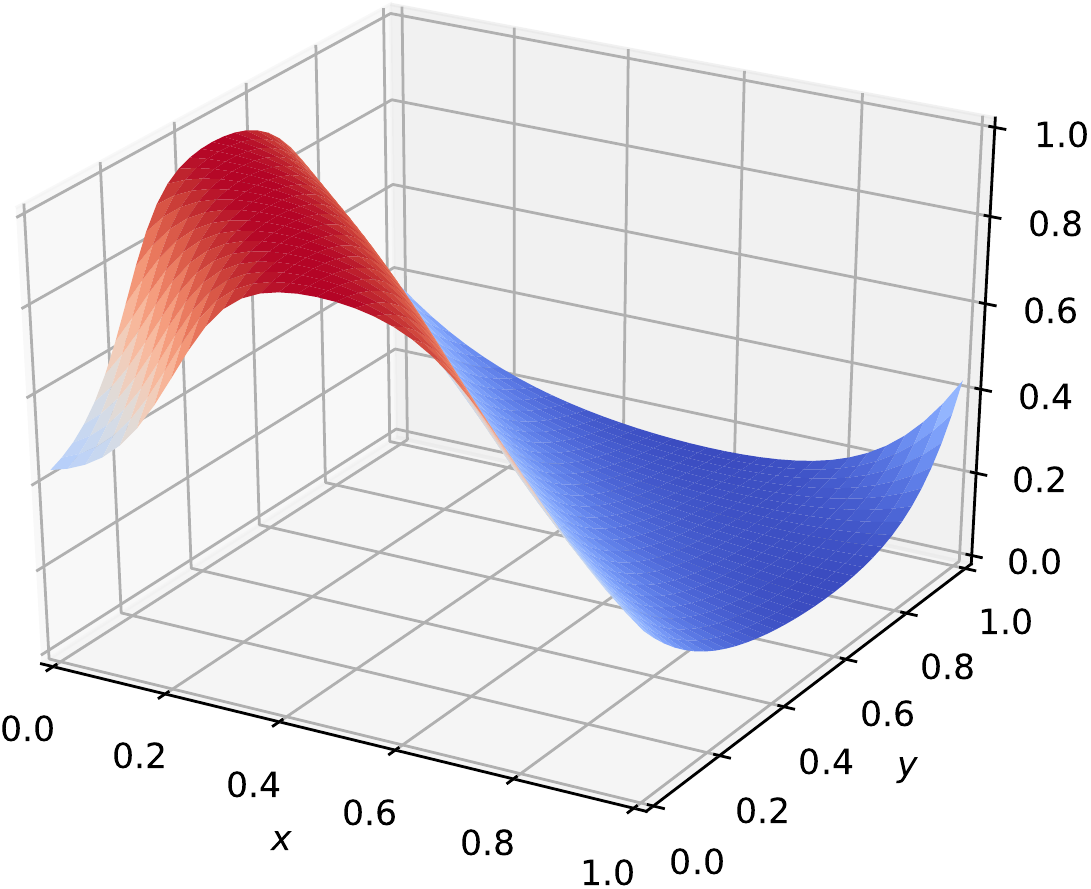} 
\end{tabular}
\end{table*}

We now consider two additional functions that are visualized in the first column of Table~\ref{table:granularity}:
\begin{enumerate}
\item[] \(g_1(x,y) = \frac{1}{2}+\frac{1}{2} \sin(2 \pi x)\)
\item[] \(g_2(x,y) = e^{\sin(\pi(x+y))-1}\)
\end{enumerate}
We approximated these functions using the same TBN query in Figure~\ref{fig:2tbn-dag}, but increased the number of testing nodes from \(4\) to \(8\) and finally to \(16\).
As the number of testing nodes increased, so did the number of sigmoid units in the compiled TACs.
The TAC approximations are depicted in the 2nd, 3rd and 4th columns of Table~\ref{table:granularity}.
As we increase the number of sigmoid units in the compiled TACs, we expect their expressiveness to also increase, leading to better approximations.  
This expectation is confirmed by Table~\ref{table:granularity}.

\section{Conclusion}
\label{sec:conclusion}

We considered the relative expressiveness of Bayesian and neural networks.  Neural networks are ``universal approximators'' of continuous functions, 
whereas marginal Bayesian network queries correspond to multi-linear functions (joint queries) or their quotients (conditional queries).  We proposed Testing Bayesian Networks (TBN) whose marginal queries are also ``universal approximators,'' and Testing Arithmetic Circuits (TAC) for computing these queries.  Moreover, we showed that marginal TBN queries and their TACs represent piecewise multi-linear functions, while highlighting that neural networks with ReLU activation functions
represent piecewise linear functions.
Finally, we generalized the concept of CPT selection that is responsible for the expressiveness of TBNs, which facilitated their training using gradient descent.  

TBNs and TACs move us a step forward towards fusing model-based and function-based approaches to AI. In particular, they provide a framework for integrating background knowledge into expressive 
functions that can be learned from labeled data. This can contribute to learning more robust functions based on less data, and to verifying, interpreting and explaining learned functions.

\section*{Acknowledgments}

We thank Yujia Shen, Andy Shih, and Yaacov Tarko for comments and
discussions on this paper.  This work has been partially supported by
NSF grant \#IIS-1514253, ONR grant \#N00014-18-1-2561 and DARPA XAI
grant \#N66001-17-2-4032.

\appendix

\section{Training TACs}
\label{sec:train}

We implemented a training algorithm for TACs using TensorFlow, which is based on a symbolic representation of the loss function to be minimized.  
In this tool, gradients are computed automatically and an optimizer (e.g., gradient descent method) can be used to minimize the given loss function.
Using TensorFlow terminology, we assume training data with real-valued {\em features} in \([0,1]\) and real-valued {\em labels} in \([0,1]\). 

TAC inputs assert soft evidence while the TAC output computes the conditional probability of some variable \(Q\) given soft evidence.  
The goal is to learn TAC parameters and thresholds that minimize the difference between TAC outputs and the labels of corresponding training examples.
We used mean squared error as our loss function:
\[
\frac{1}{N} \sum_{i=1}^N (TAC(\lambda_i) - y_i )^2.
\]
We have \(N\) training examples, with the \(i\)-th example consisting of input vector \(\lambda_i\) and label \(y_i\).  \(TAC(\lambda_i)\) is the TAC output under input vector \(\lambda_i\). 
It is the result of normalizing two TAC nodes for \(\pstar(q)\) and \(\pstar(\n(q))\).  Since the parameters of our TAC are probabilities, we use a logistic function to represent them:
\[
\theta_{x} = \frac{1}{1 + \exp\{-\tau_{x}\}}
\quad\quad
\theta_{\n(x)} = \frac{\exp\{-\tau_x\}}{1 + \exp\{-\tau_{x}\}}
\] 
and optimize instead the real-valued meta-parameters \(\tau_x\).
Our training algorithm assumed TACs with sigmoid selection units as given by Equation~\ref{eq:sigmoid} (we used \(\gamma = 16\)).\footnote{Testing 
units (see Section~\ref{sec:tac}) are challenging for gradient descent methods as they are not differentiable at \(x=T\) and have a zero 
gradient everywhere else (with respect to input \(x\)).}
Finally, while our description of the training algorithm is based on binary variables, the treatment applies directly to multi-valued variables too.

\section{Universal Approximation Theorem}
\label{sec:uat2}

\begin{figure}[t]
 \centering
 \subfigure[monotonic function]{\label{fig:monotonic}
   \includegraphics[width=0.22\linewidth]{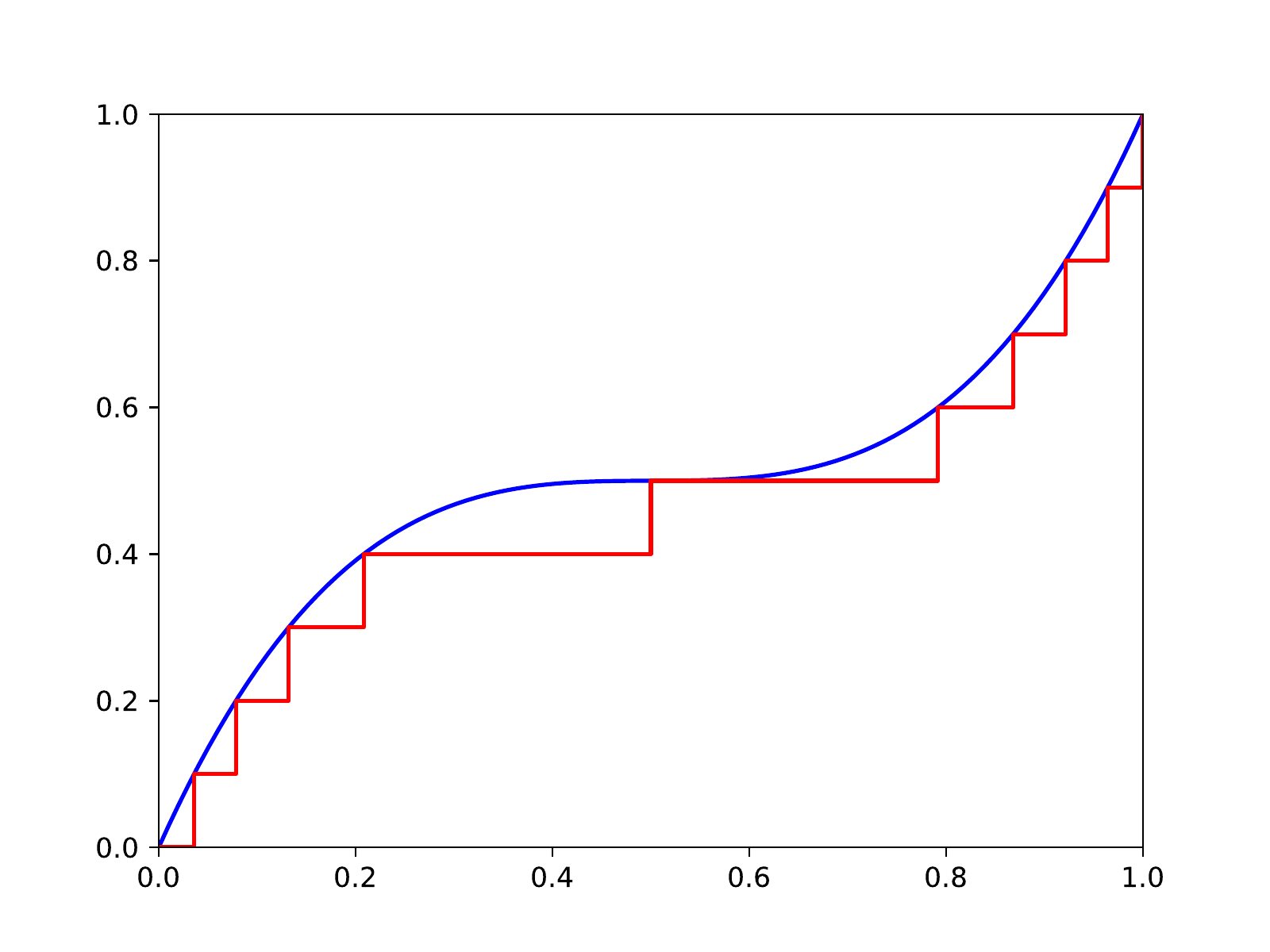}} 
\qquad
 \subfigure[non-monotonic function]{\label{fig:updown}
   \includegraphics[width=0.22\linewidth]{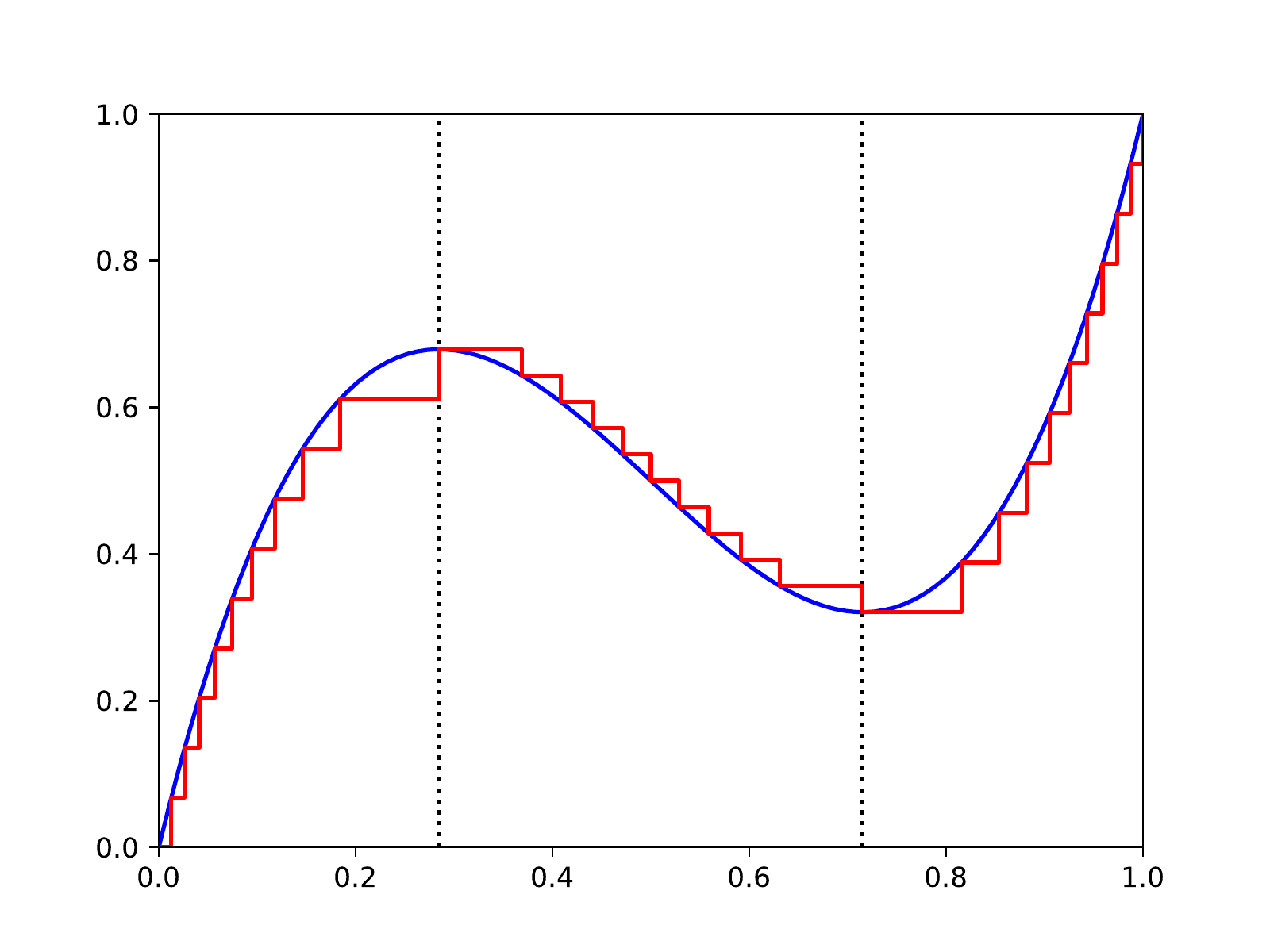}}
 \caption{Two functions and their approximations.}
\end{figure}

\begin{figure}[t]
  \centering
 \includegraphics[width=0.50\linewidth]{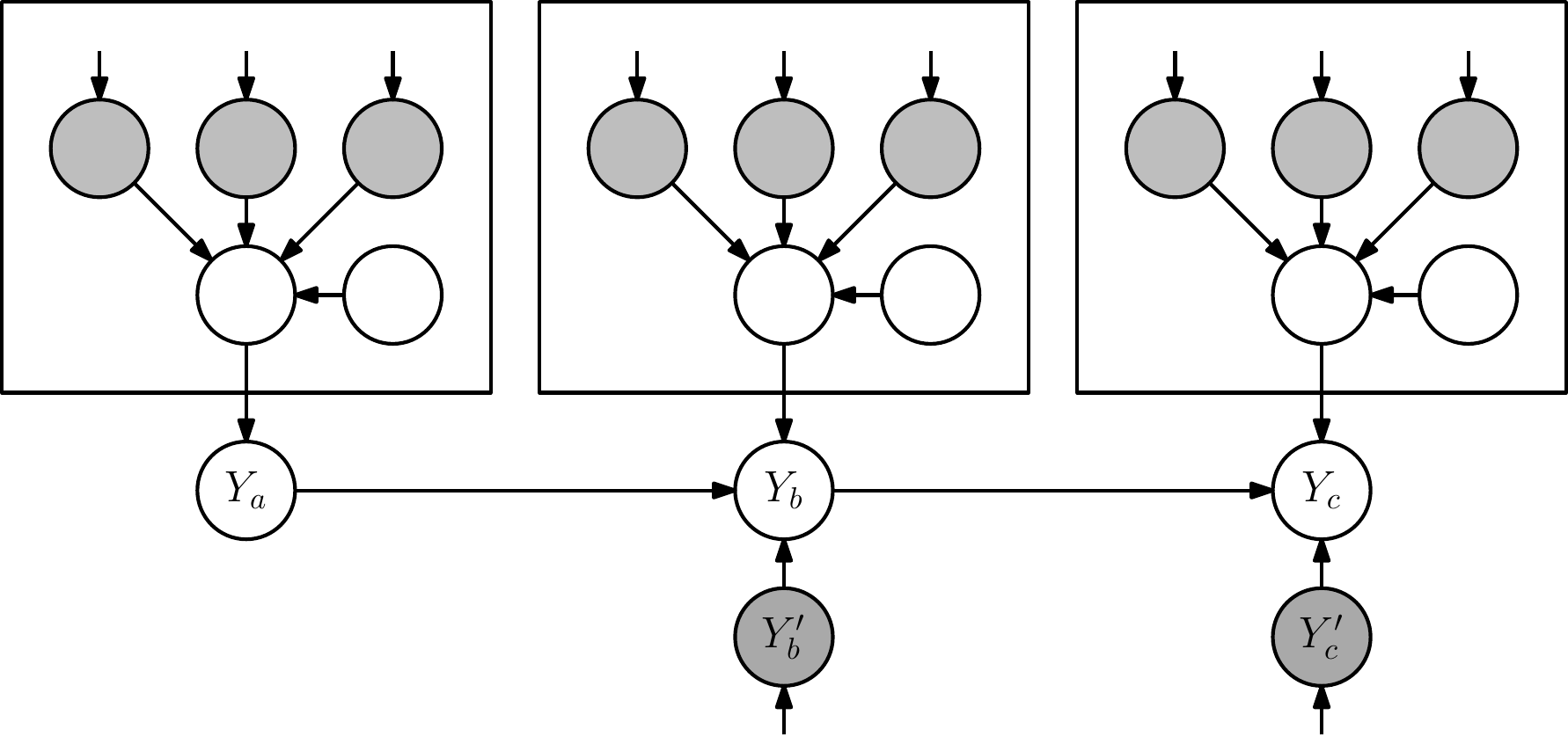}
 \caption{A chain of TBNs.}
 \label{fig:chain}
\end{figure}

We next extend Theorem~\ref{theo:uat} of Section~\ref{sec:uat} to non-monotonic functions and then to multivariate functions.

Figure~\ref{fig:updown} depicts a non-monotonic function \(f(x)\) from \([0,1]\) to \([0,1]\).  
To approximate \(f(x)\) with a TBN, we first split our function into monotonic pieces as shown in the figure (we used vertical dotted lines to mark the points 
where the sign of the first derivative \(\frac{df}{dx}\) changes).

Consider now the TBN of Figure~\ref{fig:chain} that approximates this non-monotonic function using the TBN in Figure~\ref{fig:TBN} for approximating monotonic functions as a building block (for clarity, in Figure~\ref{fig:chain}, we do not draw node \(Z\) on which soft evidence is asserted). Each of the three sub-networks enclosed in a box approximates each of the three monotonic components of our function \(f(x)\).  Let \(a,b\) and \(c\) denote the three components, and let \(T_{ab}\) and \(T_{bc}\) denote the values of \(x\) at the two borders.  We have a chain of nodes \(Y_a \to Y_b \to Y_c\) at the bottom of our network.  Node \(Y_a\) simply copies the value of its parent (its CPT is an equivalence constraint).  Node \(Y_b\) will either copy the value of \(Y_a\) or the value of its component \(b\), depending on whether the input \(x\) is below or above the threshold \(T_{ab}\).  Node \(Y_c\) will either copy the value of \(Y_b\) or the value of its component \(c\), depending on whether the input \(x\) is below are above the threshold \(T_{bc}\).  Testing nodes \(Y^\prime_b\) and \(Y^\prime_c\) perform these threshold tests and select CPTs that clamp themselves to a particular value, which determines where \(Y_b\) and \(Y_c\) copy their values from.  This sequence of threshold tests will, given an input \(x\), select the approximation of \(f(x)\) from the appropriate component, which is finally the probability of \(Y_c.\)  The size of this construction is linear in the number of times the sign of the first derivative changes.

The generalization to multivariate functions is analogous to the approximation of functions using ``ridge'' and ``bump'' functions; see, e.g., \citep{Jones90,LapedesF87}.  First, we use our construction for approximating a univariate function as a building block to approximate a function \(f(x_1,x_2)\) over two variables.  In particular, we construct a TBN for \(N\) univariate functions \(f_{x_2}(x_1) = f(x_1,x_2)\) for \(N\) values of \(x_2\) from \(0\) to \(1\).  As we did previously for approximating non-monotonic univariate functions by pieces, we construct a chain of these \(N\) TBNs and copy the output of the appropriate component based on the input value of \(x_2\).  To approximate a function \(f(x_1,\ldots,x_n)\) over \(n\) variables, we construct \(N\) TBNs that approximate functions \(f_n(x_1,\ldots,x_{n-1})\) over \(n-1\) variables, and then perform a similar construction.

The error in the approximation can be improved arbitrarily by increasing \(N\) (under some assumptions, i.e., the change in \(f\) is bounded for small changes in the input). Moreover, this construction is exponential in the number of input variables \(n\).  Related constructions for showing neural networks (with one or two hidden layers) are ``universal approximators'' are also exponential in the dimension of the function.  

\section{Compiling TBN Queries into TACs}
\label{sec:compiling tacs}

A BN query can be compiled into an AC; see Figure~\ref{fig:function}.  
Similarly, a TBN query can be compiled into a TAC; see Figure~\ref{fig:compiled tac}.
Compiling an AC can be done by keeping a symbolic trace of an elimination algorithm as described in~\cite[Chapter~12]{Darwiche09}.  
We next provide an algorithm for compiling a TAC by keeping a symbolic trace of the factor elimination algorithm described in~\citep[Chapter~7]{Darwiche09}. 
The algorithm assumes TBNs with threshold-based selection, but can be easily adjusted to handle sigmoid-based selection.

\subsection{Factor Elimination}

A {\em factor} \(f(\X)\) is a mapping from instantiations \(\x\) into positive, real numbers. Factor elimination is a variation on variable elimination \citep{zhangJAIR96a,dechterUAI96}, in which factors are systematically eliminated according to the following process. Consider a factor \(f(\X)\), where variables \(\Y \subseteq \X\) appear only in this factor. To eliminate this factor, we simply sum-out variables \(\Y\) from the factor, leading to \(\sum_\Y f(\X)\), and then multiply the result by some other factor.\footnote{Summing-out and multiplication are standard operations on factors; see, e.g., \cite[Chapter 6]{Darwiche09}.} We can use this method to compute the marginal on any query variable \(Q\) in a BN. We start with the network CPTs as our initial factors. We then identify a factor that contains \(Q\) and call it the {\em root factor.} We successively eliminate all factors, one by one, except for the root factor. At this point, we are left with a single factor \(f(\Z,Q)\), where \(g(Q) = \sum_\Z f(\Z,Q)\) is the marginal on \(Q\). Any elimination order works, but the specific order used impacts the algorithm's complexity.

We can assert soft evidence \((\lambda_1,\ldots,\lambda_k)\) on a variable \(X\) by adjusting its CPT as follows.  The entry of each row corresponding to value \(x_i\) of \(X\) is multiplied by \(\lambda_i\).  If we have evidence, the computed factor \(g(Q)\) needs to be normalized to obtain the posterior distribution on variable \(Q\).

\subsection{Symbolic Factor Elimination}

A TBN query can be compiled into a TAC by keeping a symbolic trace of the factor elimination algorithm.  This requires working with \emph{symbolic} factors, whose entries are circuit nodes instead of numbers, and is analogous to compiling a BN into an AC by keeping a symbolic trace of elimination algorithms \citep{Chavira.Darwiche.Ijcai.2007,Darwiche09}. 
We next illustrate these concepts by providing an example of compiling an AC for a BN query. We then show how the technique can be extended to compiling TACs, also using a concrete example. 
We then follow by a formal statement of the compilation algorithm and its complexity.

Consider a BN over binary variables \(A, B, C\) with edges \(A \to B\) and \(A \to C\) and the following CPTs.
\begin{center}
\small
\begin{tabular}{c|c}
\(A\)      & \(\Theta_A\) \\\hline
\(a\)       & \(\theta_a\) \\
\(\n(a)\) & \(\theta_{\n(a)}\) \\
\end{tabular}
\quad
\begin{tabular}{cc|c}
\(A\)     &     \(B\) &  \(\Theta_{B|A}\) \\\hline
\(a\)     &     \(b\)  & \(\theta_{b|a}\) \\
\(a\)     &    \(\n(b)\) & \(\theta_{\n(b)|a}\) \\
\(\n(a)\) &     \(b\) & \(\theta_{b|\n(a)}\) \\
\(\n(a)\) & \(\n(b)\) & \(\theta_{\n(b)|\n(a)}\) \\
\end{tabular}
\quad
\begin{tabular}{cc|c}
\(A\)     &     \(C\) &  \(\Theta_{C|A}\) \\\hline
\(a\)     &     \(c\)  & \(\theta_{c|a}\) \\
\(a\)     &    \(\n(c)\) & \(\theta_{\n(c)|a}\) \\
\(\n(a)\) &     \(c\) & \(\theta_{c|\n(a)}\) \\
\(\n(a)\) & \(\n(c)\) & \(\theta_{\n(c)|\n(a)}\) \\
\end{tabular}
\end{center}
Suppose we have soft evidence on variables \(A\) and \(C\) and want to compile an AC that computes the marginal \((\pstar(b),\pstar(\n(b)))\).  
We start with the following CPTs that incorporate evidence (we use \(\sumnode(n_1,n_2)\) and \(\prodnode(n_1,n_2)\) to denote sum and product circuit nodes with \(n_1\) and \(n_2\) as their children).
\begin{center}
\small
\begin{tabular}{c|c}
\(A\) & \(\Theta_A\) \\\hline
\(a\)     & \(n_1  = \prodnode(\lambda_a,\theta_a)\) \\
\(\n(a)\) & \(n_2 = \prodnode(\lambda_{\n(a)},\theta_{\n(a)})\) \\
\end{tabular}
\quad
\begin{tabular}{cc|c}
\(A\)     &     \(B\) &  \(\Theta_{B|A}\) \\\hline
\(a\)     &     \(b\)  & \(\theta_{b|a}\) \\
\(a\)     &    \(\n(b)\) & \(\theta_{\n(b)|a}\) \\
\(\n(a)\) &     \(b\) & \(\theta_{b|\n(a)}\) \\
\(\n(a)\) & \(\n(b)\) & \(\theta_{\n(b)|\n(a)}\) \\
\end{tabular}
\quad
\begin{tabular}{cc|c}
\(A\)     &     \(C\)   & \(\Theta_{C|A}\) \\\hline
\(a\)     &     \(c\)     & \(n_3 = \prodnode(\lambda_c,\theta_{c|a})\) \\
\(a\)     &    \(\n(c)\) & \(n_4 = \prodnode(\lambda_{\n(c)},\theta_{\n(c)|a})\) \\
\(\n(a)\) &     \(c\)    & \(n_5 = \prodnode(\lambda_c,\theta_{c|\n(a)})\) \\
\(\n(a)\) & \(\n(c)\)   & \(n_6 = \prodnode(\lambda_{\n(c)},\theta_{\n(c)|\n(a)})\) \\
\end{tabular}
\end{center}
Our root factor is \(\Theta_{B|A}\). We will eliminate factor \(\Theta_{C|A}\) first. 
Summing out variable \(C\) from the factor gives
\begin{center}
\small
\begin{tabular}{c|c}
\(A\) & \(\sum_C \Theta_{C|A}\) \\\hline
\(a\)      & \(n_{7}  = \sumnode(n_3,n_4)\) \\
\(\n(a)\) & \(n_{8} = \sumnode(n_5,n_6)\) \\
\end{tabular}
\end{center}
Multiplying the result by factor \(\Theta_A\), we get
\begin{center}
\small
\begin{tabular}{c|c}
\(A\) & \(\Theta_A \sum_C \Theta_{C|A}\) \\\hline
\(a\)      & \(n_{9}  = \prodnode(n_1,n_7)\) \\
\(\n(a)\) & \(n_{10} = \prodnode(n_2,n_8)\) \\
\end{tabular}
\end{center}
Our set of factors are now
\begin{center}
\small
\begin{tabular}{c|c}
\(A\) & \(\Theta_A \sum_C \Theta_{C|A}\) \\\hline
\(a\)      & \(n_{9}  = \prodnode(n_1,n_7)\) \\
\(\n(a)\) & \(n_{10} = \prodnode(n_2,n_8)\) \\
\end{tabular}
\quad
\begin{tabular}{cc|c}
\(A\)     &     \(B\) &  \(\Theta_{B|A}\) \\\hline
\(a\)     &     \(b\)  & \(\theta_{b|a}\) \\
\(a\)     &    \(\n(b)\) & \(\theta_{\n(b)|a}\) \\
\(\n(a)\) &     \(b\) & \(\theta_{b|\n(a)}\) \\
\(\n(a)\) & \(\n(b)\) & \(\theta_{\n(b)|\n(a)}\) \\
\end{tabular}
\end{center}
Eliminating the recently constructed factor, we now have
\begin{center}
\small
\begin{tabular}{cc|c}
\(A\)     &     \(B\) &  \(\Theta_{B|A}\Theta_A \sum_C \Theta_{C|A}\) \\\hline
\(a\)     &     \(b\)      & \(n_{11} = \prodnode(n_9,\theta_{b|a})\) \\
\(a\)     &    \(\n(b)\) & \(n_{12} = \prodnode(n_9,\theta_{\n(b)|a})\) \\
\(\n(a)\) &     \(b\)    & \(n_{13} = \prodnode(n_{10},\theta_{b|\n(a)})\) \\
\(\n(a)\) & \(\n(b)\)  & \(n_{14} = \prodnode(n_{10},\theta_{\n(b)|\n(a)})\) \\
\end{tabular}
\end{center}
Summing out variable \(A\) gives
\begin{center}
\small
\begin{tabular}{c|c}
\(B\) & \(\sum_A \Theta_{B|A}\Theta_A \sum_C \Theta_{C|A}\) \\\hline
\(b\)     & \(n_{15} = \sumnode(n_{11},n_{13})\) \\
\(\n(b)\) & \(n_{16} = \sumnode(n_{12},n_{14})\) \\
\end{tabular}
\end{center}
We have now compiled a circuit whose outputs, \(n_{15}\) and \(n_{16}\), give the marginal for variable \(B\); see Figure~\ref{fig:ac}.  
The inputs to this circuits are the vectors \((\lambda_a,\lambda_{\n(a)})\) and \((\lambda_c,\lambda_{\n(c)})\) representing soft evidence, 
in addition to network parameters.  Dividing nodes \(n_{15}\) and \(n_{16}\) by their sum yields the posterior \((\Pstar(b),\Pstar(\n(b)))\).

\begin{figure}[t]
  \centering
 \includegraphics[width=0.35\linewidth]{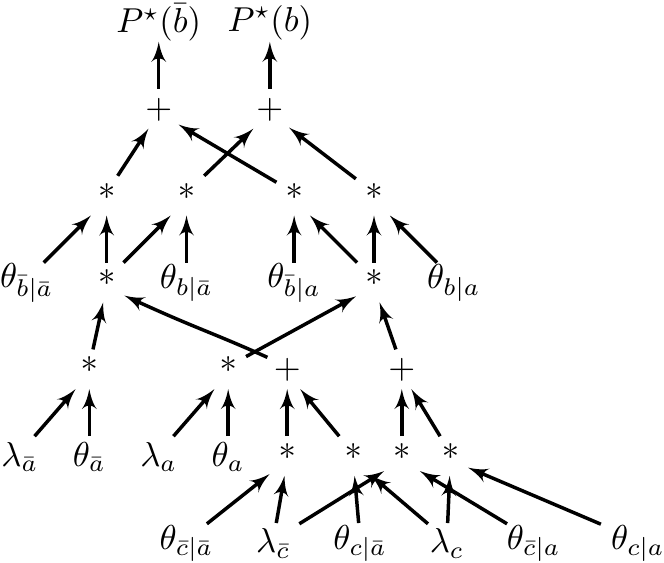}
 \caption{An AC compiled from a BN query.}
 \label{fig:ac}
\end{figure}

One can prune the Bayesian network before compiling an AC. In particular, one can successively remove leaf variables as long as they are not query or evidence variables. 

\subsection{Compiling TACs via Symbolic Factor Elimination}

Beyond sum and product nodes, TACs also use \emph{testing nodes} \(\testnode(p,t,n^+,n^-)\), which pass through the value of \(n^+\) if \(p \geq t\) and the value of \(n^-\) otherwise (like a multiplexer).

Compiling TACs is similar to compiling ACs but requires two phases. In the first phase, we {\em select} CPTs for all testing nodes that are relevant to the query. In the second phase, we perform classical {\em inference} as in the previous example. The selection phase converts testing CPTs into regular CPTs using an operation called {\em flattening.} 
The flattening of a CPT for variable \(X\) involves only its ancestral CPTs. In particular, before the CPT for variable \(X\) is flattened, all its ancestral CPTs must be flattened and eliminated. 

Consider our earlier example over binary variables \(A, B, C\) with edges \(A \to B\) and \(A \to C\), and suppose now that variable \(B\) is testing. 
We have soft evidence on variables \(A\) and \(C\) and want to compile a TAC that computes the marginal \((\pstar(b),\pstar(\n(b)))\). 

\vspace{2mm}
\noindent{\bf Selection Phase}
As we want to flatten the CPT for variable \(B\), the CPT for variable \(C\) is irrelevant, so we start with the following CPTs.
\begin{center}
\small
\begin{tabular}{c|c}
\(A\) & \(\Theta_A\) \\\hline
\(a\)     & \(n_1  = \prodnode(\lambda_a,\theta_a)\) \\
\(\n(a)\) & \(n_2 = \prodnode(\lambda_{\n(a)},\theta_{\n(a)})\) \\
\end{tabular}
\quad
\begin{tabular}{cc|c|ccc}
\(A\)     &     \(B\) & \multicolumn{4}{c}{\(\Theta^?_{B|A}\)}   \\\hline
\(a\)     &     \(b\)  & \(\)  & \(T_{B|a}\) & \(\theta^+_{b|a}\) & \(\theta^-_{b|a}\) \\
\(a\)     & \(\n(b)\) & \(\)  & \(T_{B|a}\) &  \(\theta^+_{\n(b)|a}\) &  \(\theta^-_{\n(b)|a}\) \\
\(\n(a)\) &     \(b\) & \(\)  & \(T_{B|\n(a)}\) & \(\theta^+_{b|\n(a)}\) & \(\theta^-_{b|\n(a)}\) \\
\(\n(a)\) & \(\n(b)\) & \(\) & \(T_{B|\n(a)}\) & \(\theta^+_{\n(b)|\n(a)}\) & \(\theta^-_{\n(b)|\n(a)}\) \\
\end{tabular}
\end{center}
The CPT for variable \(A\) is regular so it is notated as usual. The CPT for variable \(B\) is testing so it is notated differently.
The second column is empty initially but will contain TAC nodes when factors are multiplied into the CPT. The third
column contains thresholds and dynamic parameters and is kept until the CPT is flattened (i.e., converted into a regular CPT).

We first eliminate the CPT for variable \(A\). No variables are summed out in this case, so we just multiply
this CPT by the testing CPT for variable \(B\), leading to
\begin{center}
\small
\begin{tabular}{cc|c|ccc}
\(A\)     &     \(B\) & \multicolumn{4}{c}{\(\Theta_A \Theta^?_{B|A}\)}   \\\hline
\(a\)     &     \(b\)  & \(n_1\)  & \(T_{B|a}\) & \(\theta^+_{b|a}\) & \(\theta^-_{b|a}\) \\
\(a\)     & \(\n(b)\) & \(n_1\)  & \(T_{B|a}\) &  \(\theta^+_{\n(b)|a}\) &  \(\theta^-_{\n(b)|a}\) \\
\(\n(a)\) &     \(b\) & \(n_2\)  & \(T_{B|\n(a)}\) & \(\theta^+_{b|\n(a)}\) & \(\theta^-_{b|\n(a)}\) \\
\(\n(a)\) & \(\n(b)\) & \(n_2\) & \(T_{B|\n(a)}\) & \(\theta^+_{\n(b)|\n(a)}\) & \(\theta^-_{\n(b)|\n(a)}\) \\
\end{tabular}
\end{center}
If we define a new TAC node \(n_3 = \sumnode(n_1,n_2)\), then \(n_1/n_3\) and \(n_2/n_3\) represent the posterior on variable \(A\),
which is what we need to flatten the CPT for variable \(B\) as follows\footnote{More generally: after flattening and eliminating all ancestral CPTs of a variable \(X\),
the CPT for \(X\) will contain the marginal on its parents given evidence on its ancestors. This marginal is all we needed to
select a CPT for variable \(X\), whether we are using threshold-based or sigmoid-based selection.}
\begin{center}
\small
\begin{tabular}{cc|c}
\(A\)     &     \(B\)  & \(\Theta_{B|A}\) \\\hline
\(a\)     &     \(b\)   & \(n_4 = \testnode(n_1,{\prodnode(T_{B|a},n_3)},\theta^+_{b|a},\theta^-_{b|a})\) \\
\(a\)     & \(\n(b)\)  & \(n_5 = \testnode(n_1,{\prodnode(T_{B|a},n_3)},\theta^+_{\n(b)|a},\theta^-_{\n(b)|a})\) \\
\(\n(a)\) &     \(b\)  & \(n_6 = \testnode(n_2,{\prodnode(T_{B|\n(a)},n_3)},\theta^+_{b|\n(a)},\theta^-_{b|\n(a)})\) \\
\(\n(a)\) & \(\n(b)\) & \(n_7 = \testnode(n_2,{\prodnode(T_{B|\n(a)},n_3)},\theta^+_{\n(b)|\n(a)},\theta^-_{\n(b)|\n(a)})\) \\
\end{tabular}
\end{center}
This is a regular CPT since each entry is a circuit node. This finishes the selection phase, leading to the partial TAC in Figure~\ref{fig:tac-a}. We are now ready for the inference phase.

\begin{figure*}[t]
 \centering
  \subfigure[Selection]{\label{fig:tac-a}
    \includegraphics[width=0.42\linewidth]{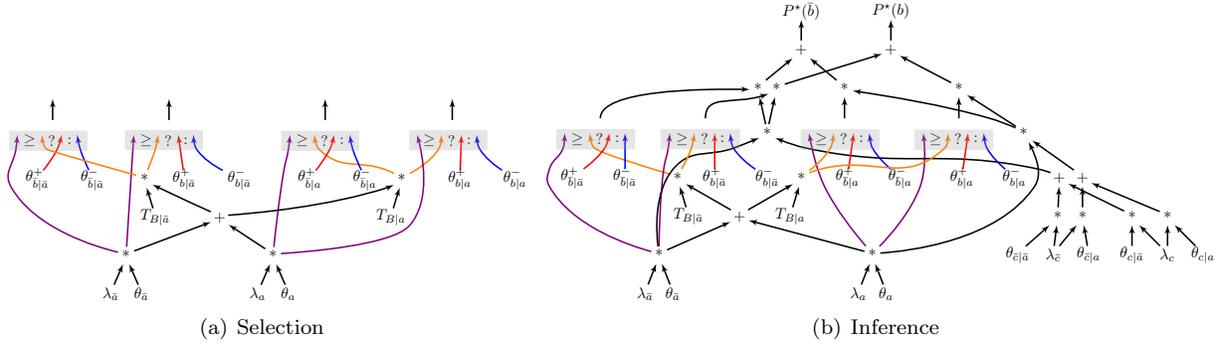}}
  \subfigure[Inference]{\label{fig:tac-b}
    \includegraphics[width=0.54\linewidth]{figs-aaai/tac2-crop-box-p}}
 \caption{A TAC compiled from a TBN query. 
 \label{fig:tacs}}
\end{figure*}

\vspace{2mm}
\noindent{\bf Inference Phase}
Our CPTs are now all regular
\begin{center}
\small
\begin{tabular}{c|c}
\(A\) & \(\Theta_A\) \\\hline
\(a\)     & \(n_1\) \\
\(\n(a)\) & \(n_2\) \\
\end{tabular}
\qquad
\begin{tabular}{cc|c}
\(A\)     &     \(B\)  & \(\Theta_{B|A}\) \\\hline
\(a\)     &     \(b\)   & \(n_4\) \\
\(a\)     & \(\n(b)\)  & \(n_5\) \\
\(\n(a)\) &     \(b\)  & \(n_6\) \\
\(\n(a)\) & \(\n(b)\) & \(n_7\) \\
\end{tabular}
\qquad
\begin{tabular}{cc|c}
\(A\)     &     \(C\)   & \(\Theta_{C|A}\) \\\hline
\(a\)     &     \(c\)     & \(n_8\) \\
\(a\)     &    \(\n(c)\) & \(n_9\) \\
\(\n(a)\) &     \(c\)    & \(n_{10}\) \\
\(\n(a)\) & \(\n(c)\)   & \(n_{11}\) \\
\end{tabular}
\end{center}
Inference can proceed as in the previous section: eliminate factor \(\Theta_{C|A}\) then factor \(\Theta_A\). 
To eliminate factor \(\Theta_{C|A}\), we sum-out variable \(C\) from the factor leading to
\begin{center}
\small
\begin{tabular}{c|c}
\(A\) & \(\sum_C \Theta_{C|A}\) \\\hline
\(a\)      & \(n_{12}  = \sumnode(n_8,n_9)\) \\
\(\n(a)\) & \(n_{13} = \sumnode(n_{10},n_{11})\) \\
\end{tabular}
\end{center}
We then multiply the above factor by factor \(\Theta_A\). Our new set of factors is
\begin{center}
\small
\begin{tabular}{c|c}
\(A\) & \(\Theta_A \sum_C \Theta_{C|A}\) \\\hline
\(a\)      & \(n_{14}  = \prodnode(n_{12},n_1)\) \\
\(\n(a)\) & \(n_{15} = \prodnode(n_{13},n_2)\) \\
\end{tabular}
\qquad
\begin{tabular}{cc|c}
\(A\)     &     \(B\)  & \(\Theta_{B|A}\) \\\hline
\(a\)     &     \(b\)   & \(n_4\) \\
\(a\)     & \(\n(b)\)  & \(n_5\) \\
\(\n(a)\) &     \(b\)  & \(n_6\) \\
\(\n(a)\) & \(\n(b)\) & \(n_7\) \\
\end{tabular}
\end{center}
To eliminate the last constructed factor, we just multiply it by the factor for variable \(B\), leading to
\begin{center}
\small
\begin{tabular}{cc|c}
\(A\)     &     \(B\)  & \(\Theta_{B|A}\Theta_A \sum_C \Theta_{C|A}\) \\\hline
\(a\)     &     \(b\)   & \(n_{16} = \prodnode(n_{14},n_4)\) \\
\(a\)     & \(\n(b)\)  & \(n_{17} = \prodnode(n_{14},n_5)\) \\
\(\n(a)\) &     \(b\)  & \(n_{18} = \prodnode(n_{15},n_6)\) \\
\(\n(a)\) & \(\n(b)\) & \(n_{19} = \prodnode(n_{15},n_7)\) \\
\end{tabular}
\end{center}
Summing-out variable \(A\), we now have the final factor
\begin{center}
\small
\begin{tabular}{c|c}
\(B\) & \(\sum_A \Theta_{B|A}\Theta_A \sum_C \Theta_{C|A}\) \\\hline
\(b\)       & \(n_{20} = \sumnode(n_{16},n_{18})\) \\
\(\n(b)\) & \(n_{21}  = \sumnode(n_{17},n_{19})\) \\
\end{tabular}
\end{center}

We now have a TAC whose two outputs, \(n_{20}\) and \(n_{21}\), compute the marginal \((\pstar(b),\pstar(\n(b)))\); see Figure~\ref{fig:tac-b}. 
Normalizing the circuit outputs gives the posterior \((\Pstar(b),\Pstar(\n(b)))\). 

\subsection{The Compilation Algorithm}
We now provide a more formal description of the compilation algorithm. 

\vspace{2mm}
\noindent{\bf Testing CPTs} 
A testing CPT for node \(X\) with parents \(\U\) will initially have rows of the following form:
\begin{center}
\begin{tabular}{|c|c|ccc|} \hline
\(\u\: x\) &  & \(T_{X|\u}\) & \(\theta^+_{x|\u}\) & \(\theta^-_{x|\u}\) \\ \hline
\end{tabular}
\end{center}
When factors are multiplied into this testing CPT, its rows will have the following form
\begin{center}
\begin{tabular}{|c|c|ccc|} \hline
\(\u\: x\) & \(n\) & \(T_{X|\u}\) & \(\theta^+_{x|\u}\) & \(\theta^-_{x|\u}\) \\ \hline
\end{tabular}
\end{center}
where \(n\) is a TAC node. When the CPT is flattened (see below), its rows will have the following form
\begin{center}
\begin{tabular}{|c|c|} \hline
\(\u\: x\) & \(\testnode(n,{\prodnode(m,T_{X|\u})},\theta^+_{x|\u},\theta^-_{x|\u})\) \\ \hline
\end{tabular}
\end{center}
where \(n\) and \(m\) are TAC nodes. It is now a regular CPT since the entry in each row is a testing TAC node. 

\vspace{2mm}
\noindent{\bf Pseudocode}
To compile a TBN query into a TAC (\(Q\) is the query node):
\begin{enumerate}
\item {\em Pruning.} Repeatedly remove a leaf TBN node if it is not an evidence or query node. \label{alg:prune}
\item {\em Initialization.} Replace each TBN parameter and threshold by a corresponding TAC node.
\item  {\em Entering Evidence.} If a node \(X\) has soft evidence and \(k\) values, construct TAC nodes for \(\lambda_1, \ldots, \lambda_k\)
and multiply each row in the CPT of \(X\) by \(\lambda_i\) if the row corresponds to value \(x_i\). 
\item {\em Selection Phase.} Visit testing nodes \(X\), ancestors before descendants. 
\begin{enumerate}
\item Let \(\Sigma\) be the set of CPTs for \(X\) and its ancestors (ancestral CPTs must be regular at this point).
\item Eliminate all factors from \(\Sigma\) except the CPT for node \(X\). \label{alg:e1}
\item Flatten the CPT for node \(X\) with parents \(\U\) as follows:
\begin{enumerate}
\item Create a TAC node \(n=\sum_\u n_\u\), where \(n_\u\) is the TAC node in some CPT row for state \(\u\).\footnote{CPT rows for the same instantiation \(\u\)
of parents \(\U\) must have the same TAC node (represents the marginal on state \(\u\)).}
\item Replace each CPT row
\begin{center}
\begin{tabular}{|c|c|ccc|} \hline
\(\u\: x\) & \(n_\u\) & \(T_{X|\u}\) & \(\theta^+_{x|\u}\) & \(\theta^-_{x|\u}\) \\ \hline
\end{tabular}
\end{center}
by
\begin{center}
\begin{tabular}{|c|c|} \hline
\(\u\: x\) & \(\testnode(n_\u,{\prodnode(n,T_{X|\u})},\theta^+_{x|\u},\theta^-_{x|\u})\) \\ \hline
\end{tabular}
\end{center}
\end{enumerate}
\end{enumerate}
(the CPT for node \(X\) is now regular).
\item {\em Inference Phase.}
\begin{enumerate}
\item Let \(\Sigma\) be the set of all TBN CPTs (whether initially regular or flattened).
\item Let \(f\) be a factor in \(\Sigma\) that includes query node \(Q\).
\item Eliminate all factors in \(\Sigma\) except for factor \(f\). \label{alg:e2}
\item Return \(\sum_\Z f\), where \(\Z\) are the variables of factor \(f\) except \(Q\).
\end{enumerate}
\end{enumerate}
Entries of the returned factor will contain TAC outputs, which compute the marginal \(\pstar(Q)\).
Normalizing these outputs gives the posterior \(\Pstar(Q)\).

\vspace{2mm}
\noindent{\bf The Order of Eliminating Factors}
The factor elimination algorithm is an abstraction of the jointree algorithm~\citep[Chapter 7]{Darwiche09}. In particular, a message sent by the jointree algorithm from cluster~\(i\) to cluster~\(j\) can be interpreted as eliminating the
factor at cluster~\(i\), leading to its multiplication by the factor at cluster~\(j\). Hence, the order in which factors are eliminated can be set using a jointree, which also determines the complexity of factor elimination. 
To eliminate a set of factors \(\Sigma\) except for some factor \(f \in \Sigma\), we first assign each factor \(g \in \Sigma\) to some jointree cluster that contains the variables of factor \(g\).
If multiple factors are assigned to the same cluster, we view this as one factor corresponding to their product. We next designate the cluster \(r\) containing factor \(f\) as the root cluster. 
We finally eliminate factors as follows. We only eliminate a factor if it is attached to a leaf cluster \(i \neq r\), multiplying it into its neighboring factor (cluster \(i\) is also removed). 
The process terminates when we are left with the root cluster \(r\).
Using a binary jointree with \(n\) clusters and width \(w\), the time and space complexity of factor elimination is \(O(n \exp(w))\).\footnote{A jointree
is {\em binary} if each cluster has at most three neighbors. The {\em width} of a jointree is the size of its largest cluster \(-1\). See~\citep[Chapter 7]{Darwiche09} for details.}

\vspace{2mm}
\noindent{\bf Complexity}
The elimination of factors in Steps~\ref{alg:e1} and~\ref{alg:e2} can be implemented using the same jointree, constructed for the TBN produced by Step~\ref{alg:prune}.
Given a binary jointree with \(n\) clusters and width \(w\), the time and space complexity of compiling a TAC (the selection and inference phases) is then \(O(m \cdot n \exp(w))\), 
where \(m\) is the number of testing nodes. This can be improved if one uses a different jointree for each iteration of the selection phase since each of these iterations involves a subset of the TBN that may be less connected.

\vspace{2mm}
\noindent{\bf Unique Nodes}
The described algorithm may construct equivalent TAC nodes. To avoid this, we maintain a {\em unique-node table,} which stores each constructed TAC node with a key based on its type and its children's identities. Before a new node is constructed, the table is checked if an equivalent node was constructed earlier.  If the test succeeds, the node is used. Otherwise, a new node is constructed and added to the table.

\vspace{2mm}
\noindent{\bf Sigmoid Selection}
The algorithm can be slightly adjusted to compile TBNs that employ other CPT selection mechanisms, as long as the selection process requires only the posterior on parents (sigmoid selection satisfies
this property). We just adjust the operation of CPT flattening to construct and add the desired selection nodes instead of testing nodes.

\bibliographystyle{model5-names}\biboptions{authoryear}

\begin{thebibliography}{36}
\expandafter\ifx\csname natexlab\endcsname\relax\def\natexlab#1{#1}\fi
\providecommand{\url}[1]{\texttt{#1}}
\providecommand{\href}[2]{#2}
\providecommand{\path}[1]{#1}
\providecommand{\DOIprefix}{doi:}
\providecommand{\ArXivprefix}{arXiv:}
\providecommand{\URLprefix}{URL: }
\providecommand{\Pubmedprefix}{pmid:}
\providecommand{\doi}[1]{\href{http://dx.doi.org/#1}{\path{#1}}}
\providecommand{\Pubmed}[1]{\href{pmid:#1}{\path{#1}}}
\providecommand{\bibinfo}[2]{#2}
\ifx\xfnm\relax \def\xfnm[#1]{\unskip,\space#1}\fi
\bibitem[{Bengio et~al.(2006)Bengio, Lamblin, Popovici \&
  Larochelle}]{BengioLPL06}
\bibinfo{author}{Bengio, Y.}, \bibinfo{author}{Lamblin, P.},
  \bibinfo{author}{Popovici, D.}, \& \bibinfo{author}{Larochelle, H.}
  (\bibinfo{year}{2006}).
\newblock \bibinfo{title}{Greedy layer-wise training of deep networks}.
\newblock In {\it \bibinfo{booktitle}{Advances in Neural Information Processing
  Systems 19 (NIPS)}\/} (pp. \bibinfo{pages}{153--160}).
\bibitem[{Castillo et~al.(1996)Castillo, Guti{\'{e}}rrez \&
  Hadi}]{CastilloGH96}
\bibinfo{author}{Castillo, E.~F.}, \bibinfo{author}{Guti{\'{e}}rrez, J.~M.}, \&
  \bibinfo{author}{Hadi, A.~S.} (\bibinfo{year}{1996}).
\newblock \bibinfo{title}{Goal oriented symbolic propagation in bayesian
  networks}.
\newblock In {\it \bibinfo{booktitle}{Proceedings of the Thirteenth National
  Conference on Artificial Intelligence (AAAI)}\/} (pp.
  \bibinfo{pages}{1263--1268}).
\bibitem[{Chan \& Darwiche(2005)}]{chanAI05}
\bibinfo{author}{Chan, H.}, \& \bibinfo{author}{Darwiche, A.}
  (\bibinfo{year}{2005}).
\newblock \bibinfo{title}{On the revision of probabilistic beliefs using
  uncertain evidence}.
\newblock {\it \bibinfo{journal}{Artificial Intelligence}\/},  {\it
  \bibinfo{volume}{163}\/}, \bibinfo{pages}{67--90}.
\bibitem[{Chavira \& Darwiche(2007)}]{Chavira.Darwiche.Ijcai.2007}
\bibinfo{author}{Chavira, M.}, \& \bibinfo{author}{Darwiche, A.}
  (\bibinfo{year}{2007}).
\newblock \bibinfo{title}{Compiling {B}ayesian networks using variable
  elimination}.
\newblock In {\it \bibinfo{booktitle}{Proceedings of the 20th International
  Joint Conference on Artificial Intelligence (IJCAI)}\/} (pp.
  \bibinfo{pages}{2443--2449}).
\bibitem[{Choi \& Darwiche(2017)}]{ChoiDarwiche17}
\bibinfo{author}{Choi, A.}, \& \bibinfo{author}{Darwiche, A.}
  (\bibinfo{year}{2017}).
\newblock \bibinfo{title}{On relaxing determinism in arithmetic circuits}.
\newblock In {\it \bibinfo{booktitle}{Proceedings of the Thirty-Fourth
  International Conference on Machine Learning (ICML)}\/} (pp.
  \bibinfo{pages}{825--833}).
\bibitem[{Choi \& Darwiche(2018)}]{ChoiDarwiche18}
\bibinfo{author}{Choi, A.}, \& \bibinfo{author}{Darwiche, A.}
  (\bibinfo{year}{2018}).
\newblock \bibinfo{title}{On the relative expressiveness of {B}ayesian and
  neural networks}.
\newblock In {\it \bibinfo{booktitle}{Proceedings of the 9th International
  Conference on Probabilistic Graphical Models (PGM)}\/}.
\bibitem[{Cybenko(1989)}]{Cybenko89}
\bibinfo{author}{Cybenko, G.} (\bibinfo{year}{1989}).
\newblock \bibinfo{title}{Approximation by superpositions of a sigmoidal
  function}.
\newblock {\it \bibinfo{journal}{Mathematics of Control, Signals, and
  Systems}\/},  {\it \bibinfo{volume}{2}\/}, \bibinfo{pages}{303--314}.
\bibitem[{Darwiche(2000)}]{Darwiche00}
\bibinfo{author}{Darwiche, A.} (\bibinfo{year}{2000}).
\newblock \bibinfo{title}{A differential approach to inference in bayesian
  networks}.
\newblock In {\it \bibinfo{booktitle}{Proceedings of the 16th Conference in
  Uncertainty in Artificial Intelligence (UAI)}\/} (pp.
  \bibinfo{pages}{123--132}).
\bibitem[{Darwiche(2003)}]{Darwiche03}
\bibinfo{author}{Darwiche, A.} (\bibinfo{year}{2003}).
\newblock \bibinfo{title}{A differential approach to inference in {B}ayesian
  networks}.
\newblock {\it \bibinfo{journal}{Journal of the ACM (JACM)}\/},  {\it
  \bibinfo{volume}{50}\/}, \bibinfo{pages}{280--305}.
\bibitem[{Darwiche(2009)}]{Darwiche09}
\bibinfo{author}{Darwiche, A.} (\bibinfo{year}{2009}).
\newblock {\it \bibinfo{title}{Modeling and Reasoning with {B}ayesian
  Networks}\/}.
\newblock \bibinfo{publisher}{Cambridge University Press}.
\bibitem[{Darwiche(2018)}]{darwicheCACM18}
\bibinfo{author}{Darwiche, A.} (\bibinfo{year}{2018}).
\newblock \bibinfo{title}{Human-level intelligence or animal-like abilities?}
\newblock {\it \bibinfo{journal}{Communications of the ACM (CACM)}\/},  {\it
  \bibinfo{volume}{61}\/}, \bibinfo{pages}{56--67}.
\bibitem[{Dechter(1996)}]{dechterUAI96}
\bibinfo{author}{Dechter, R.} (\bibinfo{year}{1996}).
\newblock \bibinfo{title}{Bucket elimination: A unifying framework for
  probabilistic inference}.
\newblock In {\it \bibinfo{booktitle}{Proceedings of the Twelfth Annual
  Conference on Uncertainty in Artificial Intelligence (UAI)}\/} (pp.
  \bibinfo{pages}{211--219}).
\bibitem[{Goodfellow et~al.(2016)Goodfellow, Bengio \&
  Courville}]{Goodfellow-et-al-2016}
\bibinfo{author}{Goodfellow, I.}, \bibinfo{author}{Bengio, Y.}, \&
  \bibinfo{author}{Courville, A.} (\bibinfo{year}{2016}).
\newblock {\it \bibinfo{title}{Deep Learning}\/}.
\newblock \bibinfo{publisher}{MIT Press}.
\bibitem[{Halpern(1990)}]{halpern1990analysis}
\bibinfo{author}{Halpern, J.} (\bibinfo{year}{1990}).
\newblock \bibinfo{title}{An analysis of first-order logics of probability}.
\newblock {\it \bibinfo{journal}{Artificial Intelligence}\/},  {\it
  \bibinfo{volume}{46}\/}, \bibinfo{pages}{311--350}.
\bibitem[{Hinton et~al.(2006)Hinton, Osindero \& Teh}]{HintonOT06}
\bibinfo{author}{Hinton, G.~E.}, \bibinfo{author}{Osindero, S.}, \&
  \bibinfo{author}{Teh, Y.~W.} (\bibinfo{year}{2006}).
\newblock \bibinfo{title}{A fast learning algorithm for deep belief nets}.
\newblock {\it \bibinfo{journal}{Neural Computation}\/},  {\it
  \bibinfo{volume}{18}\/}, \bibinfo{pages}{1527--1554}.
\bibitem[{Hornik et~al.(1989)Hornik, Stinchcombe \& White}]{HornikSW89}
\bibinfo{author}{Hornik, K.}, \bibinfo{author}{Stinchcombe, M.~B.}, \&
  \bibinfo{author}{White, H.} (\bibinfo{year}{1989}).
\newblock \bibinfo{title}{Multilayer feedforward networks are universal
  approximators}.
\newblock {\it \bibinfo{journal}{Neural Networks}\/},  {\it
  \bibinfo{volume}{2}\/}, \bibinfo{pages}{359--366}.
\bibitem[{Jensen(1999)}]{Jensen99}
\bibinfo{author}{Jensen, F.~V.} (\bibinfo{year}{1999}).
\newblock \bibinfo{title}{Gradient descent training of {B}ayesian networks}.
\newblock In {\it \bibinfo{booktitle}{Proceedings of the European Conference on
  Symbolic and Quantitative Approaches to Reasoning and Uncertainty
  (ECSQARU)}\/} (pp. \bibinfo{pages}{190--200}).
\bibitem[{Jones(1990)}]{Jones90}
\bibinfo{author}{Jones, L.~K.} (\bibinfo{year}{1990}).
\newblock \bibinfo{title}{Constructive approximations for neural networks by
  sigmoidal functions}.
\newblock {\it \bibinfo{journal}{Proceedings of the IEEE}\/},  {\it
  \bibinfo{volume}{78}\/}, \bibinfo{pages}{1586--1589}.
\bibitem[{Kisa et~al.(2014)Kisa, {Van den Broeck}, Choi \&
  Darwiche}]{KisaVCD14}
\bibinfo{author}{Kisa, D.}, \bibinfo{author}{{Van den Broeck}, G.},
  \bibinfo{author}{Choi, A.}, \& \bibinfo{author}{Darwiche, A.}
  (\bibinfo{year}{2014}).
\newblock \bibinfo{title}{Probabilistic sentential decision diagrams}.
\newblock In {\it \bibinfo{booktitle}{Proceedings of the 14th International
  Conference on Principles of Knowledge Representation and Reasoning (KR)}\/}.
\bibitem[{Kj{\ae}rulff \& van~der Gaag(2000)}]{KjaerulffG00}
\bibinfo{author}{Kj{\ae}rulff, U.}, \& \bibinfo{author}{van~der Gaag, L.~C.}
  (\bibinfo{year}{2000}).
\newblock \bibinfo{title}{Making sensitivity analysis computationally
  efficient}.
\newblock In {\it \bibinfo{booktitle}{Proceedings of the 16th Conference in
  Uncertainty in Artificial Intelligence (UAI)}\/} (pp.
  \bibinfo{pages}{317--325}).
\bibitem[{Lapedes \& Farber(1987)}]{LapedesF87}
\bibinfo{author}{Lapedes, A.~S.}, \& \bibinfo{author}{Farber, R.~M.}
  (\bibinfo{year}{1987}).
\newblock \bibinfo{title}{How neural nets work}.
\newblock In {\it \bibinfo{booktitle}{Neural Information Processing Systems
  (NIPS)}\/} (pp. \bibinfo{pages}{442--456}).
\bibitem[{Leshno et~al.(1993)Leshno, Lin, Pinkus \& Schocken}]{LeshnoLPS93}
\bibinfo{author}{Leshno, M.}, \bibinfo{author}{Lin, V.~Y.},
  \bibinfo{author}{Pinkus, A.}, \& \bibinfo{author}{Schocken, S.}
  (\bibinfo{year}{1993}).
\newblock \bibinfo{title}{Multilayer feedforward networks with a nonpolynomial
  activation function can approximate any function}.
\newblock {\it \bibinfo{journal}{Neural Networks}\/},  {\it
  \bibinfo{volume}{6}\/}, \bibinfo{pages}{861--867}.
\bibitem[{McCarthy(1959)}]{McCarthy59}
\bibinfo{author}{McCarthy, J.} (\bibinfo{year}{1959}).
\newblock \bibinfo{title}{Programs with common sense}.
\newblock In {\it \bibinfo{booktitle}{Proceedings of the Teddington Conference
  on the Mechanization of Thought Processes}\/}.
\bibitem[{McCulloch \& Pitts(1943)}]{mcculloch1943logical}
\bibinfo{author}{McCulloch, W.~S.}, \& \bibinfo{author}{Pitts, W.}
  (\bibinfo{year}{1943}).
\newblock \bibinfo{title}{A logical calculus of the ideas immanent in nervous
  activity}.
\newblock {\it \bibinfo{journal}{The Bulletin of Mathematical Biophysics}\/},
  {\it \bibinfo{volume}{5}\/}, \bibinfo{pages}{115--133}.
\bibitem[{Mont{\'{u}}far et~al.(2014)Mont{\'{u}}far, Pascanu, Cho \&
  Bengio}]{MontufarPCB14}
\bibinfo{author}{Mont{\'{u}}far, G.~F.}, \bibinfo{author}{Pascanu, R.},
  \bibinfo{author}{Cho, K.}, \& \bibinfo{author}{Bengio, Y.}
  (\bibinfo{year}{2014}).
\newblock \bibinfo{title}{On the number of linear regions of deep neural
  networks}.
\newblock In {\it \bibinfo{booktitle}{Advances in Neural Information Processing
  Systems 27 (NIPS)}\/} (pp. \bibinfo{pages}{2924--2932}).
\bibitem[{Nilsson(1986)}]{nilsson1986probabilistic}
\bibinfo{author}{Nilsson, N.} (\bibinfo{year}{1986}).
\newblock \bibinfo{title}{Probabilistic logic}.
\newblock {\it \bibinfo{journal}{Artificial intelligence}\/},  {\it
  \bibinfo{volume}{28}\/}, \bibinfo{pages}{71--87}.
\bibitem[{Pascanu et~al.(2014)Pascanu, Mont{\'u}far \& Bengio}]{Pascanu2014}
\bibinfo{author}{Pascanu, R.}, \bibinfo{author}{Mont{\'u}far, G.}, \&
  \bibinfo{author}{Bengio, Y.} (\bibinfo{year}{2014}).
\newblock \bibinfo{title}{On the number of inference regions of deep feed
  forward networks with piece-wise linear activations}.
\bibitem[{Pearl(1988)}]{Pearl88b}
\bibinfo{author}{Pearl, J.} (\bibinfo{year}{1988}).
\newblock {\it \bibinfo{title}{Probabilistic Reasoning in Intelligent Systems:
  Networks of Plausible Inference}\/}.
\newblock \bibinfo{publisher}{MK}.
\bibitem[{Poole(2003)}]{Poole2003}
\bibinfo{author}{Poole, D.} (\bibinfo{year}{2003}).
\newblock \bibinfo{title}{First-order probabilistic inference.}
\newblock In {\it \bibinfo{booktitle}{IJCAI}\/} (pp.
  \bibinfo{pages}{985--991}).
\bibitem[{Raghu et~al.(2017)Raghu, Poole, Kleinberg, Ganguli \&
  Sohl-Dickstein}]{Raghu2017}
\bibinfo{author}{Raghu, M.}, \bibinfo{author}{Poole, B.},
  \bibinfo{author}{Kleinberg, J.~M.}, \bibinfo{author}{Ganguli, S.}, \&
  \bibinfo{author}{Sohl-Dickstein, J.} (\bibinfo{year}{2017}).
\newblock \bibinfo{title}{On the expressive power of deep neural networks}.
\newblock In {\it \bibinfo{booktitle}{ICML}\/}.
\bibitem[{Ranzato et~al.(2006)Ranzato, Poultney, Chopra \&
  LeCun}]{RanzatoPCL06}
\bibinfo{author}{Ranzato, M.}, \bibinfo{author}{Poultney, C.~S.},
  \bibinfo{author}{Chopra, S.}, \& \bibinfo{author}{LeCun, Y.}
  (\bibinfo{year}{2006}).
\newblock \bibinfo{title}{Efficient learning of sparse representations with an
  energy-based model}.
\newblock In {\it \bibinfo{booktitle}{Advances in Neural Information Processing
  Systems 19 (NIPS)}\/} (pp. \bibinfo{pages}{1137--1144}).
\bibitem[{Rosenblatt(1958)}]{rosenblatt1958perceptron}
\bibinfo{author}{Rosenblatt, F.} (\bibinfo{year}{1958}).
\newblock \bibinfo{title}{The perceptron: a probabilistic model for information
  storage and organization in the brain}.
\newblock {\it \bibinfo{journal}{Psychological Review}\/},  {\it
  \bibinfo{volume}{65}\/}, \bibinfo{pages}{386--408}.
\bibitem[{Serra et~al.(2018)Serra, Tjandraatmadja \& Ramalingam}]{Serra2018}
\bibinfo{author}{Serra, T.}, \bibinfo{author}{Tjandraatmadja, C.}, \&
  \bibinfo{author}{Ramalingam, S.} (\bibinfo{year}{2018}).
\newblock \bibinfo{title}{Bounding and counting linear regions of deep neural
  networks}.
\newblock In {\it \bibinfo{booktitle}{ICML}\/}.
\bibitem[{Shen et~al.(2018)Shen, Choi \& Darwiche}]{ShenChoiDarwiche18}
\bibinfo{author}{Shen, Y.}, \bibinfo{author}{Choi, A.}, \&
  \bibinfo{author}{Darwiche, A.} (\bibinfo{year}{2018}).
\newblock \bibinfo{title}{Conditional {PSDD}s: Modeling and learning with
  modular knowledge}.
\newblock In {\it \bibinfo{booktitle}{Proceedings of the 32nd AAAI Conference
  on Artificial Intelligence (AAAI)}\/}.
\bibitem[{Vomlel(2006)}]{Vomlel06}
\bibinfo{author}{Vomlel, J.} (\bibinfo{year}{2006}).
\newblock \bibinfo{title}{Noisy-or classifier}.
\newblock {\it \bibinfo{journal}{Int. J. Intell. Syst.}\/},  {\it
  \bibinfo{volume}{21}\/}, \bibinfo{pages}{381--398}.
\bibitem[{Zhang \& Poole(1996)}]{zhangJAIR96a}
\bibinfo{author}{Zhang, N.~L.}, \& \bibinfo{author}{Poole, D.}
  (\bibinfo{year}{1996}).
\newblock \bibinfo{title}{Exploiting causal independence in bayesian network
  inference}.
\newblock {\it \bibinfo{journal}{Journal of Artificial Intelligence
  Research}\/},  {\it \bibinfo{volume}{5}\/}, \bibinfo{pages}{301--328}.

\end{thebibliography}

\end{document}